\newcommand{\mtodo}[1]{\todo[color=red!30, inline]{ #1}} 
\renewcommand*\backref[1]{\ifx#1\relax \else (Cited on #1) \fi}
\title{Data-Efficient Structured Pruning via \\Submodular Optimization}
\author{%
Marwa El Halabi\thanks{work done partially at MIT, CSAIL.} \\ 
Samsung - SAIT AI Lab, Montreal\\
  \And
  Suraj Srinivas\thanks{work done partially at Idiap Research Institute, Switzerland.} \\
  Harvard University \\
  \AND
  Simon Lacoste-Julien \\
 Mila, Universit\'e de Montreal\\ 
Samsung - SAIT AI Lab, Montreal\\
 Canada CIFAR AI Chair \\
}
\newcommand{\R}{\mathbb{R}}
\newcommand{\N}{\mathbb{N}}
\newcommand{\supp}{\mathrm{supp}}
\newcommand{\proj}{\mathrm{proj}}
\newcommand{\1}{\boldsymbol{1}}
\newcommand{\asymF}{\tilde{F}}
\newcommand{\asymG}{\tilde{G}}
\newcommand{\Greedy}{\textsc{Greedy}\xspace}
\newcommand{\OurSeq}{\textsc{SeqInChange}\xspace}
\newcommand{\OurAsym}{\textsc{AsymInChange}\xspace}
\newcommand{\OurLayer}{\textsc{LayerInChange}\xspace}
\newcommand{\Rand}{\textsc{Random}\xspace}
\newcommand{\LayerRand}{\textsc{LayerRandom}\xspace}
\newcommand{\LayerWeight}{\textsc{LayerWeightNorm}\xspace}
\newcommand{\ActGrad}{\textsc{ActGrad}\xspace}
\newcommand{\LayerActGrad}{\textsc{LayerActGrad}\xspace}
\newcommand{\LayerGreedyFS}{\textsc{LayerGreedyFS}\xspace}
\newcommand{\LayerSampling}{\textsc{LayerSampling}\xspace}
\DeclareMathOperator*{\argmax}{arg\,max}
\DeclareMathOperator*{\argmin}{arg\,min}
\theoremstyle{plain}
\newtheorem{theorem}{Theorem}[section]
\newtheorem{proposition}[theorem]{Proposition}
\newtheorem{lemma}[theorem]{Lemma}
\newtheorem{corollary}[theorem]{Corollary}
\theoremstyle{definition}
\newtheorem{definition}[theorem]{Definition}
\theoremstyle{remark}
\begin{document}

\maketitle

\begin{abstract}
\looseness=-1 Structured pruning is an effective approach for compressing large pre-trained neural networks without significantly affecting their performance. 
However, most current structured pruning methods do not provide any performance guarantees, and often require fine-tuning, which makes them inapplicable in the limited-data regime.
We propose a principled data-efficient structured pruning method based on submodular optimization. In particular, for a given layer, we select neurons/channels to prune and corresponding new weights for the next layer, that minimize the change in the next layer's input induced by pruning. We show that this selection problem is a weakly submodular maximization problem, thus it can be provably approximated using an efficient greedy algorithm. Our method is guaranteed to have an exponentially decreasing error between the original model and the pruned model outputs w.r.t the pruned size, under reasonable assumptions. 
It is also one of the few methods in the literature that uses only a limited-number of training data and no labels. Our experimental results demonstrate that our method outperforms state-of-the-art methods in the limited-data regime. 
\end{abstract}

\section{Introduction} \label{sec:Intro}

As modern neural networks (NN) grow increasingly large, with some models reaching billions of parameters \citep{McGuffie2020}, they require an increasingly large amount of memory, power, hardware, and inference time, which makes it necessary to compress them. This is especially important for models deployed on resource-constrained devices like mobile phones and smart speakers, and for latency-critical applications such as self-driving cars. 

Several approaches exist to compress NNs. 
Some methods approximate model weights using quantization and hashing \citep{gong2014compressing, courbariaux2015binaryconnect}, or
low-rank approximation and tensor factorization \citep{Denil2013Predicting,lebedev2015speeding,su2018tensorial}. In another class of methods called knowledge distillation, a small network is trained to mimic a much larger network \citep{bucila2006model,hinton2015distilling}. 
Other methods employ sparsity and group-sparsity regularisation during training, to induce sparse weights \citep{collins2014memory, voita2019analyzing}. 

In this work, we follow the network pruning approach, where the redundant units (weights, neurons or filters/channels) of a pre-trained NN are removed; see \citep{Kuzmin2019, Blalock2020, Hoefler2021} for recent surveys. We also focus on the limited-data regime, where only few training data is available and data labels are unavailable.
The advantage of pruning approaches is that, unlike weights approximation-based methods, they preserve the network structure, allowing retraining after compression, and unlike training-based approaches, they do not require training from scratch, which is costly and requires large training data.
It is also possible to combine different compression approaches to compound their benefits, see e.g., \citep[Section 4.3.4]{Kuzmin2019}.

Existing pruning methods fall into two main categories: unstructured pruning methods which prune individual weights leading to irregular sparsity patterns, and structured pruning methods which prune regular regions of weights, such as neurons, channels, or attention heads. 
Structured pruning methods are generally preferable as the resulting pruned models can work with off-the-shelf hardware or kernels, as opposed to models pruned with unstructured pruning which require specialized ones.

 \looseness=-1 The majority of existing structured pruning methods are heuristics that do not offer any theoretical guarantees. Moreover, most pruning methods are inapplicable in the limited-data regime, as they  
rely on fine-tuning with large training data for at least a few epochs to recover some of the accuracy lost with pruning.
\cite{Mariet2015} proposed a ``reweighting" procedure applicable to any pruning method, 
which optimize the remaining weights of the next layer to minimize the change in the input to the next layer.
Their empirical results on pruning single linear layers suggest that reweighting can provide a similar boost to performance as fine-tuning, without the need for data labels.  


\vspace{-5pt}
\paragraph{Our contributions} 
 \looseness=-1 We propose a principled data-efficient structured pruning method based on submodular optimization. In each layer, our method simultaneously 
selects neurons to prune and new weights for the next layer, that minimize the change in the next layer's input induced by pruning. The optimization with respect to the weights, for a \emph{fixed} selection of neurons, is the same one used for reweighting in \citep{Mariet2015}.
The resulting subset selection problem is intractable, but we show that it can be formulated as a weakly submodular maximization problem (see \cref{def:weaklySub}). We can thus use the standard greedy algorithm to obtain a $(1 - e^{-\gamma})$-approximation to the optimal solution, where $\gamma$ is non-zero if we use sufficient training data. 
We further adapt our method to prune any regular regions of weights;  we focus in particular on pruning channels in convolution layers. To prune multiple layers in the network, we apply our method to each layer independently or sequentially. 

We show that the error induced by pruning with our method on the model output decays with an $O(e^{-\gamma k})$ rate w.r.t the number $k$ of neurons/channels kept, under reasonable assumptions.
Our method uses only limited training data and no labels. Similar to \citep{Mariet2015}, we observe that reweighting provides a significant boost in performance not only to our method, but also to other baselines we consider. However unlike \citep{Mariet2015}, we only use a small fraction of the training data, around $\sim$ 1\% in our experiments. 
Our experimental results demonstrate that our method outperforms state-of-the-art pruning methods, even when reweighting is applied to them too, in the limited-data regime, and it is among the best performing methods in the standard setting.

\vspace{-5pt}
 \paragraph{Related work}  
\looseness=-1 A large variety of structured pruning approaches has been proposed in the literature, based on different selection schemes and algorithms to solve them. 
Some works prune neurons/channels individually based on some importance score \citep{He2014, Li2017, Liebenwein2020,Mussay2020,Mussay2021, Molchanov2017, Srinivas2015}.
Such methods are efficient and easy to implement, but they fail to capture higher-order interactions between the pruned parameters. Most do not provide any performance guarantee.  One exception are the sampling-based methods of \citep{Liebenwein2020,Mussay2020,Mussay2021}, who show an $O(1/k)$ error rate,
under some assumptions on the model activations.  

Closer to our approach are methods that aim to prune neurons/channels that minimize the change induced by pruning in the output of the layer being pruned, or its input to the next layer \citep{luo2017, He2017, Zhuang2018,Ye2020b}.  
These criteria yield an intractable combinatorial problem. 
Existing methods either use a heuristic greedy algorithm to solve it \citep{luo2017,Zhuang2018}, or they solve instead its $\ell_1$-relaxation 
using alternating minimization \citep{He2017}, or a greedy algorithm  with Frank-Wolfe like updates \citep{Ye2020b}.
Among these works only \citep{Ye2020b} provides theoretical guarantees, showing an $O(e^{-ck})$ error rate. Their method is more expensive than ours, and only optimize the scaling of the next layer weights instead of the weights themselves.  
A global variant of this method is proposed in 
\cite{Ye2020, Ye2020b}, which aim to prune neurons/channels that directly minimize the loss of the pruned network. 
A similar greedy algorithm with Frank-Wolfe like updates
is used to solve the $\ell_1$-relaxation of the selection problem. 
This method has an $O(1/k^2)$ error rate and is very expensive, 
as it requires a full forward pass through the network at each iteration. See \cref{app:relwork} for a more detailed comparison of our method with those of \citep{Ye2020, Ye2020b}.

\looseness=-1 \citet{Mariet2015} depart from the usual strategy of pruning parameters whose removal influences the network the least. They instead select a subset of diverse neurons to keep in each layer by sampling from a Determinantal Point Process, then they apply their reweighting procedure. Their experimental results
show that the advantage of their method is mostly due to reweighting (see Figure 4 therein).

\section{Preliminaries}
\label{sec:Prelim}

We begin by introducing our notation and some relevant background from submodular optimization.
\paragraph{Notation:} 
Given a ground set $V = \{1,2, \cdots, d\}$ and a set function $F: 2^V \to \R_+$, we denote the \emph{marginal gain} of adding 
a set $I \subseteq V$ to another set $S \subseteq V$ by $F( I \mid S) =  F( S  \cup I) - F(S)$, which quantifies the change in value when adding $I$ to $S$. The cardinality of a set $S$ is written as $|S|$.
Given a vector $x \in \R^d$, we denote its support set by $\supp(x) = \{ i \in V | x_i \not = 0\}$, and its
$\ell_2$-norm by $\| x \|_2$.  Given a matrix $X \in \R^{d' \times d}$, we denote its $i$-th column by $X_i$, and its Frobenius norm by $\| X \|_F$. Given a set $S \subseteq V$, $X_S$ is the matrix with columns $X_i$ for all $i \in S$, and $0$ otherwise, and $\1_S$ is the indicator vector of $S$, with $[\1_S]_i = 1$ for all $i \in S$, and $0$ otherwise. 

\paragraph{Weakly submodular maximization:} 

\begin{algorithm}
   \caption{\Greedy}
\label{alg:greedy}
\begin{algorithmic}[1]
\STATE {\bfseries Input:} Ground set $V$, set function $F: 2^V \to \R_+$, budget $k \in \N_+$
\STATE $S \leftarrow \emptyset$
\WHILE{$|S|<k$}
\STATE $i^* \leftarrow \argmax_{i \in V \setminus S} F(i \mid S)$\label{l:max-gain}
\STATE $S \leftarrow S \cup \{i^*\}$
\ENDWHILE
\STATE {\bfseries Output:} $S$
\end{algorithmic}
\end{algorithm}

A set function $F$ is \emph{submodular} if it has diminishing marginal gains: $F( i \mid S) \geq F(i \mid T)$ for all $S \subseteq T$, $i \in V\setminus T$. We say that $F$ is \emph{normalized} if $F(\emptyset) = 0$, and non-decreasing if $F(S) \leq F(T)$ for all $S \subseteq T$. \\
Given a non-decreasing submodular function $F$, selecting a set $S \subseteq V$ with cardinality $|S| \leq k$ that maximize $F(S)$ can be done efficiently using the \Greedy algorithm (Alg. \ref{alg:greedy}). 
The returned solution is guaranteed to satisfy $F(\hat{S}) \geq (1 - 1/e) \max_{|S| \leq k} F(S)$ \citep{Nemhauser1978}. 
In general though maximizing a non-submodular function over a cardinality constraint is NP-Hard \citep{Natarajan1995}. However,  \citet{Das2011} introduced a notion of \emph{weak submodularity} which is sufficient to obtain a constant factor approximation with the  \Greedy algorithm.

\begin{definition}\label{def:weaklySub}
Given a set function $F: 2^V \to \R$, $U \subseteq V, k \in \N_+$, we say that $F$ is $\gamma_{U, k}$-weakly submodular, with $\gamma_{U, k} > 0$ if
$$\gamma_{U, k} F(S | L) \leq \sum_{i \in S} F(i | L),$$
for every two disjoint sets $L, S \subseteq V$, such that $L \subseteq U, |S| \leq k$.
\end{definition}

\looseness=-1 The parameter $\gamma_{U, k}$ is called the \emph{submodularity ratio} of $F$. It characterizes how close a set
function is to being submodular. If $F$ is  non-decreasing then $\gamma_{U, k} \in [0,1]$, and $F$ is submodular if and only if $\gamma_{U, k} = 1$ for all  $U \subseteq V, k \in \N_+$.
Given a non-decreasing $\gamma_{\hat{S}, k}$-weakly submodular function $F$, the Greedy algorithm is  guaranteed to return a solution $\hat{S}$ satisfying $F(\hat{S}) \geq (1 - e^{-\gamma_{\hat{S}, k}}) \max_{|S| \leq k} F(S)$ \citep{Elenberg2016, Das2011}. Hence, the closer $F$ is to being submodular, the better is the approximation guarantee.


\section{Reweighted input change pruning}\label{sec:onelayer-neurons}
 \looseness=-1 In this section, we introduce our approach for pruning neurons in a single layer. 
Given a large pre-trained NN, $n$ training data samples, and a layer $\ell$ with $n_\ell$ neurons, our goal is to select a small number $k$ out of the $n_\ell$ neurons to keep, and prune the rest, in a way that influences the network the least. One way to achieve this is by minimizing the change in input to the next layer $\ell+1$, induced by pruning. However, simply throwing away the activations from the dropped neurons is wasteful. Instead, we optimize the weights of the next layer to reconstruct the inputs from the remaining neurons. 

Formally, let $A^\ell \in \R^{n \times n_{\ell}}$ be the activation matrix of layer $\ell$ with columns $a^\ell_1, \cdots, a^\ell_{n_\ell}$, where $a^\ell_i \in \R^{n}$ is the vector of activations of the $i$th neuron in layer $\ell$ for each training input, and let $W^{\ell+1} \in \R^{n_\ell \times n_{\ell+1}}$ be the weight matrix of layer $\ell+1$ with columns $w^{\ell+1}_1, \cdots, w^{\ell+1}_{n_{\ell+1}}$, where $w^{\ell+1}_i \in  \R^{n_{\ell}}$ is the vector of weights connecting the $i$th neuron in layer $\ell+1$ to the neurons in layer $\ell$. 
When a neuron is pruned in layer $\ell$, the corresponding column of weights in $W^\ell$ and row in $W^{\ell+1}$ are removed. Pruning $n_\ell - k$ neurons in layer $\ell$ reduces the number of parameters and computation cost by $(n_\ell - k) / n_\ell$ for both layer $\ell$ and $\ell+1$.

Let $V_\ell = \{1, \cdots, n_\ell\}$.  Given a set $S \subseteq V_\ell$, we denote by $A^\ell_S$ the matrix with columns $a^\ell_i$ for all $i \in S$, and $0$ otherwise. That is, $A^\ell_S$ is the activation matrix of layer $\ell$ after pruning.
We choose a set of neurons $S \subseteq V_\ell$ to keep  and new weights $\tilde{W}^{\ell+1} \in \R^{n_\ell \times n_{\ell+1}}$ that minimize:
\begin{equation}\label{eq:optReweighting}
\min_{|S| \leq k, \tilde{W}^{\ell+1} \in \R^{n_\ell \times n_{\ell+1}}} \| A^\ell W^{\ell+1} - A^\ell_S \tilde{W}^{\ell+1}  \|_F^2 
\end{equation}
Note that $A^\ell W^{\ell+1}$ are the original inputs of layer $l+1$, and $A^\ell_S \tilde{W}^{\ell+1}$ are the inputs after pruning and reweighting, i.e., replacing the weights $W^{\ell+1}$ of layer $\ell+1$ with the new weights $\tilde{W}^{\ell+1}$. 

\subsection{Greedy selection}\label{sec:Greedy}

Solving Problem \eqref{eq:optReweighting} exactly is NP-Hard \citep{Natarajan1995}. However, we show below that it can be formulated as a weakly submodular maximization problem, hence it can be efficiently approximated. Let 
\begin{equation}\label{eq:F}
F(S) =  \| A^\ell W^{\ell+1}\|_F^2 - \min_{\tilde{W}^{\ell+1}} \| A^\ell W^{\ell+1} - A^\ell_S \tilde{W}^{\ell+1}  \|_F^2,
\end{equation}
then Problem \eqref{eq:optReweighting} is equivalent to $\max_{|S| \leq k} F(S)$.

\begin{restatable}{proposition}{weaklySub}\label{prop:WeaklySub}
Given $U \subseteq V, k \in \N_+$, $F$ is a normalized non-decreasing $\gamma_{U, k}$-weakly submodular function, with $$\gamma_{U, k} \geq \frac{\min_{\| z \|_2 = 1, \| z \|_0 \leq |U|+k} \|A^\ell z\|_2^2}{\max_{\| z \|_2 = 1, \| z \|_0 \leq |U|+1} \|A^\ell z\|^2_2}. $$ 
\end{restatable}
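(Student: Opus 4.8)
The plan is to first rewrite $F$ in a geometric form that makes normalization and monotonicity transparent, and then to establish weak submodularity through the restricted strong concavity / smoothness machinery of \citep{Elenberg2016}. Writing $B = A^\ell W^{\ell+1}$ for the original layer-$(\ell+1)$ inputs, the inner least-squares problem in \eqref{eq:F} projects each column of $B$ onto the column span of $A^\ell_S$, namely $\mathrm{span}\{a^\ell_i : i \in S\}$. Hence, if $P_S$ denotes the orthogonal projection onto that span,
\begin{equation*}
F(S) = \|B\|_F^2 - \|B - P_S B\|_F^2 = \|P_S B\|_F^2 .
\end{equation*}
Normalization is then immediate since $P_\emptyset = 0$, and monotonicity follows from the Pythagorean identity $\|P_T B\|_F^2 = \|P_S B\|_F^2 + \|(P_T - P_S)B\|_F^2$ whenever $S \subseteq T$, because then $\mathrm{span}(A^\ell_S) \subseteq \mathrm{span}(A^\ell_T)$ and $P_T - P_S$ is itself an orthogonal projection.

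For the submodularity ratio, I would view $F$ as the subset-selection objective associated with a concave quadratic loss, so that the framework of \citep{Elenberg2016} applies. First I would note that $F$ decomposes over the output neurons as $F = \sum_j F_j$ with $F_j(S) = \|P_S B_j\|_2^2 = \max_{\supp(w) \subseteq S} \ell_j(w) - \ell_j(\0)$, where $\ell_j(w) = -\|B_j - A^\ell w\|_2^2$; since a common lower bound on the submodularity ratios of the $F_j$ is a valid ratio for their sum, it suffices to treat a single response. The two key estimates, for disjoint $L \subseteq U$ and $|S| \le k$ with restricted maximizer $x^{(L)}$, are: (i) a lower bound on each single-coordinate gain, $F(i \mid L) \ge (\nabla_i \ell(x^{(L)}))^2 / (2 M_{|U|+1})$, obtained by taking an optimal step along coordinate $i$ from $x^{(L)}$ and invoking smoothness on the support $L \cup \{i\}$ of size at most $|U|+1$; and (ii) an upper bound on the joint gain, $F(S \mid L) \le \|\nabla_S \ell(x^{(L)})\|_2^2 / (2 m_{|U|+k})$, obtained from strong concavity on the support $L \cup S$ of size at most $|U|+k$. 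Crucially, first-order optimality of $x^{(L)}$ forces $\nabla_j \ell(x^{(L)}) = 0$ for $j \in L$, so only the $S$-coordinates of the gradient survive in the linear term of (ii), matching the $\|\nabla_S \ell(x^{(L)})\|_2^2$ from summing (i) over $i \in S$. Dividing then gives $\sum_{i \in S} F(i\mid L) \ge (m_{|U|+k}/M_{|U|+1})\, F(S\mid L)$, i.e. $\gamma_{U,k} \ge m_{|U|+k}/M_{|U|+1}$.

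It then remains to compute the restricted constants for our loss. Since $\ell_j$ has Hessian $-2\,(A^\ell)^\top A^\ell$, its restricted smoothness and strong-concavity parameters over $s$-sparse directions are $M_s = 2\max_{\|z\|_2 = 1,\, \|z\|_0 \le s}\|A^\ell z\|_2^2$ and $m_s = 2\min_{\|z\|_2 = 1,\, \|z\|_0 \le s}\|A^\ell z\|_2^2$ (the same constants govern the full matrix objective, whose Hessian is block-diagonal with these blocks); the factor $2$ cancels in the ratio and delivers exactly the claimed bound. I expect the main obstacle to be the bookkeeping in step (ii): carefully exploiting the optimality of $x^{(L)}$ to discard the $L$-coordinates of the gradient, and tracking the precise sparsity levels ($|U|+k$ in the numerator against $|U|+1$ in the denominator, using that $m_s$ is non-increasing and $M_s$ non-decreasing in $s$) so that the restricted eigenvalue quantities line up with the statement. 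The reduction from the matrix-valued objective to the scalar case, while conceptually simple, also needs to be stated with care so that the final bound depends only on $A^\ell$ and not on the responses $B_j$.
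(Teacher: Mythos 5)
Your proof is correct and takes essentially the same route as the paper's: both decompose $F$ into per-output-neuron sparse regression objectives $F_m(S)$ with design matrix $A^\ell$ and bound the submodularity ratio via the restricted strong concavity/smoothness framework of \citep{Elenberg2016}, arriving at the same restricted-eigenvalue constants $\min_{\|z\|_2=1,\|z\|_0\le |U|+k}\|A^\ell z\|_2^2$ and $\max_{\|z\|_2=1,\|z\|_0\le |U|+1}\|A^\ell z\|_2^2$. The only differences are expository: you spell out the Elenberg-style gradient/optimality argument that the paper invokes as a citation, and you verify normalization and monotonicity explicitly via the projection identity $F(S)=\|P_S B\|_F^2$, which the paper dismisses as easy to check.
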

The proof of \cref{prop:WeaklySub} follows by writing $F$ as the sum of $n_{\ell+1}$ sparse linear regression problems $F(S) =\sum_{m=1}^{n_{\ell+1}} \| A^\ell w^{\ell+1}_m\|^2_2 - \min_{\supp(\tilde{w}_m) \subseteq S} \| A^\ell w^{\ell+1}_m - A^\ell  \tilde{w}_m\|^2_2$, and from the relation established in \citep{Elenberg2016,Das2011} between weak submodularity and sparse eigenvalues of the covariance matrix (see \cref{app:WeakSub-proofs}).

We use the \Greedy algorithm to select a set $\hat{S} \subseteq V_\ell$ of $k$ neurons to keep in layer $\ell$.
As discussed in \cref{sec:Prelim}, the returned solution is guaranteed to satisfy 
\begin{equation}\label{eq:approx}
F(\hat{S}) \geq (1 - e^{-\gamma_{\hat{S}, k}}) \max_{|S| \leq k} F(S)
\end{equation}
Computing the lower bound on the submodularity ratio $\gamma_{\hat{S}, k}$ in \cref{prop:WeaklySub} is NP-Hard \citep{Das2011}. It is non-zero if any $\min\{2 k, n_\ell\}$ columns of $A^\ell$ are linearly independent.
If the number of training data is larger than the number of neurons, i.e., $n > n_\ell$, this is likely to be satisfied. 
We verify that this is indeed the case in our experiments in \cref{app:gammaValues}. We also discuss the tightness of the lower bound in \cref{app:tightness}.

We show in \cref{app:weakDRsub} that $F$ satisfies an even stronger notion of approximate submodularity than weak submodularity, which implies a better approximation guarantee for \Greedy than the one provided in Eq. \eqref{eq:approx}. Though, this requires a stronger assumption: any $k+1$ columns of $A^\ell$ should be linearly independent and all rows of $W^{\ell+1}$ should be linearly independent. In particular, we would need that $n_\ell \leq n_{\ell+1}$, which is not always satisfied.

In \cref{sec:errorbds}, we show that the approximation guarantee of Greedy implies an exponentially decreasing 
bound on the layerwise error, and on the final output error under a mild assumption. 

\subsection{Reweighting}\label{sec:RW}

For a fixed $S \subseteq V_\ell$, the reweighted input change $ \| A^\ell W^{\ell+1} - A^\ell_S \tilde{W}^{\ell+1}  \|_F^2$ is minimized by setting 
\begin{equation}\label{eq:optW}
\tilde{W}^{\ell+1} = x^S(A^{\ell}) W^{\ell+1},
\end{equation}
where $x^S(A^{\ell}) \in \R^{n_\ell \times n_{\ell}}$ is the matrix with columns $x^S(a^\ell_j)$ such that
\begin{align}\label{eq:proj}
x^S(a^\ell_j) \in \argmin_{\supp(x) \subseteq S} \|a_j^\ell - A x \| _2^2 \text{ for all $j \in V_\ell$}.
\end{align}

\looseness=-1 Note that 
the new weights are given by $\tilde{w}^{\ell+1}_{i m} = w^{\ell+1}_{i m} + \sum_{j \not \in S} [x^S(A^{\ell})]_{ij}  w_{j m}^{\ell+1}$ for all $i \in S$, and $\tilde{w}^{\ell+1}_{i m} = 0$ for all $i \not \in S, m \in V_{\ell+1}$. Namely, the new weights merge the weights from the dropped neurons into the kept ones.
\mtodo{Remove the above observation if we need the space}
This is the same reweighting procedure introduced in \citep{Mariet2015}. But instead of applying it only at the end to the selected neurons $\hat{S}$, it is implicitly done at each iteration of our pruning method, as it is required to evaluate $F$. We discuss next   how this can be done efficiently. 

\subsection{Cost}\label{sec:cost}
Each iteration of \Greedy requires $O(n_\ell)$ function evaluations of $F$. 
Computing $F(S)$ from scratch needs $O(k \cdot ( n_\ell \cdot n_{\ell+1} + n \cdot (n_\ell + n_{\ell+1}))$ time, so a naive implementation of \Greedy is too expensive. 
The following Proposition outlines how we can efficiently evaluate $F(S + i)$ 
given that $F(S)$ was computed in the previous iteration.

\begin{restatable}{proposition}{cost}\label{prop:cost}
Given $S \subseteq V_\ell$ such that $|S| \leq k$, $i \not \in S$,  let $\proj_S(a_j^\ell) = A^\ell_S x^S(a_j^\ell)$ be the projection of $a_j^\ell$ onto the column space of $A^\ell_S$,
$R_S(a_i^\ell) = a_i^\ell - \proj_S(a_i^\ell)$ and $\proj_{R_S(a_i^\ell)}(a_j^\ell) \in \argmin_{z =  R_S(a_i^\ell) \gamma , \gamma \in \R} \| a_j^\ell - z\|_2^2$ the corresponding residual and the projection of $a_j^\ell$ onto it. 
We can write 
$$F(i | S)= \sum_{m=1}^{n_{\ell+1}}  \|  \proj_{R_S(a_i^\ell)}(A^{\ell}_{V \setminus S}) w_m^{\ell+1} \|_2^2, $$ where
$\proj_{R_S(a_i)}(A^{\ell}_{V \setminus S}) $ is the matrix with columns $\proj_{R_S(a_i)}(a_j^\ell)$ for all $j \not \in S$, $0$ otherwise.
Assuming $F(S), \proj_S(a_j^\ell)$ and $x^S(a_j^\ell)$ for all  $j  \not \in S$ 
were computed in the previous iteration, we can compute $F(S+i), \proj_{S+i}(a_j^\ell)$ and $x^{S+i}(a_j^\ell)$ for all $j \not \in (S + i)$ 
in $$O(n_{\ell}\cdot (n_{\ell+1} + n + k)) \text{ time}.$$
 \looseness=-1 The optimal weights in Eq. \eqref{eq:optW} can then be computed in $O(k \cdot  n_{\ell}\cdot n_{\ell+1})$ time, at the end of \Greedy.
\end{restatable}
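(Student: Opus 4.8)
The plan is to reduce $F$ to a sum of squared projection norms and then exploit the orthogonal structure of the nested column spaces to obtain both the marginal-gain identity and cheap incremental updates. First I would rewrite $F$ column-wise: setting $b_m := A^\ell w^{\ell+1}_m$, the inner least-squares problem in \eqref{eq:F} decouples over $m$, and since $A^\ell_S \tilde w_m$ ranges over $\mathrm{col}(A^\ell_S)$ its minimizer is the projection $\proj_S(b_m)$. The Pythagorean identity $\|b_m\|_2^2 = \|\proj_S(b_m)\|_2^2 + \|b_m - \proj_S(b_m)\|_2^2$ then gives $F(S) = \sum_{m=1}^{n_{\ell+1}} \|\proj_S(b_m)\|_2^2$, recovering the sum-of-sparse-regressions form behind \cref{prop:WeaklySub}.

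Next I would establish the marginal-gain formula. Because $\mathrm{col}(A^\ell_{S+i})$ is the orthogonal direct sum of $\mathrm{col}(A^\ell_S)$ and the line $\mathrm{span}\,R_S(a^\ell_i)$ (as $R_S(a^\ell_i)\perp\mathrm{col}(A^\ell_S)$ and $a^\ell_i = \proj_S(a^\ell_i) + R_S(a^\ell_i)$), the projection splits as $\proj_{S+i}(b_m) = \proj_S(b_m) + \proj_{R_S(a^\ell_i)}(b_m)$ with orthogonal summands, so $\|\proj_{S+i}(b_m)\|_2^2 - \|\proj_S(b_m)\|_2^2 = \|\proj_{R_S(a^\ell_i)}(b_m)\|_2^2$ and $F(i\mid S) = \sum_m \|\proj_{R_S(a^\ell_i)}(b_m)\|_2^2$. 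Since $a^\ell_j \in \mathrm{col}(A^\ell_S)$ is orthogonal to $R_S(a^\ell_i)$ for every $j \in S$, those columns vanish when projecting $b_m = \sum_j a^\ell_j w^{\ell+1}_{jm}$, leaving exactly $\sum_m \|\proj_{R_S(a^\ell_i)}(A^\ell_{V\setminus S}) w^{\ell+1}_m\|_2^2$, the claimed identity.

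For the running time I would track the per-step updates. Writing $\beta_j := \langle R_S(a^\ell_i), a^\ell_j\rangle / \|R_S(a^\ell_i)\|_2^2$ so that $\proj_{R_S(a^\ell_i)}(a^\ell_j) = \beta_j R_S(a^\ell_i)$: given the precomputed $\proj_S(a^\ell_i)$, forming $R_S(a^\ell_i)$ and $\|R_S(a^\ell_i)\|_2^2$ costs $O(n)$, and the inner products giving all $\beta_j$ cost $O(n_\ell \cdot n)$. Since $\proj_{R_S(a^\ell_i)}(A^\ell_{V\setminus S}) w^{\ell+1}_m = \big(\sum_{j \notin S}\beta_j w^{\ell+1}_{jm}\big) R_S(a^\ell_i)$, the scalars $c_m := \sum_{j\notin S}\beta_j w^{\ell+1}_{jm}$ form one matrix--vector product in $O(n_\ell \cdot n_{\ell+1})$, after which $F(i\mid S) = \|R_S(a^\ell_i)\|_2^2 \sum_m c_m^2$ costs $O(n_{\ell+1})$. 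The projection update $\proj_{S+i}(a^\ell_j) = \proj_S(a^\ell_j) + \beta_j R_S(a^\ell_i)$ is $O(n)$ per $j$, i.e.\ $O(n_\ell \cdot n)$ total. For the coefficient vectors I would use $R_S(a^\ell_i) = A^\ell(\1_{\{i\}} - x^S(a^\ell_i))$, which yields $x^{S+i}(a^\ell_j) = x^S(a^\ell_j) + \beta_j(\1_{\{i\}} - x^S(a^\ell_i))$; as $\1_{\{i\}} - x^S(a^\ell_i)$ is supported on $S\cup\{i\}$ (size $\le k+1$), each update is $O(k)$, totaling $O(n_\ell \cdot k)$. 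Summing gives $O(n_\ell(n_{\ell+1} + n + k))$. Finally $\tilde W^{\ell+1} = x^{\hat S}(A^\ell) W^{\ell+1}$, whose left factor has nonzero entries only in its $k$ rows indexed by $\hat S$, is a $(k\times n_\ell)\times(n_\ell\times n_{\ell+1})$ product costing $O(k\cdot n_\ell\cdot n_{\ell+1})$.

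The step I expect to be the main obstacle is the coefficient-vector update: the naive expression for $x^{S+i}$ would cost $O(n_\ell\cdot n)$, and it is precisely the identity $R_S(a^\ell_i) = A^\ell(\1_{\{i\}} - x^S(a^\ell_i))$ together with the support bound $|S\cup\{i\}|\le k+1$ that collapses it to $O(n_\ell\cdot k)$. Getting this bookkeeping right, carefully justifying the orthogonal split and the vanishing of the $j\in S$ cross terms, and handling the degenerate case $R_S(a^\ell_i)=0$ (where $F(i\mid S)=0$ and we set $\proj_{R_S(a^\ell_i)}=0$ by convention), is where the care is needed; the remaining operation counts are routine.
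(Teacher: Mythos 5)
Your proposal is correct and follows essentially the same route as the paper: the paper's Lemma B.2 establishes the same rank-one projection update $\proj_{S+i}(y) = \proj_S(y) + \proj_{R_S(a_i^\ell)}(y)$ (verified there via normal equations rather than your geometric orthogonal-direct-sum argument), its Lemma B.4 derives the same marginal-gain identity including the vanishing of the $j \in S$ columns by orthogonality, and its Proposition B.5 performs the identical cost bookkeeping, including the same $O(k)$-support trick $x^{S+i}(a_j^\ell) = x^S(a_j^\ell) + \gamma^{S,i}(a_j^\ell)(\1_i - x^S(a_i^\ell))$ for the coefficient updates and the same $O(k \cdot n_\ell \cdot n_{\ell+1})$ final reweighting step. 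The only cosmetic difference is that the paper proves everything for the asymmetric objective $\tilde{F}$ first and specializes to $F$ by setting $B^\ell = A^\ell$, whereas you work with $F$ directly.
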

The proof is given in \cref{app:cost-proofs}, and relies on using optimality conditions to construct the least squares solution $x^{S+i}(a_j^\ell)$ from $x^S(a_j^\ell)$.\\
In total \Greedy 's runtime is then $O(k \cdot(n_{\ell})^2 \cdot (n_{\ell+1} + n + k))$. 
In other words, our pruning method costs as much as $O(k)$ forward passes in layer $\ell+1$ with a batch of size $n$ (assuming $n_{\ell+1} = O(n_\ell)$).  
Using a faster variant of \Greedy, called \textsc{Stochastic-Greedy} \citep{Mirzasoleiman2015}, further reduces the cost to $O(\log(1/\epsilon) \cdot (n_{\ell})^2 \cdot (n_{\ell+1} + n + k))$, or equivalently $O(\log(1/\epsilon))$ forward passes in layer $\ell+1$ with a batch of size $n$,  while maintaining almost the same approximation guarantee $(1 - e^{-\gamma_{\hat{S}, k}} - \epsilon)$  in expectation. \footnote{\citet{Mirzasoleiman2015} only consider submodular functions, but it is straighforward to extend their result to weakly submodular functions \cref{app:StochasticGreedy}.} 

Note also that computing the solutions for different budgets $k' \leq k$ can be done at the cost of one by running \Greedy with budget $k$. Our method is more expensive than methods which prune neurons individually \citep{He2014, Li2017, Liebenwein2020,Mussay2020,Mussay2021, Molchanov2017, Srinivas2015}, 
but much less expensive than a loss-based method like \citep{Ye2020, Ye2020b}, which requires 
$O(k)$ forward passes in the full network, for each layer.


\section{Pruning regular regions of neurons}\label{sec:onelayer-channels}
In this section, we discuss how to adapt our approach to pruning regular regions of neurons. 
This is easily achieved by mapping any set of regular regions to the corresponding set of neurons, then applying the same method in \cref{sec:onelayer-neurons}.
In particular, we focus on pruning channels in CNNs.  

\looseness=-1 Given a layer $\ell$ with $n_\ell$ output channels, let 
$X^\ell \in \R^{n \cdot p_\ell \times n_\ell \times r_h \times r_w}$ be its activations for each output channel and training input, where $p_\ell$ is number of patches obtained by applying a filter of size $r_h \times r_w$, and let $F^{\ell+1} \in \R^{n_{\ell+1} \times n_\ell \times  r_h \times r_w}$ be the weights of layer $\ell+1$, corresponding to $n_\ell$ filters of size $r_h \times r_w$ for each of its output channels.
When an output channel is pruned in layer $\ell$, the corresponding 
weights in $F^\ell$ and $F^{\ell+1}$ are removed. Pruning $n_\ell - k$ output channels in layer $\ell$ reduces the number of parameters and computation cost by $(n_\ell - k) / n_\ell$ for both layer $\ell$ and $\ell+1$. 
If layer $\ell$ is followed by a batch norm layer, the weights therein corresponding to the pruned channels are also removed.
 
We arrange the activations $X^\ell_c \in \R^{n \cdot p_\ell \times  r_h \cdot r_w}$ of each channel $c$ into $r_h  r_w$ columns of $A^\ell \in \R^{n \cdot p_\ell \times n_\ell \cdot r_h \cdot r_w}$, i.e., $A^\ell = [X^\ell_1, \cdots, X^\ell_{n_\ell}]$. Similarly, we arrange the weights $F_c^{\ell+1} \in \R^{n_{\ell+1} \times  r_h \times r_w}$ of each channel $c$ into $r_h \cdot r_w$ rows of $W^{\ell+1} \in \R^{n_\ell \cdot r_h \cdot r_w \times n_{\ell+1} }$, i.e., $(W^{\ell+1})^\top =  [(F^\ell_1)^\top, \cdots, (F^\ell_{n_\ell})^\top]$. 
Recall that $V_{\ell} = \{1, \cdots, n_\ell\}$, and let  $V'_{\ell} = \{1, \cdots, r_h  r_w  n_\ell\}$. 
We define a function $M: 2^{V_{\ell}} \to 2^{V'_{\ell}}$ which maps every channel $c$ to its corresponding $r_h r_w$ columns in $A^\ell$.
Let $G(S) = F(M(S))$, with $F$ defined in Eq. \eqref{eq:F}, then minimizing the reweighted input change $ \| A^\ell W^{\ell+1} - A^\ell_{M(S)} \tilde{W}^{\ell+1}  \|_F^2$ with a budget $k$ is equivalent to $\max_{|S| \leq k} G(S)$. The following proposition shows that this remains a weakly submodular maximization problem.

\begin{restatable}{proposition}{weaklySubChannels}\label{prop:WeaklySubChannels}
Given $U \subseteq V_{\ell}, k \in \N_+$, 
$G$ is a normalized non-decreasing $\gamma_{U, k}$-weakly submodular function, with $$\gamma_{U, k} \geq \frac{\min_{\| z \|_2 = 1, \| z \|_0 \leq r_h r_w(|U|+k)} \|A^\ell z\|_2^2}{\max_{\| z \|_2 = 1, \| z \|_0 \leq r_h r_w (|U|+ 1) } \|A^\ell z\|^2_2}.$$
\end{restatable}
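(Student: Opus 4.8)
The plan is to reduce the claim to the single-response, group-sparse version of the sparse-regression argument already used for \cref{prop:WeaklySub}, and to track how the block structure of $M$ scales the relevant sparsity levels by $r_h r_w$.

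First I would dispatch the easy properties. Since $M(\emptyset) = \emptyset$ and $M$ is monotone (adding channels only adds columns), normalization and monotonicity of $G$ follow immediately from those of $F$. Because $M$ sends distinct channels to \emph{disjoint} blocks of $r_h r_w$ columns, it is additive on disjoint sets: for disjoint $L, S \subseteq V_\ell$ we have $M(L \cup S) = M(L) \sqcup M(S)$, so $G(S \mid L) = F(M(S) \mid M(L))$ and in particular $G(i \mid L) = F(M(i) \mid M(L))$. Thus \cref{def:weaklySub} for $G$ is exactly the ``grouped'' inequality $\gamma_{U,k}\, F(M(S)\mid M(L)) \le \sum_{i \in S} F(M(i)\mid M(L))$. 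Writing $F = \sum_{m=1}^{n_{\ell+1}} F_m$ as the sum of per-response regressions (as in the proof of \cref{prop:WeaklySub}) and $G_m(S) = F_m(M(S))$, it suffices to prove this inequality for each $F_m$ with a single response $b = A^\ell w^{\ell+1}_m$, since the bound below is response-independent and summing over $m$ recovers the claim. In what follows I fix $m$ and, abusing notation, write $F$ for this single-response function.

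It is tempting to simply invoke \cref{prop:WeaklySub} with $U' = M(U)$ and $k' = r_h r_w k$, but this only yields $\gamma\, F(M(S)\mid M(L)) \le \sum_{j \in M(S)} F(j \mid M(L))$, a sum over \emph{individual columns}; weak submodularity bounds a block gain $F(M(i)\mid M(L))$ from \emph{below} by its column-gain sum, which is the wrong direction to close the argument. Instead I would redo the eigenvalue estimate at the group level. Work in the residual space after projecting out the columns $A^\ell_{M(L)}$: let $\tilde b$ and $\tilde A_T$ (with $T = M(S)$ and blocks $T_i = M(i)$) be the residuals of the response and of the columns in $T$. Using $F(M(S) \mid M(L)) = \| \proj_{\mathrm{col}(\tilde A_T)} \tilde b\|_2^2$ and writing $c = \tilde A_T^\top \tilde b$, $H = \tilde A_T^\top \tilde A_T$ with diagonal blocks $H_{ii} = \tilde A_{T_i}^\top \tilde A_{T_i}$, the marginal gains become $F(M(S)\mid M(L)) = c^\top H^{+} c$ and $F(M(i)\mid M(L)) = c_{T_i}^\top H_{ii}^{+} c_{T_i}$. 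Bounding $c^\top H^{+} c \le \|c\|_2^2 / \lambda_{\min}^{+}(H)$ and each $c_{T_i}^\top H_{ii}^{+} c_{T_i} \ge \|c_{T_i}\|_2^2 / \lambda_{\max}(H_{ii})$, then summing the latter over the partition $T = \sqcup_i T_i$ (so $\sum_i \|c_{T_i}\|_2^2 = \|c\|_2^2$), gives the ratio $\sum_i F(M(i)\mid M(L)) / F(M(S)\mid M(L)) \ge \lambda_{\min}^{+}(H) / \max_i \lambda_{\max}(H_{ii})$.

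The remaining, and main, step is to relate these residual-Gram eigenvalues back to the sparse eigenvalues of $A^\ell$ at the scaled sparsity levels. For any unit vector $u$ supported on $T$, the quadratic form $u^\top H u$ equals the minimum of $\|A^\ell z\|_2^2$ over vectors $z$ supported on $M(L)\cup T$ with $z_T = u$; since any such $z$ satisfies $\|z\|_2 \ge 1$ while $\|z\|_0 \le r_h r_w(|L|+|S|) \le r_h r_w(|U|+k)$, this is at least $\min_{\|z\|_2=1,\,\|z\|_0 \le r_h r_w(|U|+k)} \|A^\ell z\|_2^2$, so $\lambda_{\min}^{+}(H)$ dominates the numerator eigenvalue. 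Symmetrically, each $H_{ii}$ is the residual Gram matrix of a single channel, so $\lambda_{\max}(H_{ii})$ is bounded by $\|A^\ell z\|_2^2$ over $z$ on a support of size $\le r_h r_w(|U|+1)$, i.e. by the denominator $\max_{\|z\|_2=1,\,\|z\|_0\le r_h r_w(|U|+1)} \|A^\ell z\|_2^2$. Combining these yields the stated lower bound on $\gamma_{U,k}$. I expect the bookkeeping in this last step — correctly accounting for the projection onto $\mathrm{col}(A^\ell_{M(L)})$ when passing between the block residual eigenvalues and the full sparse eigenvalues of $A^\ell$, and verifying the $r_h r_w$ scaling of the support sizes — to be the main obstacle, whereas the group decomposition of the marginal gains is routine once the residual-space formulas are in place.
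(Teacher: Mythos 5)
Your proposal is correct, but it proves the bound by a different mechanism than the paper. The paper extends the restricted-strong-convexity/restricted-smoothness (RSC/RSM) argument of Elenberg et al.\ to grouped supports: writing $G(S)=\sum_m F_m(M(S))$ with $\ell_m(\tilde w)=\|A^\ell w^{\ell+1}_m - A^\ell\tilde w\|_2^2$, it upper-bounds the block gain $F_m(M(S)\mid M(L))$ by $\|[\nabla\ell_m(x^{L'})]_{M(S)}\|_2^2\big/\bigl(2\mu_{|L'|+k'}\bigr)$ via RSC, lower-bounds each channel gain $F_m(M(i)\mid M(L))$ by $\|[\nabla\ell_m(x^{L'})]_{M(i)}\|_2^2\big/\bigl(2\nu_{|L'|+|M(i)|}\bigr)$ via RSM, and closes the argument using exactly the disjointness you identify ($\|g_{M(S)}\|_2^2=\sum_{i\in S}\|g_{M(i)}\|_2^2$). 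Your route instead works purely linear-algebraically in the residual space: the partitioned-regression (Frisch--Waugh--Lovell) identity $F_m(T\mid L')=\|\proj_{\mathrm{col}(\tilde A_T)}\tilde b\|_2^2 = c^\top H^+ c$, eigenvalue bounds on the residualized Gram matrix $H$ and its diagonal blocks $H_{ii}$, and then the support-counting step to pass to sparse eigenvalues of $A^\ell$. The two are structurally parallel --- your correlation vector $c=\tilde A_T^\top\tilde b$ equals $-\tfrac12[\nabla\ell_m(x^{L'})]_T$ since the residual is orthogonal to $\mathrm{col}(A^\ell_{M(L)})$ --- but the bounding mechanisms differ, and each buys something: your argument is more self-contained (no RSC/RSM machinery) and in fact yields a slightly sharper constant, since $\lambda_{\max}(H_{ii})\le\max_{\|z\|_2=1,\|z\|_0\le r_h r_w}\|A^\ell z\|_2^2$ (projection only shrinks the block Gram), whereas the paper's denominator sits at sparsity $r_h r_w(|U|+1)$; the paper's argument, on the other hand, extends verbatim to general RSC/RSM losses beyond least squares and mirrors the proof of \cref{prop:WeaklySub} line by line. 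Your observation that one cannot simply invoke \cref{prop:WeaklySub} on $M(U)$, $r_h r_w k$ because weak submodularity splits gains over individual columns rather than blocks is exactly the reason the paper redoes the estimate at the group level, though it never states this explicitly. Minor bookkeeping you would need in a full write-up: handle the degenerate cases $c=0$ (both sides vanish) and $H_{ii}=0$ (then $c_{T_i}=0$, so the term can be dropped), and note $c\in\mathrm{range}(H)$, $c_{T_i}\in\mathrm{range}(H_{ii})$ so the pseudo-inverse eigenvalue bounds apply; none of these affects the validity of the argument.
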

\begin{proof}[Proof sketch]
$G$ is $\gamma_{U, k}$-weakly submodular iff $F$ satisfies 
$\gamma_{U, k} F(M(S) | M(L)) \leq \sum_{i \in S} F(M(i) | M(L)),$
for every two disjoint sets $L, S \subseteq V_{\ell}$, such that $L \subseteq U, |S| \leq k$. The proof follows by extending the relation established in \citep{Elenberg2016,Das2011} between weak submodularity and sparse eigenvalues of the covariance matrix to this case. 
\end{proof}

\looseness=-1 As before, we use the \Greedy algorithm, with function $G$, to select a set $\hat{S} \subseteq V_\ell$ of $k$ channels to keep in layer $\ell$. We get the same approximation guarantee $G(\hat{S}) \geq (1 - e^{-\gamma_{\hat{S}, k}}) \max_{|S| \leq k} G(S).$ The submodularity ratio $\gamma_{\hat{S}, k}$ is non-zero if any $\min\{2 k, n_\ell \} r_h r_w$ columns of $A^\ell$ are linearly independent.
In our experiments, we observe that in certain layers linear independence only holds for $k$ very small, e.g., $k \leq 0.01 n_\ell$. This is due to the correlation between patches which overlap. To remedy this, we experimented with using only $r_h r_w$ random patches from each image, instead of using all patches. This indeed raises the rank of $A^\ell$, but certain layers have a very small feature map size so that even the small number of random patches have significant overlap, resulting in still a very small range where linear independence holds, e.g., $k \leq 0.08 n_\ell$ (see \cref{app:gammaValues} for more details). 
The results obtained with random patches were worst than the ones with all patches, we thus omit them.  Note that our lower bounds on $\gamma_{\hat{S}, k}$ are not necessarily tight (see \cref{app:tightness}). Hence, having linear dependence does not necessarily imply that  $\gamma_{\hat{S}, k}=0$; our method still performs well in these cases. 
 
 

For a fixed $S \subseteq V_{\ell}$, the optimal weights are again given by $\tilde{W}^{\ell+1} = x^{M(S)}(A^{\ell}) W^{\ell+1}$.  The cost of running \Greedy and reweighting is the same as before (see \cref{app:cost-proofs}). 

\section{Pruning multiple layers}
In this section, we explain how to apply our pruning method to prune multiple layers of a NN. 

\subsection{Reweighted input change pruning variants}\label{sec:variants}
\looseness=-1 We consider three variants of our method:  \OurLayer, \OurSeq, and \OurAsym. 
In \OurLayer, we prune each layer independently, i.e., we apply exactly the method in Section \ref{sec:onelayer-neurons} or \ref{sec:onelayer-channels}, according to the layer's type. 
This is the fastest variant; it has the same cost as pruning a single layer, as each layer can be pruned in parallel, and it only requires one forward pass to get the activations of all layers. However, it does not take into account the effect of pruning one layer on subsequent layers. 

In \OurSeq, we prune each layer sequentially, starting from the earliest layer to the latest one. For each layer $\ell$, we apply our method with $A^\ell$ replaced by the \emph{updated} activations $B^\ell$ after having pruned previous layers, i.e., we solve $\min_{|S| \leq k, \tilde{W}^{\ell+1} \in \R^{n_\ell \times n_{\ell+1}}} \| B^\ell W^{\ell+1} -  B^\ell_S \tilde{W}^{\ell+1}\|_F^2 $. 
In \OurAsym, we also prune each layer sequentially, but
to avoid the accumulation of error, 
we use an asymmetric formulation of the reweighted input change, where instead of approximating the \emph{updated} input $B^\ell W^{\ell+1}$, 
we approximate the \emph{original} input $A^\ell W^{\ell+1}$,  i.e., we solve $\min_{|S| \leq k, \tilde{W}^{\ell+1} \in \R^{n_\ell \times n_{\ell+1}}} \| A^\ell W^{\ell+1} -  B^\ell_S \tilde{W}^{\ell+1}\|_F^2 $. This problem is still a weakly submodular maximization problem, with the same submodularity ratio given in Propositions \ref{prop:WeaklySub} and \ref{prop:WeaklySubChannels}, with $A^\ell$ replaced by $B^\ell$ therein (see \cref{app:WeakSub-proofs}). Hence, the same approximation guarantee as in the symmetric formulation holds here. Moreover, a better approximation guarantee can again be obtained 
under stronger assumptions (see \cref{app:weakDRsub}).
The cost of running \Greedy with the asymmetric formulation  and reweighting is also the same as before (see \cref{app:cost-proofs}).


In \cref{sec:errorbds}, we show that the sequential variants of our method both have an exponential error rate, which is faster for the asymmetric variant. We evaluate all three variants in our experiments. As expected, \OurAsym usually performs the best, and \OurLayer the worst. 

\subsection{Per-layer budget selection}\label{sec:fractionSel}
Another important design choice is how much to prune in each layer, given a desired global compression ratio (see \cref{app:fractionSel} for the effect of this choice on performance). 
In our experiments, we use the budget selection method introduced in \citep[Section 3.4.1]{Kuzmin2019}, which can be applied to any layerwise pruning method, thus enabling us to have a fair comparison.

Given a network with $L$ layers to prune, let $c = \tfrac{\text{original size}}{\text{pruned size}}$ be the desired compression ratio.
We want to select for each layer $\ell$, the number of neurons/channels $k_\ell = \alpha_{\ell} n_\ell$ to keep, with $\alpha_{\ell}$ chosen from a fixed set of possible values, e.g., $\alpha_\ell \in \{0.05, 0.1, \cdots, 1\}$. We define a layerwise accuracy metric $P_\ell(k_\ell)$ as the accuracy obtained after pruning layer $\ell$, with a budget $k_{\ell}$, while other layers are kept intact, evaluated on a verification set. We set aside a subset of the training set to use as a verification set. Let $P_{\text{orig}}$ be the original model accuracy, $C_{\text{orig}}$ the original model size, and $C(k_1, \cdots, k_L)$ the pruned model size. We select the per-layer budgets that minimize the per-layer accuracy drop while satisfying the required compression ratio:
\begin{align}\label{eq:BudgetSelProb}
\min_{k_1, \cdots, k_L} \{ \tau : \forall \ell \in [L], P_\ell(k_\ell) \geq P_{\text{orig}} - \tau,   C(k_1, \cdots, k_L) \leq C_{\text{orig}} / c \}.
\end{align} 
We can solve the selection problem \eqref{eq:BudgetSelProb} using binary search, if the layerwise accuracy $P_\ell(k_\ell)$ is a non-decreasing function of $k_\ell$. Empirically, this is not always the case, the general trend is non-decreasing, but some fluctuations occur. In such cases, we use interpolation to ensure monotonicity. 

Alternatively, another simple strategy is to prune each layer until the perlayer error (the reweighted input change in our case) reaches some threshold $\epsilon$, and vary $\epsilon$ to obtain the desired compression ratio, as done in \citep{Zhuang2018, Ye2020}. 

\section{Error convergence rate}\label{sec:errorbds}

In this section, we provide the error rate of our proposed method. The omitted proofs are given in \cref{app:errorbds}. 
 We first show that the change in input to the next layer induced by pruning with our method, with both the symmetric and asymmetric formulation, decays with exponentially fast rate.
\begin{restatable}{proposition}{layerError}\label{prop:layerError}
Let $\hat{S}$ be the output of the \Greedy algorithm and $\hat{W}^{\ell+1}$ the corresponding optimal weights (Eq. \eqref{eq:optW}), then 
$$\| A^\ell W^{\ell+1} - A^\ell_{\hat{S}} \hat{W}^{\ell+1}  \|_F^2 \leq e^{- \gamma_{\hat{S}, n_\ell} {k}/{n_\ell}} \| A^\ell W^{\ell+1}\|_F^2,$$
and
$$\| A^\ell W^{\ell+1} - B^\ell_{\hat{S}} \hat{W}^{\ell+1}  \|_F^2 \leq e^{- \gamma_{\hat{S}, n_\ell} {k}/{n_\ell}} \| A^\ell W^{\ell+1}\|_F^2 + (1 - e^{- \gamma_{\hat{S}, n_\ell} {k}/{n_\ell}}) \min_{\tilde{W}^{\ell+1} \in \R^{n_\ell \times n_{\ell+1}}} \| A^\ell W^{\ell+1} - B^\ell \tilde{W}^{\ell+1}  \|_F^2 $$
\end{restatable}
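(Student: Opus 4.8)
The plan is to reduce both inequalities to one greedy guarantee stated relative to the full ground-set value, then obtain the asymmetric bound by a short rearrangement. First I would record two identities linking the statement to $F$. Since choosing $\tilde W^{\ell+1}=W^{\ell+1}$ gives zero reconstruction error on the full set, $F(V_\ell)=\|A^\ell W^{\ell+1}\|_F^2$; and by the definition of $F$ with the optimal weights $\hat W^{\ell+1}$ of Eq.~\eqref{eq:optW}, the reweighted input change satisfies $\|A^\ell W^{\ell+1}-A^\ell_{\hat S}\hat W^{\ell+1}\|_F^2 = \|A^\ell W^{\ell+1}\|_F^2 - F(\hat S) = F(V_\ell)-F(\hat S)$. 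Hence the first (symmetric) bound is equivalent to the guarantee $F(\hat S)\ge (1-e^{-\gamma_{\hat S,n_\ell}k/n_\ell})F(V_\ell)$.

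To prove this guarantee I would run the standard weakly-submodular greedy analysis. Writing $S_j$ for the greedy set after $j$ steps and $\delta_j=F(V_\ell)-F(S_j)$, I apply $\gamma_{\hat S,n_\ell}$-weak submodularity (\cref{def:weaklySub}) with $L=S_j$ and $S=V_\ell\setminus S_j$; this is valid because every intermediate set obeys $S_j\subseteq\hat S$ and $|V_\ell\setminus S_j|\le n_\ell$. This yields $\gamma_{\hat S,n_\ell}\,\delta_j\le\sum_{i\in V_\ell\setminus S_j}F(i\mid S_j)$, and since greedy selects the largest marginal gain and $F$ is non-decreasing, the right-hand side is at most $n_\ell\,(F(S_{j+1})-F(S_j))$. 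This gives the recursion $\delta_{j+1}\le(1-\gamma_{\hat S,n_\ell}/n_\ell)\delta_j$; unrolling from $\delta_0=F(V_\ell)$ (normalization) and using $1-x\le e^{-x}$ yields $\delta_k\le e^{-\gamma_{\hat S,n_\ell}k/n_\ell}F(V_\ell)$, which is the symmetric bound.

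For the asymmetric bound I would repeat the same argument for $\tilde F(S)=\|A^\ell W^{\ell+1}\|_F^2-\min_{\tilde W}\|A^\ell W^{\ell+1}-B^\ell_S\tilde W^{\ell+1}\|_F^2$, which is normalized, non-decreasing, and $\gamma_{\hat S,n_\ell}$-weakly submodular (with $A^\ell$ replaced by $B^\ell$, as noted in \cref{sec:variants}). The only change is that the full set no longer reconstructs perfectly: with $\epsilon^\star=\min_{\tilde W}\|A^\ell W^{\ell+1}-B^\ell\tilde W^{\ell+1}\|_F^2$ we have $\tilde F(V_\ell)=\|A^\ell W^{\ell+1}\|_F^2-\epsilon^\star$. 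The identical recursion gives $\tilde F(\hat S)\ge(1-e^{-\gamma_{\hat S,n_\ell}k/n_\ell})\tilde F(V_\ell)$; substituting into $\|A^\ell W^{\ell+1}-B^\ell_{\hat S}\hat W^{\ell+1}\|_F^2=\|A^\ell W^{\ell+1}\|_F^2-\tilde F(\hat S)$ and collecting the $\epsilon^\star$ terms gives precisely $e^{-\gamma_{\hat S,n_\ell}k/n_\ell}\|A^\ell W^{\ell+1}\|_F^2+(1-e^{-\gamma_{\hat S,n_\ell}k/n_\ell})\epsilon^\star$.

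I expect the only genuine subtlety to be justifying that a single ratio $\gamma_{\hat S,n_\ell}$ may be used at every step: this relies on the observation that each intermediate $S_j$ lies inside $\hat S=U$, so the inequality of \cref{def:weaklySub} applies directly with $L=S_j$ and budget $n_\ell$, with no separate monotonicity property of the submodularity ratio needed. The remainder is the textbook geometric-decay unrolling together with one line of algebra for the asymmetric case.
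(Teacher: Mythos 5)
Your proposal is correct and follows essentially the same route as the paper: both reduce the claim to the extended \Greedy guarantee $F(\hat S)\ge (1-e^{-\gamma_{\hat S,n_\ell}k/n_\ell})\max_{|S|\le n_\ell}F(S)$ with $\max_{|S|\le n_\ell}F(S)=F(V_\ell)=\|A^\ell W^{\ell+1}\|_F^2$ (and the analogous statement for $\asymF$, whose full-set value absorbs the residual $\min_{\tilde W^{\ell+1}}\|A^\ell W^{\ell+1}-B^\ell\tilde W^{\ell+1}\|_F^2$), and then rearrange. The only difference is that the paper invokes this extension as ``a slight adaptation of the proof in \citep{Elenberg2016,Das2011}'' while you write out that adaptation explicitly (the geometric-decay recursion with budget $n_\ell$, including the correct justification that each intermediate greedy set $S_j\subseteq\hat S$ licenses the use of the single ratio $\gamma_{\hat S,n_\ell}$), which makes your version self-contained but not a different argument.
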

This follows by extending the approximation guarantee of \Greedy in \citep{Elenberg2016, Das2011} to $F(\hat{S}) \geq (1 - e^{- \gamma_{\hat{S}, n_\ell} {k}/{n_\ell}}) \max_{|S|\leq n_\ell} F(S)$. Note that this bounds uses the submodularity ratio $\gamma_{\hat{S}, n_\ell}$, for which the lower bound in \cref{prop:WeaklySub} is non-zero only if \emph{all} columns of $A^\ell$ are linearly independent, which is more restrictive.  Though as discussed earlier, this bound is not necessarily tight.
We can further extend this exponential layerwise error rate to an exponentially rate on the final output error, if we assume as in \citep{Ye2020b} that the function corresponding to all layers coming after layer $\ell$ is Lipschitz continuous. 

\begin{corollary}\label{corr:globalError-onelayer}
Let $y \in \R^n$ be the original model output, $y^{\hat{S}} \in \R^n$ the output after layer $\ell$ is pruned using our method, and $H$ the function corresponding to all layers coming after layer $\ell$, i.e., $y = H(A^\ell W^{\ell+1}),  y^{\hat{S}} = H(A^\ell_{\hat{S}} \hat{W}^{\ell+1})$. 
If $H$ is Lipschitz continuous with constant $\| H\|_{\text{Lip}}$, then  
 $$\| y - y^{\hat{S}}\|_2^2 \leq e^{- \gamma_{\hat{S}, n_\ell} {k}/{n_\ell}} \| H\|_{\text{Lip}}^2  \| A^\ell W^{\ell+1}\|_F^2.$$
\end{corollary}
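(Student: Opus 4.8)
The plan is to combine the layerwise error bound from \cref{prop:layerError} with the Lipschitz assumption on $H$ in a single short chain of inequalities; no new machinery is needed. First I would unfold the definitions $y = H(A^\ell W^{\ell+1})$ and $y^{\hat{S}} = H(A^\ell_{\hat{S}} \hat{W}^{\ell+1})$ and apply the Lipschitz property of $H$ directly, treating the inputs to $H$ as matrices in $\R^{n \times n_{\ell+1}}$ measured in Frobenius norm and the outputs as vectors in $\R^n$ measured in $\ell_2$-norm. This yields
$$\| y - y^{\hat{S}}\|_2 = \| H(A^\ell W^{\ell+1}) - H(A^\ell_{\hat{S}} \hat{W}^{\ell+1})\|_2 \leq \| H\|_{\text{Lip}} \, \| A^\ell W^{\ell+1} - A^\ell_{\hat{S}} \hat{W}^{\ell+1}\|_F.$$

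Next I would square both sides to match the squared quantities in the statement, obtaining $\| y - y^{\hat{S}}\|_2^2 \leq \| H\|_{\text{Lip}}^2 \, \| A^\ell W^{\ell+1} - A^\ell_{\hat{S}} \hat{W}^{\ell+1}\|_F^2$. Finally I would substitute the first (symmetric) bound of \cref{prop:layerError}, namely $\| A^\ell W^{\ell+1} - A^\ell_{\hat{S}} \hat{W}^{\ell+1}\|_F^2 \leq e^{- \gamma_{\hat{S}, n_\ell} k/n_\ell} \| A^\ell W^{\ell+1}\|_F^2$, into this inequality and collect the constant factors. This produces exactly $\| y - y^{\hat{S}}\|_2^2 \leq e^{- \gamma_{\hat{S}, n_\ell} k/n_\ell} \| H\|_{\text{Lip}}^2 \| A^\ell W^{\ell+1}\|_F^2$, which is the claimed bound. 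Note that the symmetric (rather than asymmetric) bound is the relevant one here, since the pruned output is defined through $A^\ell_{\hat{S}} \hat{W}^{\ell+1}$ using the original activations $A^\ell_{\hat{S}}$.

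The argument is essentially a two-step composition, so there is no genuine analytic obstacle once \cref{prop:layerError} is available. The single point requiring care is the bookkeeping of norms in the Lipschitz hypothesis: since $H$ ingests a matrix (the batched pre-activations fed into layer $\ell+1$) and emits a vector (the batched model outputs), the constant $\| H\|_{\text{Lip}}$ must be interpreted with respect to the Frobenius norm on its domain and the $\ell_2$-norm on its codomain, so that the Lipschitz step lines up cleanly with the Frobenius-norm quantity controlled by \cref{prop:layerError}. An analogous corollary for the sequential (asymmetric) variant would follow by the same two steps, instead inserting the second bound of \cref{prop:layerError} in the final substitution.
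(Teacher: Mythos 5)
Your proposal is correct and follows essentially the same route as the paper's proof: apply the Lipschitz property of $H$ to bound $\| y - y^{\hat{S}}\|_2^2$ by $\| H\|_{\text{Lip}}^2 \| A^\ell W^{\ell+1} - A^\ell_{\hat{S}} \hat{W}^{\ell+1}\|_F^2$, then invoke the symmetric bound of \cref{prop:layerError}. Your added remark on interpreting $\| H\|_{\text{Lip}}$ with respect to the Frobenius norm on the domain and the $\ell_2$-norm on the codomain is a clarification the paper leaves implicit, but it does not change the argument.
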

\begin{proof}
Since $H$ is Lipschitz continuous, we have 
$\| y - y^{\hat{S}}\|_2^2 \leq \| H\|_{\text{Lip}}^2 \| A^\ell W^{\ell+1} - A^\ell_{\hat{S}} \hat{W}^{\ell+1}  \|_F^2$. The claim then follows from \cref{prop:layerError}.
\end{proof}
This matches the exponential convergence rate achieved by the local imitation method in \citep[Theorem 1]{Ye2020b}, albeit with a different constant. Under the same assumption, we can show that pruning multiple layers with the sequential variants of our method, \OurSeq and  \OurAsym, also admits an exponential convergence rate:

\begin{restatable}{corollary}{globalError}\label{corr:globalError-multiplelayers}
Let $y \in \R^n$ be the original model output, $y^{\hat{S}_\ell}, {y}^{\tilde{S}_\ell} \in \R^n$ the outputs after layers $1$ to $\ell$ are sequentially pruned using \OurSeq and  \OurAsym, respectively, and $H_\ell$ the function corresponding to all (unpruned) layers coming after layer $\ell$. 
If every function $H_\ell$ is Lipschitz continuous with constant $\| H_\ell \|_{\text{Lip}}$, then  
 $$\| y - y^{\hat{S}_L}\|_2^2 \leq \sum_{\ell=1}^L e^{- \gamma_{\hat{S}_\ell, n_\ell} {k_\ell}/{n_\ell}} \| H_\ell\|_{\text{Lip}}^2  \| A^\ell W^{\ell+1}\|_F^2,$$
 and \vspace{-15pt}
$$\| y - y^{\tilde{S}_L}\|_2^2 \leq \sum_{\ell=1}^L \prod_{\ell'=\ell+1}^L (1- e^{- \gamma_{\tilde{S}_{\ell'}, n_{\ell'}} {k_{\ell'}}/{n_{\ell'}}}) e^{- \gamma_{\tilde{S}_\ell, n_\ell} {k_\ell}/{n_\ell}} \| H_\ell\|_{\text{Lip}}^2  \| A^\ell W^{\ell+1}\|_F^2.$$
\end{restatable}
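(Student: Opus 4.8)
The plan is to prove both inequalities by iterating the single-layer argument of \cref{corr:globalError-onelayer} across layers: isolate the error that each pruned layer injects into the input of the next layer and push it to the output through the Lipschitz map $H_\ell$. Write $u^\ell := A^\ell W^{\ell+1}$ for the original input to layer $\ell+1$ and $a_\ell := \gamma_{\hat S_\ell,n_\ell}k_\ell/n_\ell$ (resp. with $\tilde S_\ell$). The structural fact I would use repeatedly is that for every $\ell$ the original output factors through layer $\ell$ as $y = H_\ell(u^\ell)$, whereas after sequentially pruning layers $1,\dots,\ell$ the output factors as $y^{\hat S_\ell}=H_\ell(B^\ell_{\hat S_\ell}\hat W^{\ell+1})$ (and analogously with $\tilde S_\ell$), where $B^\ell$ are the updated activations of layer $\ell$. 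In particular the hybrid obtained by pruning only layers $1,\dots,\ell-1$ satisfies $y^{\hat S_{\ell-1}} = H_\ell(B^\ell W^{\ell+1})$, since there layer $\ell$ is intact. These identities turn every quantity of interest into the image under $H_\ell$ of an input-level perturbation that \cref{prop:layerError} already controls.

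For \OurSeq I would telescope over the hybrids $z_\ell := y^{\hat S_\ell}$ (so $z_0 = y$, $z_L = y^{\hat S_L}$), whose consecutive differences isolate the pruning of a single layer. Using the Lipschitz property of $H_\ell$ and then the first (symmetric) inequality of \cref{prop:layerError}, with the activations understood as the updated $B^\ell$ per the convention of \OurSeq, I would bound each $\|z_{\ell-1}-z_\ell\|_2^2 \le e^{-a_\ell}\|H_\ell\|_{\text{Lip}}^2\|A^\ell W^{\ell+1}\|_F^2$. Combining these per-layer contributions into the stated bound on $\|y-y^{\hat S_L}\|_2^2 = \|z_0-z_L\|_2^2$, starting from $\|z_0-z_L\|_2 \le \sum_\ell \|z_{\ell-1}-z_\ell\|_2$, then yields the first displayed inequality.

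For \OurAsym the damping factors $\prod_{\ell'>\ell}(1-e^{-a_{\ell'}})$ come from the second (asymmetric) inequality of \cref{prop:layerError}. I would set $\tilde\delta_\ell := \|y - y^{\tilde S_\ell}\|_2$ and combine $\|y - y^{\tilde S_\ell}\|_2 \le \|H_\ell\|_{\text{Lip}}\|u^\ell - B^\ell_{\tilde S_\ell}\tilde W^{\ell+1}\|_F$ with that inequality, which splits the error into a fresh term $e^{-a_\ell}\|H_\ell\|_{\text{Lip}}^2\|u^\ell\|_F^2$ and a carried term $(1-e^{-a_\ell})\|H_\ell\|_{\text{Lip}}^2 r_\ell^2$, where $r_\ell^2 := \min_{\tilde W}\|u^\ell - B^\ell\tilde W^{\ell+1}\|_F^2$. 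The goal is to close the recursion
\[ \tilde\delta_\ell^2 \le e^{-a_\ell}\|H_\ell\|_{\text{Lip}}^2\|u^\ell\|_F^2 + (1-e^{-a_\ell})\,\tilde\delta_{\ell-1}^2, \]
whose unrolling from $\tilde\delta_0 = 0$ produces precisely the second displayed bound; the coefficient $(1-e^{-a_\ell})$ on the carried term is exactly what generates the product $\prod_{\ell'=\ell+1}^L(1-e^{-a_{\ell'}})$.

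The step I expect to be the crux is bounding the carried term, i.e. establishing $\|H_\ell\|_{\text{Lip}}^2 r_\ell^2 \le \tilde\delta_{\ell-1}^2$ so that the recursion closes. Choosing $\tilde W = W^{\ell+1}$ gives $r_\ell \le \|u^\ell - B^\ell W^{\ell+1}\|_F$, and $u^\ell - B^\ell W^{\ell+1}$ is exactly the input-level perturbation that $H_\ell$ maps to $y - y^{\tilde S_{\ell-1}}$. The difficulty is that $H_\ell$ is only assumed upper-Lipschitz, so both $\|H_\ell\|_{\text{Lip}}\,r_\ell$ and $\tilde\delta_{\ell-1}$ are bounded above by the \emph{same} quantity $\|H_\ell\|_{\text{Lip}}\|u^\ell - B^\ell W^{\ell+1}\|_F$, and obtaining the inequality in the needed direction — the carried error no larger than the already-accumulated output error, with the clean constant $\|H_\ell\|_{\text{Lip}}^2$ rather than a product of single-layer constants — is the delicate part. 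The same output-level bookkeeping, in particular passing from per-layer squared errors to the total squared error without incurring cross terms, is what must be handled carefully in the symmetric case as well. Once the two recursions are in place, both bounds follow by a routine unrolling from $\tilde\delta_0 = \delta_0 = 0$.
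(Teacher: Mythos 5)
Your \OurSeq argument coincides with the paper's own proof: telescope over the hybrids $z_\ell = y^{\hat S_\ell}$ (with $z_0 = y$), use the identities $y^{\hat S_{\ell-1}} = H_\ell(B^\ell W^{\ell+1})$ and $y^{\hat S_\ell} = H_\ell(B^\ell_{\hat S_\ell}\hat W^{\ell+1})$, and apply the symmetric inequality of \cref{prop:layerError} with the updated activations $B^\ell$. You also inherit the paper's two loose points: passing from $\|y-y^{\hat S_L}\|_2^2$ to $\sum_\ell \|z_{\ell-1}-z_\ell\|_2^2$ is not the triangle inequality (for $L$ summands it costs a factor of $L$ in general, which you flag but do not resolve, and which the paper asserts without justification), and the per-layer bound that \cref{prop:layerError} actually produces is $e^{-a_\ell}\|H_\ell\|_{\text{Lip}}^2\|B^\ell W^{\ell+1}\|_F^2$ with $B^\ell$, not the $A^\ell$ appearing in the statement; the paper's own proof also ends with $B^\ell$.

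The genuine gap is in your \OurAsym argument, precisely at the step you single out as the crux: closing your recursion requires $\|H_\ell\|_{\text{Lip}}^2\, r_\ell^2 \le \tilde\delta_{\ell-1}^2$, and this is not just hard to prove, it is false in general. Lipschitz continuity only bounds the output error from above, $\tilde\delta_{\ell-1}\le \|H_\ell\|_{\text{Lip}}\|u^\ell - B^\ell W^{\ell+1}\|_F$; it can never bound it from below (take $H_\ell$ constant: then $\tilde\delta_{\ell-1}=0$ while $r_\ell$ may be positive). So no recursion carried on the output errors $\tilde\delta_\ell$ can be closed under the stated hypotheses. The paper avoids this entirely by recursing on the \emph{feature-level} quantity $E_\ell := \|H_\ell\|_{\text{Lip}}^2\,\|A^\ell W^{\ell+1} - B^\ell_{\tilde S_\ell}\tilde W^{\ell+1}\|_F^2$ and touching the output only once, at the very end, via $\|y - y^{\tilde S_L}\|_2^2 \le E_L$. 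Concretely, after bounding $r_\ell \le \|A^\ell W^{\ell+1} - B^\ell W^{\ell+1}\|_F$ as you do, it introduces the inter-layer map $H_{\ell-1}^{\ell}$ satisfying $H_{\ell-1} = H_\ell\circ H_{\ell-1}^{\ell}$, writes $A^\ell W^{\ell+1} - B^\ell W^{\ell+1} = H_{\ell-1}^{\ell}(A^{\ell-1}W^{\ell}) - H_{\ell-1}^{\ell}(B^{\ell-1}_{\tilde S_{\ell-1}}\tilde W^{\ell})$, and uses the Lipschitz bound of $H_{\ell-1}^{\ell}$ together with the factorization $\|H_\ell\|_{\text{Lip}}\|H_{\ell-1}^{\ell}\|_{\text{Lip}} = \|H_{\ell-1}\|_{\text{Lip}}$ (asserted as an identity in the paper; sub-multiplicativity of composition only gives the inequality in the opposite direction, so this is effectively an extra assumption) to dominate the carried term by $(1-e^{-a_\ell})E_{\ell-1}$. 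This yields $E_\ell \le e^{-a_\ell}\|H_\ell\|_{\text{Lip}}^2\|A^\ell W^{\ell+1}\|_F^2 + (1-e^{-a_\ell})E_{\ell-1}$, which unrolls to the stated product bound because the layer-$1$ carried term vanishes ($B^1=A^1$). The missing idea, in one sentence: compare the carried term with the previous layer's \emph{feature} discrepancy transported through the decomposition $H_{\ell-1} = H_\ell\circ H_{\ell-1}^{\ell}$, never with the accumulated \emph{output} error.
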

The result is obtained by iteratively applying \cref{prop:layerError} to the error incurred after each layer is pruned. The rate of \OurSeq matches the exponential convergence rate achieved by the local imitation method in \citep[Theorem 6]{Ye2020b}. The bound on \OurAsym is stronger, confirming that the asymmetric formulation indeed reduces the accumulation of errors.

\section{Empirical Evaluation}\label{sec:exps}
\mtodo{check shrinkbench paper for recommendations for good experimental results}
In this section, we examine the performance of our proposed pruning method in the limited-data regime. To that end, we focus on 
one-shot pruning, in which a pre-trained model is compressed in a single step, without any fine-tuning. We study the effect of fine-tuning with both limited and sufficient data in \cref{app:finetuning}.
We compare the three variants of our method, \OurLayer, \OurSeq, and \OurAsym, with the following baselines:
\begin{itemize}[leftmargin=1em, itemindent=1em]
\item \looseness=-1 \LayerGreedyFS \citep{Ye2020}:  for each layer, 
first removes all neurons/channels in that layer, then 
gradually adds back the neuron/channel that yields the largest decrease of the
loss, evaluated on one batch of training data. Layers are pruned sequentially from the input to the output layer. 
\item \looseness=-1 \LayerSampling \citep{Liebenwein2020}: samples neurons/channels, in each layer, with probabilities proportional to sensitivities based on (activations $\times$ weights), and prunes the rest. 

\item \ActGrad \citep{Molchanov2017}: prunes neurons/channels with the lowest (activations $\times$ gradients), averaged over the training data, with layerwise $\ell_2$-normalization. 
\item \LayerActGrad: prunes neurons/channels with the lowest (activations $\times$ gradients), averaged over the training data, in each layer. This is the layerwise variant of \ActGrad.

\item \LayerWeight \citep{Li2017}: prunes neurons/channels with the lowest output weights $\ell_1$-norm, in each layer. 

\item \Rand: prunes randomly selected neurons/channels globally across layers in the network.
\item \LayerRand: prunes randomly selected neurons/channels in each layer.
\end{itemize}

\looseness=-1 We also considered the global variant of \LayerWeight proposed in \citep{He2014}, but we exclude it from plots, as it is always the worst performing method. We evaluate the performance of these methods on the LeNet model \citep{LeCun1989} on the MNIST dataset \citep{Lecun1998}, and on the ResNet56 \citep{He2016} and the VGG11 \citep{Simonyan2015} models  on the CIFAR-10 dataset \citep{Krizhevsky2009}. 
To ensure a fair comparison, all experiments are based on our own implementation of 
all the compared methods.  
To compute the gradients and activations used for pruning in \LayerSampling, \ActGrad, \LayerActGrad, and our method's variants, we use four batches of $128$ training images, i.e.,  $n= 512$, which corresponds to $\sim 1 \%$ of the training data in MNIST and CIFAR10. 
We consider two variants of the method proposed in \citep{Ye2020}: a limited-data variant \LayerGreedyFS which only uses the same four batches of data used in our method, and a full-data variant \LayerGreedyFS-fd with access to the full training data.

\looseness=-1 We report top-1 accuracy results evaluated on the validation set, as we vary the compression ratio ($\tfrac{\text{original size}}{\text{pruned size}}$). 
 Unless otherwise specified, we use the per-layer budget selection method described in \cref{sec:fractionSel} for all the layerwise pruning methods, except for \LayerSampling for which we use its own budget selection strategy provided in \citep{Liebenwein2020}. We use a subset of the training set, of the same size as the validation set, as a verification set for the budget selection method. 
To disentangle the benefit of using our pruning method from the benefit of reweighting (\cref{sec:RW}), we report results with reweighting applied to all pruning methods, or none of them. Though, we will focus our analysis on the more interesting results with reweighting, with the plots without reweighting mostly serving as a demonstration of the benefit of reweighting.
Results are averaged over five random runs, with standard deviations plotted as error bars. 
We report the speedup ($\tfrac{\text{original number of FLOPs}}{\text{pruned number of FLOPs}}$) and pruning time values in \cref{app:othermetrics}.  
For additional details on the experimental set-up, see \cref{app:setup}. The code for reproducing all experiments is 
available at \url{https://github.com/marwash25/subpruning}.

\paragraph{LeNet on MNIST}\label{sec:LeNet-MNIST}
\looseness=-1 We pre-train LeNet model on MNIST 
achieving $97.75\%$ top-1 accuracy. We prune all layers except the last classifier layer. Results are presented in Figure \ref{fig:oneshot} (left).
All three variants of our method consistently outperform other baselines, even when reweighting is applied to them, with \OurAsym doing the best and \OurLayer the worst. We observe that reweighting significantly improves the performance of all methods except \LayerGreedyFS variants.

\paragraph{ResNet56 on CIFAR-10}\label{sec:ResNet56-CIFAR10}
\looseness=-1  We use the ResNet56 model pre-trained on CIFAR-10 provided in ShrinkBench \citep{Blalock2020}, which achieves $92.27\%$ top-1 accuracy. We prune all layers except the last layer in each residual branch, the last layer before each residual branch, and the last classifier layer. 
Results are presented in Figure \ref{fig:oneshot} (middle).
The sequential variants of our method perform the best. Their performance is closely matched by  \LayerWeight and \ActGrad (with reweighting) for most compression ratios, except very large ones.  
\OurLayer performs significantly worst here than the sequential variants of our method. This is likely due to the larger number of layers pruned in ResNet56 compared to LeNet (27 vs 4 layers), which increases the effect of pruning earlier layers on subsequent ones. 
Here also reweighting improves the performance of all methods except the \LayerGreedyFS variants.

\paragraph{VGG11 on CIFAR-10}\label{sec:VGG11-CIFAR10}
\looseness=-1  We pre-train VGG11 model on CIFAR-10 
obtaining $90.11\%$ top-1 accuracy. We prune all layers except the last features layer and the last classifier layer. 
Results are presented in Figure \ref{fig:oneshot} (right).
The three variants of our method perform the best. Their performance is matched by \ActGrad and \LayerWeight (with reweighting). 
\OurLayer performs similarly to the sequential variants of our method here, even slightly better at compression ratio $32$, probably because the number of layers being pruned is again relatively small (9 layers). As before, reweighting benefits all methods except the \LayerGreedyFS variants.

\begin{figure}
\vspace{-25pt}
\begin{subfigure}{.31\textwidth}
 \centering
  \hspace*{-5pt}
\includegraphics[trim=65 50 1410 35, clip, scale=0.1]{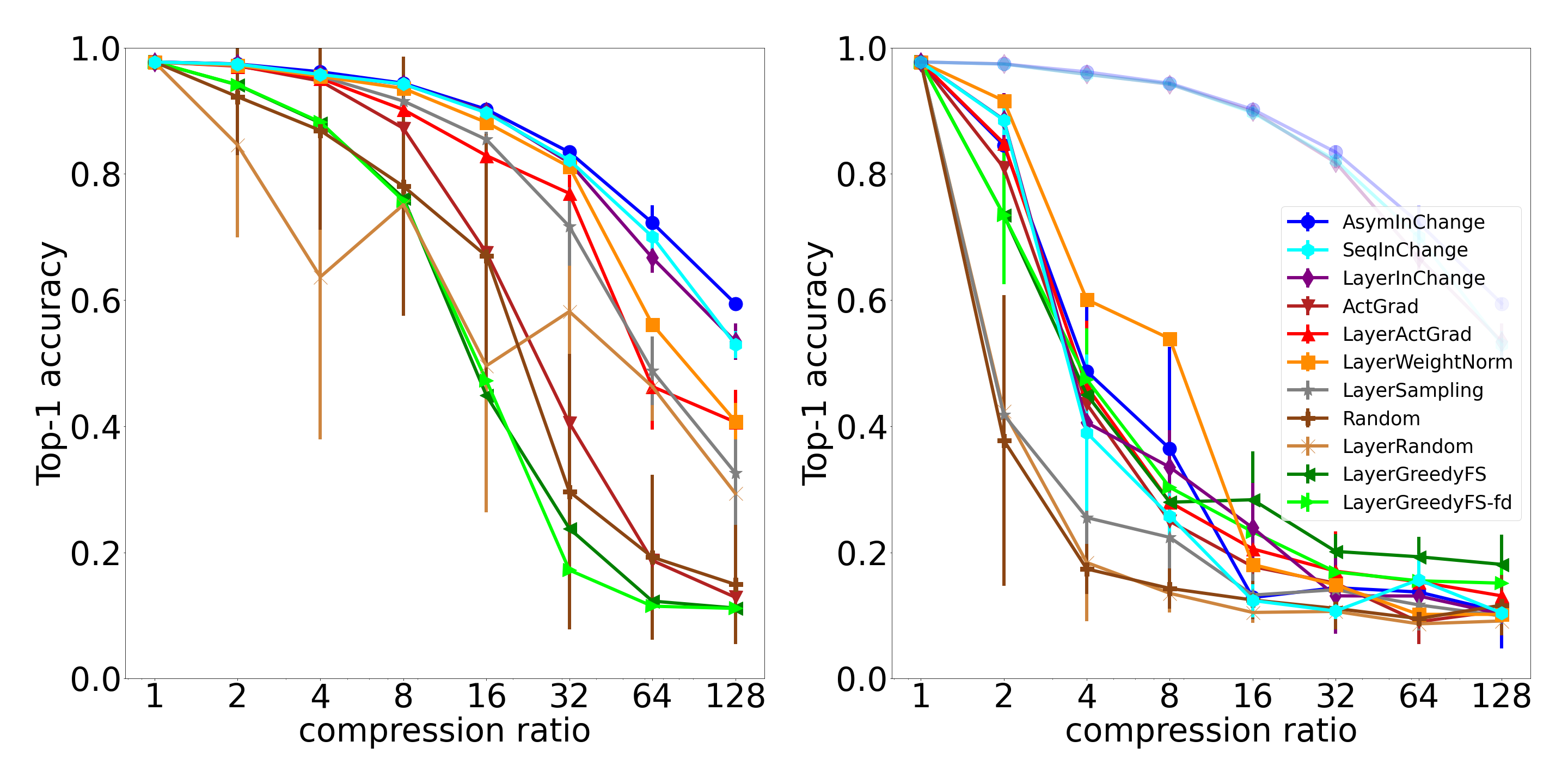}
\end{subfigure}\hfill
\begin{subfigure}{.31\textwidth}
 \centering
\includegraphics[trim=65 50 1410 35, clip, scale=0.1]{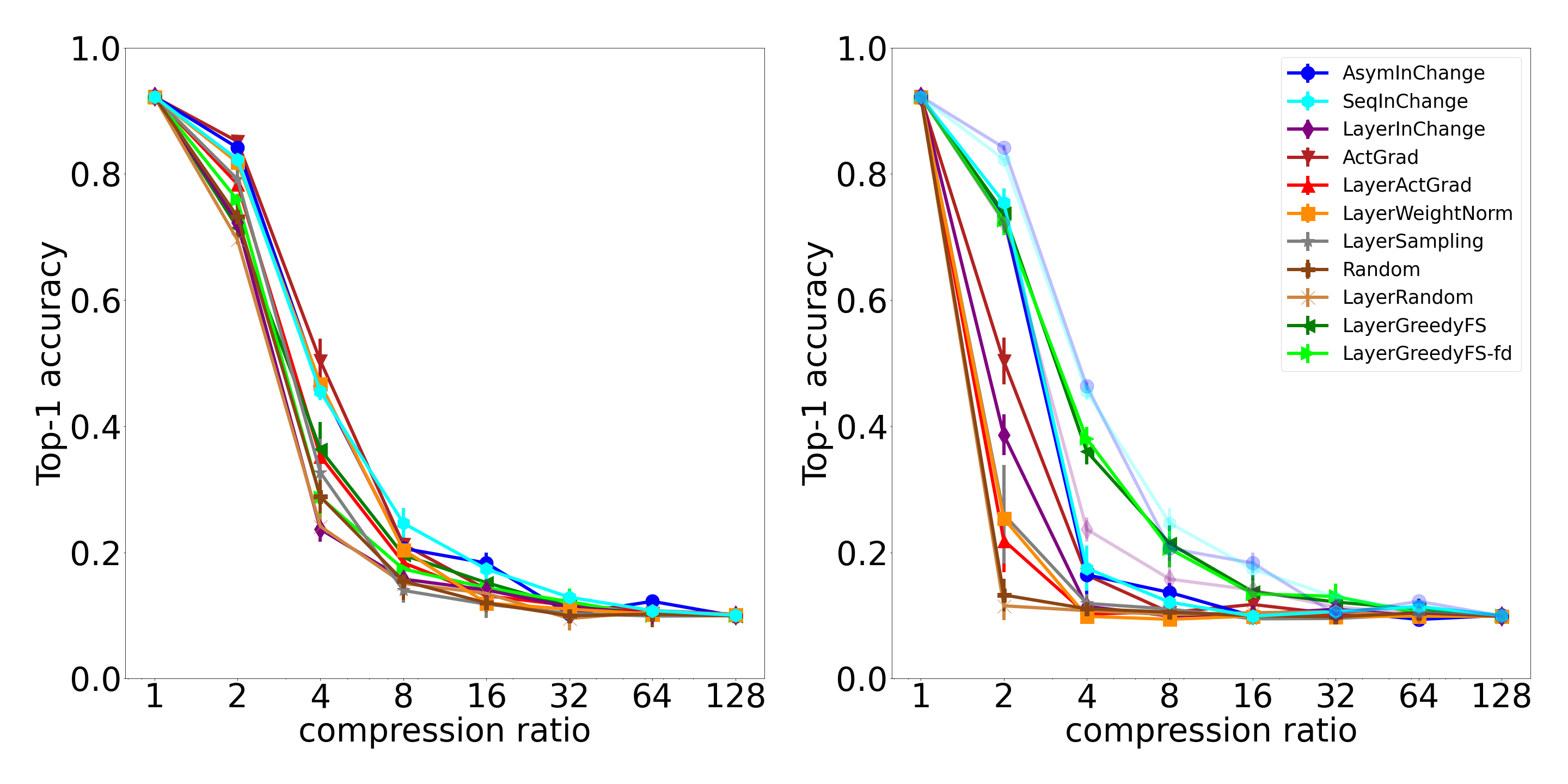}
\end{subfigure}\hfill
\begin{subfigure}{.31\textwidth}
 \centering
\includegraphics[trim=65 50 1410 35, clip, scale=0.1]{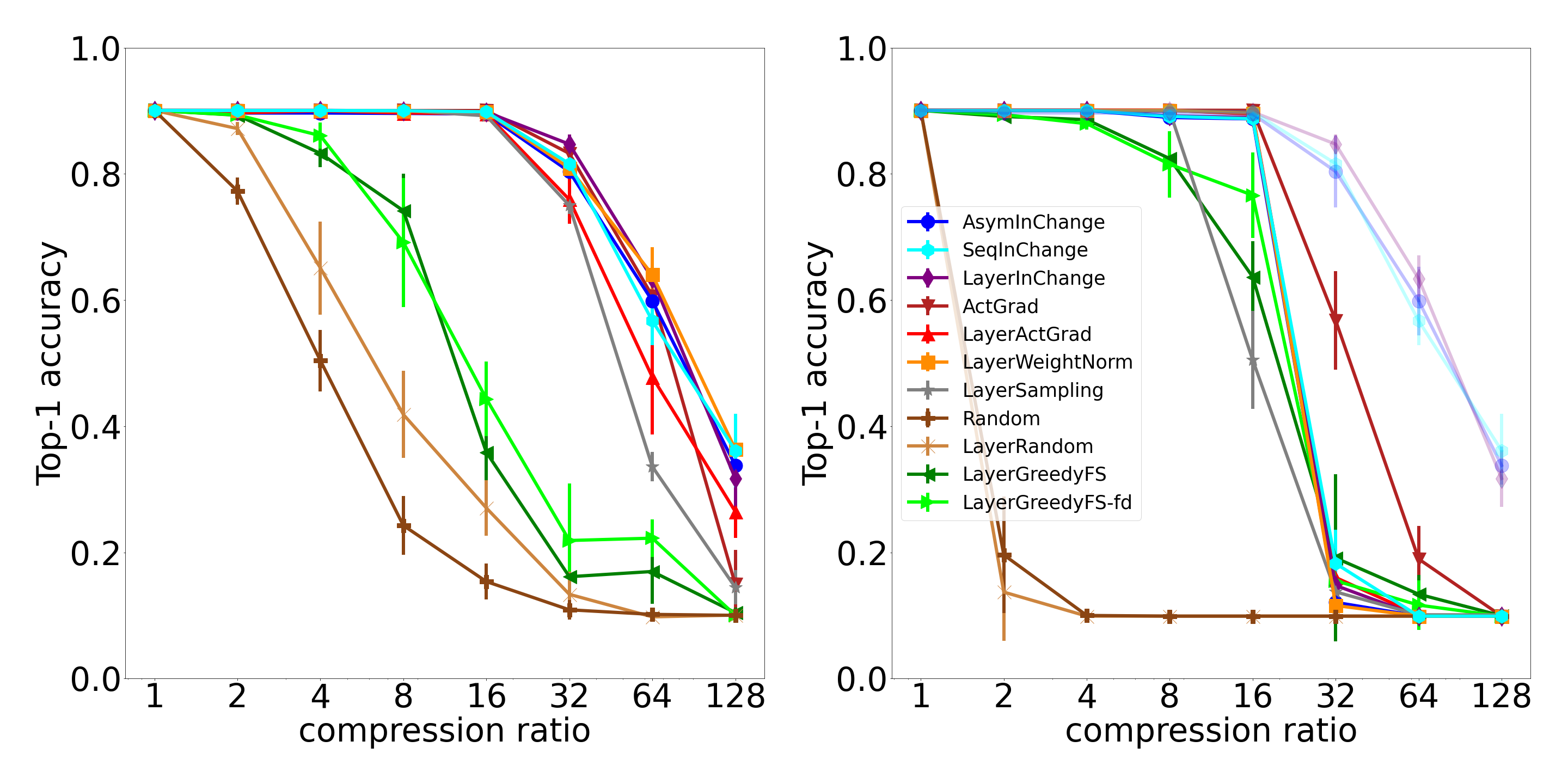}
\end{subfigure}
\\ 
\begin{subfigure}{.31\textwidth}
 \centering
   \hspace*{-5pt}
\includegraphics[trim=1475 50 0 35, clip, scale=0.1]{acc1-compression-1ae094f-2057043-all.png}
\end{subfigure}\hfill
\begin{subfigure}{.31\textwidth}
 \centering
\includegraphics[trim=1475 50 0 35, clip, scale=0.1]{acc1-compression-11b9a94-7862381-all.png}
\end{subfigure}\hfill
\begin{subfigure}{.31\textwidth}
 \centering
\includegraphics[trim=1475 50 0 35, clip, scale=0.1]{acc1-compression-0ebdd32-40042510-all.png}
\end{subfigure}
\caption{\label{fig:oneshot} Top-1 Accuracy of different pruning methods applied to LeNet on MNIST (left), ResNet56 on CIFAR10 (middle), and VGG11 on CIFAR10 (right), for several compression ratios (in log-scale), with (top) and without (bottom) reweighting. We include the three reweighted variants of our method in the bottom plots (faded) for reference.} 
\end{figure}

\mtodo{add smthg similar to "We suspect that these favorable properties are explained by the data-informed
evaluations of filter importance and the corresponding theoretical guarantees of our algorithm – which
enable robustness to variations in network architecture and data distribution."}
\paragraph{Discussion}
We summarize our observations from the empirical results: 

\begin{itemize}[leftmargin=1em, itemindent=0.5em]
\item Our proposed pruning method outperforms state-of-the-art structured pruning methods in various one shot pruning settings. 
As expected, \OurAsym is the best performing variant of our method, and \OurLayer the worst, with its performance deteriorating with deeper models.
Our results also illustrate the robustness of our method, as it reliably yields the best results in various settings, while other baselines perform well in some settings but not in others.
\item 
Reweighting significantly improves performance for all methods, except \LayerGreedyFS and \LayerGreedyFS-fd. We suspect that reweighting does not help in this case because this method already scales the next layer weights, and it takes into account this scaling when selecting neurons/channels to keep, so replacing it with reweighting can hurt performance.  
\item The choice of how much to prune in each layer given a global budget can have a drastic effect on performance, as illustrated in \cref{app:fractionSel}.
\item  \looseness=-1 Fine-tuning with full-training data boosts performance more than reweighting, while fine-tuning with limited data helps less, as illustrated in \cref{app:finetuning}. Reweighting still helps when fine-tuning with limited-data, except for \LayerGreedyFS variants, 
but it can actually deteriorate performance when fine-tuning with full-data. 
Our method still outperforms other baselines after fine-tuning with limited-data, and is among the best performing methods even in the full-data setting.
\end{itemize}
\mtodo{ time will be highly implementation dependent, and we did not try to optimize our method's implementation for efficiency. We give theoretical running time which reflect more accurately the computational cost of our method.}

\section{Conclusion}
\label{sec:Concl}

We proposed a data-efficient structured pruning method, based on submodular optimization. 
By casting the layerwise subset selection problem as a weakly submodular optimization problem, we are able to use the \Greedy algorithm to provably approximate it. 
Empirically, our method consistently outperforms existing structured pruning methods on different network architectures and datasets.

\begin{ack}
\looseness=-1 We thank Stefanie Jegelka, Debadeepta Dey, Jose Gallego-Posada for their helpful discussions, and Yan Zhang, Boris Knyazev for their help with running experiments.
We also acknowledge the MIT SuperCloud and Lincoln Laboratory Supercomputing Center (\url{supercloud.mit.edu}), Compute Canada (\url{www.computecanada.ca}), Calcul Quebec (\url{www.calculquebec.ca}), WestGrid (\url{www.westgrid.ca}), ACENET (\url{ace-net.ca}), the Mila IDT team, Idiap Research Institute and the Machine Learning Research Group at the University of Guelph, for providing HPC resources that have contributed to the research results reported within this paper. This research was partially supported by the Canada CIFAR AI Chair Program. Simon Lacoste-Julien is a CIFAR Associate Fellow in the Learning Machines \& Brains program.
\end{ack}


\bibliographystyle{abbrvnat} 
\bibliography{biblio}

\section*{Checklist}


\begin{enumerate}

\item For all authors...
\begin{enumerate}
  \item Do the main claims made in the abstract and introduction accurately reflect the paper's contributions and scope?
    \answerYes{}
  \item Did you describe the limitations of your work? 
    \answerYes{We specify that our focus is on the limited-data regime in both the abstract and introduction. See also the discussions on lines 143-151 and 207-213 on when our approximation guarantee is non-zero, and on lines 173-181 on the cost of our method.}
  \item Did you discuss any potential negative societal impacts of your work?
    \answerNA{}
  \item Have you read the ethics review guidelines and ensured that your paper conforms to them?
   \answerYes{}
\end{enumerate}

\item If you are including theoretical results...
\begin{enumerate}
  \item Did you state the full set of assumptions of all theoretical results?
  \answerYes{}
        \item Did you include complete proofs of all theoretical results?
  \answerYes{See \cref{app:proofs}}
\end{enumerate}

\item If you ran experiments...
\begin{enumerate}
  \item Did you include the code, data, and instructions needed to reproduce the main experimental results (either in the supplemental material or as a URL)?
  \answerYes{We include the code with instructions on how to reproduce all our experimental results in the supplemental material.}
  \item Did you specify all the training details (e.g., data splits, hyperparameters, how they were chosen)?
   \answerYes{See \cref{sec:exps} and \cref{app:setup}}
        \item Did you report error bars (e.g., with respect to the random seed after running experiments multiple times)?
\answerYes{}
        \item Did you include the total amount of compute and the type of resources used (e.g., type of GPUs, internal cluster, or cloud provider)?
  \answerYes{See \cref{app:setup}}
\end{enumerate}

\item If you are using existing assets (e.g., code, data, models) or curating/releasing new assets...
\begin{enumerate}
  \item If your work uses existing assets, did you cite the creators?
  \answerYes{See \cref{app:setup}}
  \item Did you mention the license of the assets?
\answerYes{The license of the assets we used are included in the code we provide.}
  \item Did you include any new assets either in the supplemental material or as a URL?
    \answerYes{We include our code in the supplemental material.}
  \item Did you discuss whether and how consent was obtained from people whose data you're using/curating?
     \answerNA{}
  \item Did you discuss whether the data you are using/curating contains personally identifiable information or offensive content?
     \answerNA{}
\end{enumerate}

\item If you used crowdsourcing or conducted research with human subjects...
\begin{enumerate}
  \item Did you include the full text of instructions given to participants and screenshots, if applicable?
     \answerNA{}
  \item Did you describe any potential participant risks, with links to Institutional Review Board (IRB) approvals, if applicable?
    \answerNA{}
  \item Did you include the estimated hourly wage paid to participants and the total amount spent on participant compensation?
     \answerNA{}
\end{enumerate}

\end{enumerate}


\appendix

\newpage
\section{Additional details on related work}\label{app:relwork}


In this section, we give a more detailed comparison of our method with that of \citep{Ye2020, Ye2020b}.

\cite{Ye2020} select neurons/channels to keep in a given layer that minimize the loss of the pruned network. More precisely, they are solving $$\min_{\alpha \geq 0, \sum_i \alpha_i = 1} \sum_{i=1}^n [(f^\ell_\alpha(x_i) - y_i)^2],$$ where $(x_i, y_i)$ are data points, $f^\ell_\alpha(x)$ is the output of the model where neurons in layer $\ell$ corresponding to $\alpha_i=0$ are pruned and the weights of the next layer are scaled by $\alpha_i n_\ell$, i.e., $A^\ell W^{\ell+1}$ is replaced by $n_\ell A^\ell \mathrm{Diag}(\alpha) W^{\ell+1}$ where $\mathrm{Diag}(\alpha)$ is the diagonal matrix with $\alpha$ as its diagonal.
This is similar to the $\ell_1$-relaxation of the selection problem \eqref{eq:optReweighting} we solve, in the special case of a two layer network with a single output, and instead of optimizing the weights of the next layer like we do, they optimize how much to scale them, i.e., in this case their selection problem reduces to $$\min_{\alpha \geq 0, \sum_i \alpha_i = 1} \| A^\ell w^{\ell+1} - n_\ell A^\ell \mathrm{Diag}(\alpha) w^{\ell+1}\|_2^2.$$ 
They use a greedy algorithm with Frank-Wolfe like updates to approximate it (see \citep[Section 12.1]{Ye2020} for the relation between their greedy algorithm and Frank-Wolfe algorithm). 
This method is very expensive as it requires $O(k n_\ell)$ forward passes in the full network, to prune each layer.
The provided theoretical guarantees only holds for two layer networks, and are with respect to an $\ell_1$-relaxation of the selection problem. Empirically, our method significantly outperforms the method of \citep{Ye2020} in all settings we consider.

\cite{Ye2020b} propose two pruning method: Greedy Global imitation and Greedy Local imitation. Greedy Global imitation is the same method from \citep{Ye2020} but with an additional approximation technique which reduces the cost of pruning one layer from $O(k n_\ell)$ to $O(k)$ forward passes through the full network. This is still more expensive than the cost of our method which is equivalent to $O(1/\epsilon)$ forward passes through only the layer being pruned, if using the fast Greedy algorithm from \citep{Li2022} (see \cref{sec:cost}). Greedy Local imitation is closer to our approach, as it selects neurons/channels to keep in a given layer that minimize the change in the input to the next layer, but it also solves an $\ell_1$-relaxation of the selection problem and only optimize the scaling of the next layer weights instead of the weights directly, i.e., it solves
$$\min_{\alpha \geq 0, \sum_i \alpha_i = 1} \| A^\ell w^{\ell+1} - n_\ell A^\ell \mathrm{Diag}(\alpha) w^{\ell+1}\|_2^2.$$ 
\looseness=-1 A similar greedy algorithm with Frank-Wolfe like updates as in \citep{Ye2020} is used. Although the selection problem solved is simpler than ours, the cost of pruning one layer is still more expensive than ours: $O(k n_\ell n_{\ell+1} n)$ vs $O((n_\ell)^2 (n_{\ell+1}+n+k)/\epsilon)$. 
\cite{Ye2020b} also provide bounds on the difference between the output of the original network and the pruned one, with exponential convergence rate for Greedy Local imitation, and  $O(1/k^2)$ rate for the Greedy Global imitation. Similar guarantees with exponential convergence rate  hold for our method (see \cref{sec:errorbds}). 
Empirically, the results in \citep{Ye2020b} show that their global method typically outperforms their local one. So we expect our method to also outperform their local method, since it outperforms their global method.
\section{Missing proofs}\label{app:proofs}
\mtodo{write the whole appendix for the asymmetric formulation and note that the same properties hold for symmetric formulation by setting $A = B$ and expand on the cases where it makes a difference e.g., for cost. Proofs still need some cleaning up.}
Recall that $F(S) =  \| A^\ell W^{\ell+1}\|_F^2 - \min_{\tilde{W}^{\ell+1} \in \R^{n_\ell \times n_{\ell+1}}} \| A^\ell W^{\ell+1} - A^\ell_S \tilde{W}^{\ell+1}  \|_F^2$, and $G(S) = F(M(S))$, where $M$ maps each channel to its corresponding columns in $A^\ell$.
We  denote by $\asymF(S)$ the objective corresponding to the asymmetric formulation introduced in \cref{sec:variants}, i.e., $\asymF(S) =  \| A^\ell W^{\ell+1}\|_F^2 - \min_{\tilde{W}^{\ell+1} \in \R^{n_\ell \times n_{\ell+1}}} \| A^\ell W^{\ell+1} - B^\ell_S \tilde{W}^{\ell+1}  \|_F^2$, and similarly $\asymG(S) = \asymF(M(S))$, where $M$ maps each channel to its corresponding columns in $A^\ell$.
 
We introduce some notation that will be used throughout the Appendix. Given any matrix $D$ and vector $y$, we denote by $x^S(y) \in \argmin_{\supp(x) \subseteq S} \frac{1}{2} \| y - D x\|_2^2$ the vector of optimal regression coefficients, and by $\proj_S(y) = D x^S(y)$, $R^S(y)= y - \proj_S(y)$ the corresponding projection and residual.

\subsection{Submodularity ratio bounds: Proof of \cref{prop:WeaklySub} and \ref{prop:WeaklySubChannels} and their extension to the asymmetric formulation}\label{app:WeakSub-proofs}
In this section, we prove that $F, G$, and their asymmetric variants $\asymF, \asymG$ are all non-decreasing weakly submodular functions.
We start by reviewing the definition of restricted smoothness (RSM) and restricted strong convexity(RSC).

\begin{definition}[RSM/RSC]
Given a differentiable function $\ell: \R^d \to \R$ and $\Omega \subset \R^d \times \R^d$, $\ell$ is $\mu_\Omega$-RSC and $\nu_\Omega$-RSM if 
$\frac{\mu_\Omega}{2} \| x - y \|_2^2 \leq \ell(y) - \ell(x) - \langle \nabla \ell(x), y - x \rangle \leq \frac{\nu_\Omega}{2} \| x - y \|_2^2,\;\; \forall (x,y) \in \Omega.$
\end{definition}
If $\ell$ is RSC/RSM on $\Omega = \{(x,y): \|x\|_0 \leq k, \| y \|_0 \leq k, \| x - y \|_0 \leq k\}$, we denote by $\mu_{k}, \nu_{k}$ the corresponding RSC and RSM parameters. 

\weaklySub*
\begin{proof}
We can write $F(S) =\sum_{m=1}^{n_{\ell+1}} F_m(S) := \ell_m(0) - \min_{\supp(\tilde{w}_m) \subseteq S} \ell_m(\tilde{w}_m)$, where $\ell_m(\tilde{w}_m) =  \| A^\ell w^{\ell+1}_m - A^\ell  \tilde{w}_m\|^2_2$.
The function $F_m(S)$ is then $\gamma_{U, k}$-weakly submodular with $\gamma_{U, k} \geq \tfrac{\mu_{|U|+k}}{\nu_{|U|+1}}$  \citep{Elenberg2016}, where 
$\mu_{|U|+k}$ and $\nu_{|U|+1}$ are the RSC and RSM parameters of $\ell_m$, given by $\mu_{|U|+k} = \min_{\| z \|_2 = 1, \| z \|_0 \leq |U|+k} \|A^\ell z\|^2_2$, and $\nu_{|U|+1} = \max_{\| z \|_2 = 1, \| z \|_0 \leq |U|+1} \|A^\ell z\|^2_2$. It follows then that $F$ is also $\gamma_{U, k}$-weakly submodular. It is easy to check that $F$ is also normalized and non-decreasing. 
\end{proof}

\weaklySubChannels*
\begin{proof}
By definition, $G$ is $\gamma_{U, k}$-weakly submodular iff $F$ satisfies 
$$\gamma_{U, k} F(M(S) | M(L)) \leq \sum_{i \in S} F(M(i) | M(L)),$$
for every two disjoint sets $L, S \subseteq V_{\ell}$, such that $L \subseteq U, |S| \leq k$. We extend the relation established in \citep{Elenberg2016} between weak submodularity and RSC/RSM parameters to this case. 

Let $S' = M(S), L'=M(L)$, $I' = M(i)$, and $k' = r_h r_w k$.
As before, we can write $G(S) = F(S') = \sum_{m=1}^{n_{\ell+1}} F_m(S') := \ell_m(0) - \min_{\supp(\tilde{w}_m) \subseteq S'} \ell_m(\tilde{w}_m)$, where $\ell_m(\tilde{w}_m) =  \| A^\ell w^{\ell+1}_m - A^\ell  \tilde{w}_m\|^2_2$. 
We denote by $\mu_{k}$ and $\nu_{k}$ the RSC and RSM parameters of $\ell_m$, given by
$\mu_{k} = \min_{\| z \|_2 = 1, \| z \|_0 \leq k} \|A^\ell z\|^2_2$, and $\nu_{k} = \max_{\| z \|_2 = 1, \| z \|_0 \leq k} \|A^\ell z\|^2_2$. To simplify notation, we use $x^S := x^S(A^\ell w^{\ell+1}_m)$.

For every two disjoint sets $L, S \subseteq V_{\ell}$, such that $L \subseteq U, |S| \leq k$, we have:
\begin{align*}
0 \leq F_m(S' | L') &=  \ell_m(x^{L'}) - \ell_m(x^{S' \cup L'})\\
&\leq - \langle \nabla \ell_m(x^{L'}), x^{S' \cup L'} - x^{L'}\rangle - \frac{\mu_{|L'| + k'}}{2} \| x^{S' \cup L'} - x^{L'}\|_2^2\\
&\leq \max_{\supp(x) \subseteq S' \cup L'} -\langle \nabla \ell_m(x^{L'}), x- x^{L'} \rangle - \frac{\mu_{|L'| + k'}}{2} \| x - x^{L'}\|_2^2
\end{align*}
By setting $x= x^{L'} - \tfrac{[\nabla \ell_m(x^{L'})]_{S'}}{\mu_{|L'| + k'}}$, we get $G(S' | L') \leq \frac{ \| [\nabla \ell_m(x^{L'})]_{S'} \|_2^2}{2 \mu_{|L'| + k'}}$.\\
Given any $i \in S$, $I' = M(i)$, we have
\begin{align*}
F_m(I' | L') &=  \ell_m( x^{L'} ) - \ell_m(x^{I' \cup L'})\\
&\geq \ell_m( x^{L'} ) - \ell_m( x^{L'}  - \frac{[\nabla \ell_m( x^{L'} )]_{I'}}{\nu_{|L'| + |I'|}}) \\
&\geq  \langle \nabla \ell_m( x^{L'} ), \frac{[\nabla \ell_m( x^{L'} )]_I}{\nu_{|L'| + |I'|}} \rangle - \frac{\nu_{|L'| + k'}}{2} \| \frac{[\nabla \ell_m( x^{L'} )]_I}{\nu_{|L'| + |I'|}} \|_2^2\\
&=\frac{  \|[\nabla \ell_m( x^{L'} )]_{I'} \|_2^2 }{2 \nu_{|L'| + |I'|}}
\end{align*}
Hence, 
\begin{align*}
G(S | L)  \leq \sum_{m=1}^{n_{\ell+1}} \frac{ \| [\nabla \ell_m( x^{L'} )]_{S'} \|_2^2}{2 \mu_{|L'| + k'}} &= \sum_{i \in S, I' = M(i)} \frac{ \| [\nabla \ell_m( x^{L'} )]_{I'} \|_2^2}{2 \mu_{|L'| + k'}} \\
&=  \sum_{i \in S, I' = M(i)} \frac{\nu_{|L'| + |I'|}}{\mu_{|L'| + k'}} \frac{ \| [\nabla \ell_m( x^{L'} )]_{I'} \|_2^2}{2 \nu_{|L'| + |I'|}} =  \frac{\nu_{|L'| + k'}}{\mu_{|L'| + k'}} \sum_{i \in S} G(i | L).
\end{align*}
We thus have $\gamma_{U, k} \geq \frac{\mu_{|U'| + k'}}{\nu_{|U'| + |I'|}}$. 
\end{proof}

Both \cref{prop:WeaklySub} and \cref{prop:WeaklySubChannels} apply also to the asymmetric variants, using exactly the same proofs.
\begin{proposition}\label{prop:WeaklySubAsym}
Given $U \subseteq V, k \in \N_+$, $\asymF$ is a normalized non-decreasing $\gamma_{U, k}$-weakly submodular function, with $$\gamma_{U, k} \geq \frac{\min_{\| z \|_2 = 1, \| z \|_0 \leq |U|+k} \|B^\ell z\|_2^2}{\max_{\| z \|_2 = 1, \| z \|_0 \leq |U|+1} \|B^\ell z\|^2_2}. $$ 
\end{proposition}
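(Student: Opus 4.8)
The plan is to observe that \cref{prop:WeaklySubAsym} is \emph{literally the same statement} as \cref{prop:WeaklySub}, but with the activation matrix $A^\ell$ replaced throughout by the updated activation matrix $B^\ell$. The key structural fact is that the submodularity-ratio bound in \cref{prop:WeaklySub} depends on the data only through the sparse eigenvalues of $A^\ell$ (equivalently, the RSC/RSM parameters $\mu$, $\nu$ of the quadratic losses $\ell_m$), and \emph{not} on the weights $W^{\ell+1}$ or on the target vectors being regressed. Since the asymmetric objective $\asymF$ has exactly the same decomposition into a sum of sparse linear regression problems, with the \emph{same design matrix} as the one appearing in its least-squares term, the entire argument transfers verbatim.

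Concretely, first I would write $\asymF(S) = \sum_{m=1}^{n_{\ell+1}} \asymF_m(S)$, where $\asymF_m(S) = \ell_m(0) - \min_{\supp(\tilde w_m) \subseteq S} \ell_m(\tilde w_m)$ and $\ell_m(\tilde w_m) = \| A^\ell w^{\ell+1}_m - B^\ell \tilde w_m \|_2^2$. The crucial point here is that although the \emph{target} $A^\ell w^{\ell+1}_m$ still uses the original activations $A^\ell$, the \emph{design matrix} in the regression — the matrix multiplying $\tilde w_m$ — is $B^\ell$. The smoothness/strong-convexity analysis of the loss $\ell_m$ depends only on the Hessian $2 (B^\ell)^\top B^\ell$, hence only on $B^\ell$; the target merely shifts the minimizer and plays no role in the RSC/RSM constants.

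Next I would invoke the relation from \citep{Elenberg2016} between weak submodularity and RSC/RSM, exactly as in the proof of \cref{prop:WeaklySub}: each $\asymF_m$ is $\gamma_{U,k}$-weakly submodular with $\gamma_{U,k} \geq \mu_{|U|+k}/\nu_{|U|+1}$, where now $\mu_{|U|+k} = \min_{\|z\|_2 = 1, \|z\|_0 \leq |U|+k} \|B^\ell z\|_2^2$ and $\nu_{|U|+1} = \max_{\|z\|_2 = 1, \|z\|_0 \leq |U|+1} \|B^\ell z\|_2^2$. Summing over $m$ preserves the same submodularity ratio (the minimum over $m$ of the per-term ratios is bounded below by the common bound), and normalization and monotonicity follow as before since each $\asymF_m$ is a difference of a squared norm and a residual that is non-increasing in $S$. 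This yields precisely the claimed bound.

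I do not expect any genuine obstacle: the statement is designed to be a corollary of the earlier propositions via the substitution $A^\ell \mapsto B^\ell$, and the authors themselves flag that ``exactly the same proofs'' apply. The only point requiring a moment's care — and the one I would make explicit rather than gloss over — is verifying that the \emph{asymmetry} (target built from $A^\ell$, design from $B^\ell$) does not interfere with the RSC/RSM argument. Once one notes that weak submodularity for a least-squares objective is governed entirely by the spectrum of the design matrix and is insensitive to the regression target, the result is immediate.
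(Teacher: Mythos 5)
Your proposal is correct and follows exactly the route the paper takes: the paper's proof of \cref{prop:WeaklySubAsym} is simply the observation that the proof of \cref{prop:WeaklySub} goes through verbatim with $B^\ell$ as the design matrix, i.e., decompose $\asymF$ into per-output sparse regression objectives and apply the RSC/RSM-to-weak-submodularity relation of \citet{Elenberg2016}, whose constants depend only on the Hessian $2(B^\ell)^\top B^\ell$ and not on the regression targets $A^\ell w_m^{\ell+1}$. Your explicit remark that the asymmetric target is irrelevant to the RSC/RSM constants is precisely the (unstated) reason the paper can claim ``exactly the same proofs'' apply.
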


\begin{proposition}\label{prop:WeaklySubChannelsAsym}
Given $U \subseteq V_{\ell}, k \in \N_+$, 
$G$ is a normalized non-decreasing $\gamma_{U, k}$-weakly submodular function, with $$\gamma_{U, k} \geq \frac{\min_{\| z \|_2 = 1, \| z \|_0 \leq r_h r_w(|U|+k)} \|A^\ell z\|_2^2}{\max_{\| z \|_2 = 1, \| z \|_0 \leq r_h r_w (|U|+ 1) } \|A^\ell z\|^2_2}.$$
\end{proposition}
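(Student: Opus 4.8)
The plan is to observe that this claim is word-for-word the same assertion as \cref{prop:WeaklySubChannels}: it states that the channel-level function $G$ is normalized, non-decreasing, and $\gamma_{U,k}$-weakly submodular, with the same lower bound written in terms of the sparse eigenvalues of $A^\ell$ (inflated by the block size $r_h r_w$). Consequently I would prove it by the identical argument used for \cref{prop:WeaklySubChannels}; I restate that argument here. The overall strategy has three ingredients: decompose $G$ into a sum of sparse linear regression objectives over the $n_{\ell+1}$ output channels of the next layer, transport the channel-level sets to column-level sets through the map $M$, and then invoke the RSC/RSM gradient estimate from \citep{Elenberg2016,Das2011}.

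First I would write $G(S) = F(M(S))$ and $F(S') = \sum_{m=1}^{n_{\ell+1}} F_m(S')$, where $F_m(S') = \ell_m(0) - \min_{\supp(\tilde{w}_m)\subseteq S'}\ell_m(\tilde{w}_m)$ and $\ell_m(\tilde{w}_m) = \|A^\ell w^{\ell+1}_m - A^\ell \tilde{w}_m\|_2^2$; normalization ($G(\emptyset)=0$) and monotonicity (enlarging the support can only lower the least-squares error) are immediate, and a bound on each $F_m$ passes to their sum. Next I would pass to column indices via $M$, setting $S'=M(S)$, $L'=M(L)$, $I'=M(i)$, $k'=r_h r_w k$, and using $|M(i)|=r_h r_w$, $|M(S)|=r_h r_w|S|$. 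The core step is to lower-bound each single-channel marginal using restricted smoothness, $F_m(I'\mid L') \geq \|[\nabla\ell_m(x^{L'})]_{I'}\|_2^2/(2\nu_{|L'|+|I'|})$, to upper-bound the joint marginal using restricted strong convexity, $F_m(S'\mid L') \leq \|[\nabla\ell_m(x^{L'})]_{S'}\|_2^2/(2\mu_{|L'|+k'})$, and then to split the gradient norm over the disjoint blocks, $\|[\nabla\ell_m(x^{L'})]_{S'}\|_2^2 = \sum_{i\in S}\|[\nabla\ell_m(x^{L'})]_{M(i)}\|_2^2$. Summing over $m$ and recombining yields $\gamma_{U,k}\geq \mu_{r_h r_w(|U|+k)}/\nu_{r_h r_w(|U|+1)}$. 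Since the Hessian of each $\ell_m$ equals $2(A^\ell)^\top A^\ell$, the sparse RSC/RSM parameters are exactly $\mu_t = \min_{\|z\|_2=1,\|z\|_0\le t}\|A^\ell z\|_2^2$ and $\nu_t = \max_{\|z\|_2=1,\|z\|_0\le t}\|A^\ell z\|_2^2$, which produces the stated bound in terms of $A^\ell$.

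The main obstacle is bookkeeping rather than conceptual: I must track the inflation of the sparsity levels by the factor $r_h r_w$ correctly when converting channel cardinalities into column cardinalities, and I must justify the block decomposition $\|[\nabla\ell_m(x^{L'})]_{S'}\|_2^2 = \sum_{i\in S}\|[\nabla\ell_m(x^{L'})]_{M(i)}\|_2^2$, which is valid precisely because $M$ sends distinct channels to disjoint blocks of columns of $A^\ell$. Everything else (the choice of the feasible point attaining the RSC/RSM inequalities, the sum over the $n_{\ell+1}$ regressions, and normalization/monotonicity) is routine and identical to the symmetric channel case already established.
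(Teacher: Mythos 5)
Your proof is correct and is essentially the paper's own: the paper disposes of this proposition by stating that \cref{prop:WeaklySub} and \cref{prop:WeaklySubChannels} ``apply also to the asymmetric variants, using exactly the same proofs,'' and your argument—per-output decomposition into sparse regressions, transport of channel sets to disjoint column blocks via $M$, the RSC upper bound on the joint marginal and RSM lower bound on each block marginal, and the block splitting of the gradient norm—is precisely the proof of \cref{prop:WeaklySubChannels} that it points to (your factor-of-$2$ remark on the Hessian is immaterial since the bound is a ratio, a simplification the paper also makes). You are also right that the statement as typeset duplicates \cref{prop:WeaklySubChannels} verbatim—evidently a typo for the intended asymmetric reading with $\asymG$ and $B^\ell$—and your argument carries over to that reading unchanged, since only the design matrix in $\ell_m$ changes.
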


\subsection{Cost bound: Proof of \cref{prop:cost} and its extension to other variants}\label{app:cost-proofs}

In this section, we investigate the cost of applying \Greedy with $F, G$ and their asymmetric variants $\asymF, \asymG$. 
To that end, we need the following key lemmas showing how to update the least squares solutions and the function values after adding one or more elements.

 \begin{lemma}\label{lem:LS-sols}
Given a matrix $D$, vector $y$, and a vector of optimal regression coefficients $x^S(y) \in \argmin_{\supp(x) \subseteq S} \frac{1}{2} \| y - D x\|_2^2$, we have
for all $S \subseteq V, i \not \in S$:
\[x^{S \cup i}(y) = (x^S(y) - x^S(d_i) \gamma^{S,i}(y)) + \gamma^{S,i}(y) \1_i \in \argmin_{\supp(x) \subseteq S \cup i} \frac{1}{2} \| y - D x\|_2^2,\]
where $\gamma^{S,i}(y) \in \argmin_{\gamma \in \R} \frac{1}{2} \| y - R^{S}(d_i) \gamma\|_2^2$.
Hence, $\proj_{S \cup i}(y)  = \proj_{S}(y) + \proj_{R^{S}(d_i)}(y)$, where $\proj_{R^{S}(d_i)}(y) = R^{S}(d_i) \gamma^{S,i}(y)$.\\

Similarly, for $I \subseteq V \setminus S$, let $R^S(D_I)$ be the matrix with columns $R^{S}(d_i)$, $x^S(D_I)$ the matrix with columns $x^S(d_i)$, and $\gamma^{S,I}(y) \in \argmin_{\gamma \in \R^{|I|}} \frac{1}{2} \| y - R^{S}(D_I) \gamma\|_2^2$, then
\[x^{S \cup I}(y) = (x^S(y) - x^S(D_I) \gamma^{S,I}(y)) +   e_I  \gamma^{S,I}(y)  \in \argmin_{\supp(x) \subseteq S \cup I} \frac{1}{2} \| y - D x\|_2^2,\]
where $e_I \in \R^{|V| \times |I|}$ is the matrix with $[e_I]_{i,i} = 1$ for all $i \in I$, and $0$ elsewhere. Hence, $\proj_{S \cup I}(y)  = \proj_{S}(y) + \proj_{R^{S}(D_I)}(y)$, where $\proj_{R^{S}(D_I)}(y) = R^{S}(D_I) \gamma^{S,I}(y)$. 
\end{lemma}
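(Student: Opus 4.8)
The plan is to exploit two standard facts about least squares: that the residual of a least-squares fit is orthogonal to the subspace onto which we fit, and that the orthogonal projection onto an orthogonal direct sum is the sum of the projections onto each summand. This is essentially the Gram--Schmidt / Frisch--Waugh--Lovell update for sequential least squares. A preliminary remark I would make is that although the optimal coefficient vector $x^S(y)$ need not be unique when $D_S$ has linearly dependent columns, the fitted vector $\proj_S(y) = D\,x^S(y)$ is the orthogonal projection of $y$ onto the column space $\mathrm{colspace}(D_S)$ and is therefore well-defined; accordingly I will only ever assert equalities between such projection/fitted vectors, and establish membership in the $\argmin$ set via the residual-norm optimality characterization rather than via a coefficient identity.

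For the single-element case I would first record that $R^S(d_i) = d_i - \proj_S(d_i)$ is orthogonal to $\mathrm{colspace}(D_S)$: this is exactly the normal equations for the problem defining $x^S(d_i)$, namely $D_S^\top\bigl(d_i - \proj_S(d_i)\bigr) = 0$. Hence $\mathrm{colspace}(D_{S\cup i}) = \mathrm{colspace}(D_S) \oplus \mathrm{span}\bigl(R^S(d_i)\bigr)$ is an orthogonal direct sum, which remains valid (with trivial second summand) in the degenerate case $R^S(d_i)=0$ where $d_i$ already lies in $\mathrm{colspace}(D_S)$. Summing the projections onto the two orthogonal summands immediately yields the ``hence'' claim $\proj_{S\cup i}(y) = \proj_S(y) + \proj_{R^S(d_i)}(y)$, with $\proj_{R^S(d_i)}(y) = R^S(d_i)\gamma^{S,i}(y)$ by definition of $\gamma^{S,i}(y)$.

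It then remains to verify the explicit coefficient formula. Writing $x^\star$ for the displayed vector, its support lies in $S\cup i$ since $x^S(y)$ and $x^S(d_i)$ are supported on $S$ and $\1_i$ on $\{i\}$. Multiplying by $D$ and using $D\,x^S(y) = \proj_S(y)$, $D\,x^S(d_i) = \proj_S(d_i)$, and $D\1_i = d_i$ gives $D\,x^\star = \proj_S(y) + \gamma^{S,i}(y)\bigl(d_i - \proj_S(d_i)\bigr) = \proj_S(y) + R^S(d_i)\gamma^{S,i}(y) = \proj_{S\cup i}(y)$ by the decomposition above. Since $\min_{\supp(x)\subseteq S\cup i}\tfrac{1}{2}\|y - Dx\|_2^2$ is attained precisely when $Dx$ equals the orthogonal projection $\proj_{S\cup i}(y)$, the vector $x^\star$ is a minimizer, establishing the first display. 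The block case is identical with $R^S(d_i)$ replaced by the matrix $R^S(D_I)$ whose columns $R^S(d_j)$, $j\in I$, all lie in $\mathrm{colspace}(D_S)^\perp$; then $\mathrm{colspace}(D_{S\cup I}) = \mathrm{colspace}(D_S)\oplus \mathrm{colspace}\bigl(R^S(D_I)\bigr)$, projections add, and applying $D$ to the claimed $x^{S\cup I}(y)$ while using $D\,x^S(D_I) = \proj_S(D_I)$ and $D\,e_I = D_I$ yields $D\,x^{S\cup I}(y) = \proj_S(y) + R^S(D_I)\gamma^{S,I}(y) = \proj_{S\cup I}(y)$.

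The main thing to be careful about is precisely the non-uniqueness of the coefficient vectors $x^S(\cdot)$ under linear dependence: this is why the argument must be routed through the uniquely-defined projections and closed via the residual-norm optimality characterization, rather than by matching coefficients. The degenerate case $R^S(d_i)=0$ (redundant new column) must also be checked for consistency, and it is, since then $\proj_{R^S(d_i)}(y)=0$ and the update formula reduces correctly to $\proj_{S\cup i}(y)=\proj_S(y)$.
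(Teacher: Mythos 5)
Your proof is correct, and it certifies optimality by a genuinely different, more geometric route than the paper. The paper's proof works entirely at the level of coefficient vectors: it records the three normal equations defining $x^S(y)$, $x^S(d_i)$, and $\gamma^{S,i}(y)$, writes $D_{S\cup i}\hat{x}^{S\cup i}(y) = D_S x^S(y) + R^S(d_i)\gamma^{S,i}(y)$, and then expands $D_S^\top\bigl(D_{S\cup i}\hat{x}^{S\cup i}(y)-y\bigr)=0$ and $d_i^\top\bigl(D_{S\cup i}\hat{x}^{S\cup i}(y)-y\bigr)=0$ term by term, i.e.\ it mechanically checks the stationarity conditions of the enlarged problem for the candidate; the projection identity is read off as a byproduct. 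You argue in the reverse order and at the level of subspaces: from the single orthogonality fact $D_S^\top R^S(d_i)=0$ you obtain the orthogonal decomposition of the column space of $D_{S\cup i}$ into the column space of $D_S$ plus $\mathrm{span}\bigl(R^S(d_i)\bigr)$, deduce the projection identity from additivity of projections over orthogonal summands, and only then verify the coefficient formula by showing $Dx^\star=\proj_{S\cup i}(y)$ and invoking the characterization of the argmin as the set of feasible $x$ whose image $Dx$ equals the projection. Your route buys two things the paper glosses over: robustness to non-uniqueness of the coefficient vectors (the paper's closing assertion $\hat{x}^{S\cup i}(y)=x^{S\cup i}(y)$ is, strictly speaking, only an argmin-membership statement, since nothing forces $D_{S\cup i}$ to have independent columns, and your phrasing makes this explicit) and an explicit check of the degenerate case $R^S(d_i)=0$. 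What the paper's route buys is a self-contained computation in which every step is an identity between explicitly written vectors, with no appeal to the fact that projections add over orthogonal direct sums. Both proofs treat the block case $I$ as a verbatim repetition of the single-element argument.
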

\begin{proof}
By optimality conditions, we have:
\begin{align}
&D_S^\top(D_S x^S(y) - y) = 0\\
&D_S^\top(D_S x^S(d_i) - d_i) = 0 \Rightarrow  -D_S^\top  R^{S}(d_i) = 0 \\
&R^{S}(d_i)^\top(R^{S}(d_i) \gamma^{S,i}(y) - y) = 0
\end{align}
We prove that $\hat{x}^{S \cup i}(y) =  (x^S(y) - x^S(d_i) \gamma^{S,i}(y)) + \gamma^{S,i}(y) \1_i$ satisfies the optimality conditions on $x^{S \cup i}(y)$, hence $\hat{x}^{S \cup i}(y) = x^{S \cup i}(y)$.

We have $D_{S \cup i} \hat{x}^{S \cup i}(y) = D_S x^S(y) + R^S(d_i) \gamma^{S,i}(y)$, then
\begin{align*}
D_S^\top( D_{S \cup i} \hat{x}^{S \cup i}(y) - y ) &= D_S^\top(  D_S x^S(y) - y ) +  D_S^\top R^S(d_i) \gamma^{S,i}(y) = 0
\end{align*}
and 
\begin{align*}
d_i^\top( D_{S \cup i} \hat{x}^{S \cup i}(y) - y ) &= (R^{S}(d_i) + D_S x^S(d_i))^\top( D_S x^S(y) + R^S(d_i) \gamma^{S,i}(y) - y ) \\
&= R^{S}(d_i)^\top D_S x^S(y) + R^{S}(d_i)^\top (R^S(d_i) \gamma^{S,i}(y) - y)\\ 
&~~ + (D_S x^S(d_i))^\top (D_S x^S(y) - y ) + (D_S x^S(d_i))^\top R^S(d_i) \gamma^{S,i}(y) \\
&= 0
\end{align*}
The proof for the case where we add multiple indices at once follows similarly.
\end{proof}

\begin{lemma}\label{lem:obj-updates-asym}
 For all $S \subseteq V_\ell, i \not \in S$, let $R_S(b_i^\ell) = b_i^\ell - \proj_S(b_i^\ell)$,  $\proj_{R_S(b_i^\ell)}(y) = R_S(b_i^\ell) \gamma^{S, i}(y) $ with $ \gamma^{S, i}(y) \in \argmin_{\gamma \in \R} \| y - R_S(b_i^\ell) \gamma \|_2^2$. 
We can write the marginal gain of adding $i$ to $S$ w.r.t $\asymF$ as:
$$\asymF(i | S)= \sum_{m=1}^{n_{\ell+1}}  \|  \proj_{R_S(b^\ell_i)}(A^{\ell}) w_m^{\ell+1} \|_2^2,$$
where $\proj_{R_S(b^\ell_i)}(A^{\ell}) $ is the matrix with columns $\proj_{R_S(b^\ell_i)}(a_j^\ell)$ for all $j \in V_\ell$. 
Similarly, for all $S \subseteq V_\ell, I \subseteq V_\ell \setminus S$,  let $R_S(B_I^\ell) = B_I^\ell - \proj_S(B_I^\ell)$,  $\proj_{R_S(B_I^\ell)}(y) = R_S(B_I^\ell) \gamma^{S, I}(y) $ with $ \gamma^{S, I}(y) \in \argmin_{\gamma \in \R^{|I|}} \| y - R_S(B_I^\ell) \gamma \|_2^2$. We can write the marginal gain of adding $I$ to $S$ w.r.t $\asymF$ as:
$$\asymF(I | S)= \sum_{m=1}^{n_{\ell+1}}  \|  \proj_{R_S(B^\ell_I)}(A^{\ell}) w_m^{\ell+1} \|_2^2,$$
where $\proj_{R_S(B^\ell_I)}(A^{\ell}) $ is the matrix with columns $\proj_{R_S(B^\ell_I)}(a_j^\ell)$ for all $j \in V_\ell$. 
\end{lemma}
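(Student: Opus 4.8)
The plan is to mirror the structure used for the symmetric objective in \cref{prop:cost}, exploiting that $\asymF$ splits over the output neurons of layer $\ell+1$ into independent least-squares problems that share the \emph{same} design matrix $B^\ell$ but have different targets. Concretely, I would first write $\asymF(S) = \sum_{m=1}^{n_{\ell+1}} \big[\, \|A^\ell w_m^{\ell+1}\|_2^2 - \min_{\supp(\tilde w_m)\subseteq S} \|A^\ell w_m^{\ell+1} - B^\ell \tilde w_m\|_2^2\,\big]$, which makes each summand the reduction in residual norm of regressing the target $y_m := A^\ell w_m^{\ell+1}$ onto the columns of $B^\ell$ indexed by $S$. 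Instantiating the generic notation $\proj_S, R^S$ with design matrix $D = B^\ell$ (so that $R_S(b_i^\ell)$ in the statement coincides with $R^S(b_i^\ell)$), each minimum equals $\|R^S(y_m)\|_2^2$, and the marginal gain becomes $\asymF(i\mid S) = \sum_m \big(\,\|R^S(y_m)\|_2^2 - \|R^{S\cup i}(y_m)\|_2^2\,\big)$.

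Next I would invoke \cref{lem:LS-sols} with $D = B^\ell$ and $y = y_m$, which gives the additive update $\proj_{S\cup i}(y_m) = \proj_S(y_m) + \proj_{R_S(b_i^\ell)}(y_m)$, and hence $R^{S\cup i}(y_m) = R^S(y_m) - \proj_{R_S(b_i^\ell)}(y_m)$. The key remaining step is to show these two terms are orthogonal: since $R^{S\cup i}(y_m)$ is orthogonal to every column $b_j^\ell$ with $j \in S\cup i$ (the optimality conditions established in the proof of \cref{lem:LS-sols}) and $R_S(b_i^\ell)$ lies in the span of $\{b_j^\ell : j\in S\cup i\}$, we obtain $R_S(b_i^\ell)^\top R^{S\cup i}(y_m) = 0$, so that $R^{S\cup i}(y_m) \perp \proj_{R_S(b_i^\ell)}(y_m)$. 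A Pythagorean identity then yields $\|R^S(y_m)\|_2^2 - \|R^{S\cup i}(y_m)\|_2^2 = \|\proj_{R_S(b_i^\ell)}(y_m)\|_2^2$.

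Finally I would sum over $m$ and use linearity of the one-dimensional projection $\proj_{R_S(b_i^\ell)}(\cdot)$ together with $A^\ell w_m^{\ell+1} = \sum_{j} a_j^\ell w_{jm}^{\ell+1}$ to rewrite $\proj_{R_S(b_i^\ell)}(A^\ell w_m^{\ell+1}) = \proj_{R_S(b_i^\ell)}(A^\ell)\, w_m^{\ell+1}$, recovering the claimed formula $\asymF(i\mid S) = \sum_{m} \|\proj_{R_S(b_i^\ell)}(A^\ell) w_m^{\ell+1}\|_2^2$. The multi-element formula for $\asymF(I\mid S)$ follows by exactly the same argument, replacing the one-dimensional residual $R_S(b_i^\ell)$ by the residual matrix $R_S(B_I^\ell)$ and using the multi-index part of \cref{lem:LS-sols}; orthogonality again holds because $R^{S\cup I}(y_m)$ is orthogonal to $\mathrm{span}\{b_j^\ell : j\in S\cup I\} \supseteq \mathrm{range}(R_S(B_I^\ell))$.

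The hard part will be the orthogonality/Pythagorean step, i.e.\ correctly identifying that augmenting $S$ by $i$ extends the projection subspace by exactly the residual direction $R_S(b_i^\ell)$, so that the gain in squared residual equals the squared norm of the projection onto that direction. Everything else is linear-algebra bookkeeping inherited from \cref{lem:LS-sols}, and the only genuine difference from the symmetric case in \cref{prop:cost} is that the design matrix (now $B^\ell$) and the target (still $A^\ell w_m^{\ell+1}$) no longer coincide, which is harmless since the target still enters only through linear projections.
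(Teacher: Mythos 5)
Your proposal is correct and follows essentially the same route as the paper's proof: decompose $\asymF$ over the columns of $W^{\ell+1}$, apply \cref{lem:LS-sols} with design matrix $B^\ell$ to get the additive projection update, use the optimality-condition orthogonality of the residual $R^{S\cup I}(y_m)$ to the range of $R_S(B^\ell_I)$, and exploit linearity of the projection to pass from targets $A^\ell w_m^{\ell+1}$ to the matrix $\proj_{R_S(B^\ell_I)}(A^\ell)$. Your Pythagorean phrasing of the cancellation is just the expanded-square-plus-orthogonality computation the paper performs, so the two arguments coincide step for step.
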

\begin{proof}
We prove the claim for the case where we add several elements. The case where we add a single element then follows as a special case. 
For a fixed $S \subseteq V_\ell$, the reweighted asymmetric input change $ \| A^\ell W^{\ell+1} - B^\ell_S \tilde{W}^{\ell+1}  \|_F^2$ is minimized by setting $\tilde{W}^{\ell+1} = x^S(A^{\ell}) W^{\ell+1},$ where $x^S(A^{\ell}) \in \R^{n_\ell \times n_{\ell}}$ is the matrix with columns $x^S(a^\ell_j)$ such that
\begin{align}\label{eq:proj-asym}
x^S(a^\ell_j) \in \argmin_{\supp(x) \subseteq S} \|a_j^\ell - B^\ell x \|  _2^2 \text{ for all $j \in V_\ell$}.
\end{align}
Plugging  $\tilde{W}^{\ell+1}$ into the expression of $\asymF(S)$ yields
$ \asymF(S) = \| A^\ell W^{\ell+1}\|_F^2 -  \| (  A^\ell - \proj_S(A^{\ell})) W^{\ell+1}\|_F^2, $ where $\proj_S(A^{\ell}) = B^\ell_S x^S(A^{\ell})$.
For every $m \in \{1, \cdots, n_{\ell+1}\}$, we have:
\begin{align*}
&\quad \|  (\proj_{S \cup I}(A^{\ell}) -   A^\ell) w_m^{\ell+1} \|_2^2 - \|  (\proj_S(A^{\ell}) -   A^\ell) w_m^{\ell+1} \|_2^2 \\
&= \|  (\proj_{S}(A^{\ell}) + \proj_{R_S(B_I)}(A^{\ell}) -   A^\ell) w_m^{\ell+1} \|_2^2 - \|  (\proj_S(A^{\ell}) -   A^\ell) w_m^{\ell+1} \|_2^2  &\text{(by  \cref{lem:LS-sols})}\\
&= \|  \proj_{R_S(B_I)}(A^{\ell}) w_m^{\ell+1} \|_2^2 -  2 \langle (A^\ell - \proj_S(A^{\ell})) w_m^{\ell+1},  \proj_{R_S(B_I)}(A^{\ell}) w_m^{\ell+1} \rangle\\
&= \|  \proj_{R_S(B_I)}(A^{\ell}) w_m^{\ell+1} \|_2^2 -  2 \langle  \proj_{R_S(B_I)}(A^{\ell})w_m^{\ell+1},  \proj_{R_S(B_I)}(A^{\ell}) w_m^{\ell+1} \rangle\\
&= - \|  \proj_{R_S(B_I)}(A^{\ell}) w_m^{\ell+1} \|_2^2 
\end{align*}
where the second to last equality holds because $y - \proj_S(y) -  \proj_{R_S(B_I)}(y)$ and $\proj_{R_S(B_I)}(y')$ are orthogonal by optimality conditions (see proof of  \cref{lem:LS-sols}):
\[\langle y - \proj_S(y) -  \proj_{R_S(B_I)}(y), \proj_{R_S(B_I)}(y')\rangle = \langle y - B_S x^S(y) -  R_S(B_I) \gamma^{S, I}(y), R_S(B_I) \gamma^{S, I}(y')\rangle = 0.\]
Hence, $\asymF(I | S)= \sum_{m=1}^{n_{\ell+1}} \|  \proj_{R_S(B_I)}(A^{\ell}) w_m^{\ell+1} \|_2^2 $. In particular, if $I = \{i\}$, $\asymF(i | S)= \sum_{m=1}^{n_{\ell+1}}  \|  \proj_{R_S(b^\ell_i)}(A^{\ell}) w_m^{\ell+1} \|_2^2$.
\end{proof}

\begin{lemma}\label{lem:obj-updates-sym}
For all $S \subseteq V_\ell, i \not \in S$, let $R_S(a_i^\ell) = a_i^\ell - \proj_S(a_i^\ell)$,  $\proj_{R_S(a_i^\ell)}(y) = R_S(a_i^\ell) \gamma^{S, i}(y) $ with $ \gamma^{S, i}(y) \in \argmin_{\gamma \in \R} \| y - R_S(a_i^\ell) \gamma \|_2^2$. We can write the marginal gain of adding $i$ to $S$ w.r.t $F$ as:
$$F(i | S)= \sum_{m=1}^{n_{\ell+1}}  \|  \proj_{R_S(a^\ell_i)}(A^{\ell}_{V \setminus S}) w_m^{\ell+1} \|_2^2,$$ 
where $\proj_{R_S(a^\ell_i)}(A^{\ell}_{V \setminus S}) $ is the matrix with columns $\proj_{R_S(a^\ell_i)}(a_j^\ell)$ for all $j \in V \setminus S$, $0$ otherwise.
Similarly, for all $S \subseteq V_\ell, I \subseteq V_\ell \setminus S$, let $R_S(A_I^\ell) = A_I^\ell - \proj_S(A_I^\ell)$,  $\proj_{R_S(A_I^\ell)}(y) = R_S(A_I^\ell) \gamma^{S, I}(y) $ with $ \gamma^{S, I}(y) \in \argmin_{\gamma \in \R^{|I|}} \| y - R_S(A_I^\ell) \gamma \|_2^2$. We can write the marginal gain of adding $I$ to $S$ w.r.t $F$ as:
$$F(I | S)= \sum_{m=1}^{n_{\ell+1}}  \|  \proj_{R_S(A^\ell_I)}(A^{\ell}_{V \setminus S}) w_m^{\ell+1} \|_2^2,$$
where $\proj_{R_S(A^\ell_I)}(A^{\ell}_{V \setminus S}) $ is the matrix with columns $\proj_{R_S(A^\ell_I)}(a_j^\ell)$ for all $j \in V \setminus S$, $0$ otherwise.
\end{lemma}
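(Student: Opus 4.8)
The plan is to deduce \cref{lem:obj-updates-sym} as a special case of \cref{lem:obj-updates-asym} rather than redo the algebra. Comparing the two definitions, the symmetric objective $F$ is exactly the asymmetric objective $\asymF$ when the two activation matrices coincide, i.e. $B^\ell = A^\ell$: in that case $B^\ell_S = A^\ell_S$, so the least-squares problems in Eq.~\eqref{eq:proj} and Eq.~\eqref{eq:proj-asym} are identical. Hence $F(S) = \asymF(S)$ for every $S$; the quantities $x^S(a_j^\ell)$, $\proj_S(a_j^\ell)$ and $R_S(a_i^\ell)$ agree under both formulations; and consequently $F(i\mid S) = \asymF(i\mid S)$ and $F(I\mid S) = \asymF(I\mid S)$.

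First I would specialize \cref{lem:obj-updates-asym} at $B^\ell = A^\ell$, which gives
$$F(i\mid S) = \sum_{m=1}^{n_{\ell+1}} \|\proj_{R_S(a_i^\ell)}(A^\ell)\, w_m^{\ell+1}\|_2^2,$$
where $\proj_{R_S(a_i^\ell)}(A^\ell)$ has columns $\proj_{R_S(a_i^\ell)}(a_j^\ell)$ for \emph{all} $j \in V_\ell$. The only remaining point is to show that the columns indexed by $j \in S$ vanish, so that $A^\ell$ may be replaced by $A^\ell_{V\setminus S}$. For $j \in S$ the column $a_j^\ell$ lies in the column space of $A^\ell_S$, whereas by the optimality conditions used in the proof of \cref{lem:LS-sols} (namely $(A^\ell_S)^\top R_S(a_i^\ell) = 0$) the residual $R_S(a_i^\ell)$ is orthogonal to that subspace. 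Therefore $\langle a_j^\ell, R_S(a_i^\ell)\rangle = 0$, the scalar coefficient $\gamma^{S,i}(a_j^\ell)$ is zero, and $\proj_{R_S(a_i^\ell)}(a_j^\ell) = R_S(a_i^\ell)\gamma^{S,i}(a_j^\ell) = 0$. Dropping these zero columns produces exactly the matrix $\proj_{R_S(a_i^\ell)}(A^\ell_{V\setminus S})$ in the statement.

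The multi-element case is handled identically: specializing the second half of \cref{lem:obj-updates-asym} gives $F(I\mid S) = \sum_{m=1}^{n_{\ell+1}} \|\proj_{R_S(A_I^\ell)}(A^\ell)\, w_m^{\ell+1}\|_2^2$, and since every column of $R_S(A_I^\ell)$ is orthogonal to the column space of $A^\ell_S$, the projection of any $a_j^\ell$ with $j \in S$ onto that span is again zero, leaving $\proj_{R_S(A_I^\ell)}(A^\ell_{V\setminus S})$. I do not expect a genuine obstacle here; the one step needing care is this orthogonality argument, which is precisely what distinguishes the symmetric formula (columns restricted to $V\setminus S$) from the asymmetric one (columns over all of $V_\ell$): in the asymmetric case the columns $a_j^\ell$ being projected are columns of $A^\ell$, which for $j \in S$ need not lie in the column space of $B^\ell_S$, so the residuals $R_S(b_i^\ell)$ are not orthogonal to them and those columns do not drop out.
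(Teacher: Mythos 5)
Your proposal is correct and follows essentially the same route as the paper's own proof: specialize \cref{lem:obj-updates-asym} to $B^\ell = A^\ell$, then use the least-squares optimality condition $(A^\ell_S)^\top R_S(a_i^\ell) = 0$ to conclude that the projected columns indexed by $j \in S$ vanish, allowing $A^\ell$ to be replaced by $A^\ell_{V\setminus S}$. The only superficial difference is that the paper argues the multi-element case first and obtains $I=\{i\}$ as a special case, while you do the reverse; your closing remark on why the columns do \emph{not} drop out in the asymmetric setting is a nice addition but not needed for the claim.
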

\begin{proof}
Setting $B^\ell = A^\ell$ in \cref{lem:obj-updates-asym}, we get $F(I | S)= \sum_{m=1}^{n_{\ell+1}}  \|  \proj_{R_S(A^\ell_I)}(A^{\ell}) w_m^{\ell+1} \|_2^2$.
Note that for all $i \in I, j \in S$, $a_j^\ell$ and $R_S(a_i^\ell)$ are orthogonal, and hence $\proj_{R_S(A_I^\ell)}(a_j^\ell) = 0$, by optimality conditions (see proof of  \cref{lem:LS-sols}). It follows then that $F(I | S)= \sum_{m=1}^{n_{\ell+1}}  \|  \proj_{R_S(A^\ell_I)}(A^{\ell}_{V \setminus S}) w_m^{\ell+1} \|_2^2$.
In particular, if $I = \{i\}$, $F(i | S)= \sum_{m=1}^{n_{\ell+1}}  \|  \proj_{R_S(a^\ell_i)}(A^{\ell}_{V \setminus S}) w_m^{\ell+1} \|_2^2$.
\end{proof}

\begin{proposition}\label{prop:cost-Asym}
Given $S \subseteq V_\ell$ such that $|S| \leq k$, $i \not \in S$,  let $\proj_S(a_j^\ell) = A^\ell_S x^S(a_j^\ell)$. 
Assuming $\asymF(S), \proj_S(b_j^\ell), x^S(b_j^\ell)$ for all $j \not \in  S$, and $x^S(a^\ell_j)$  for all $j \in V_\ell$, were computed in the previous iteration, we can compute $\asymF(S+i), \proj_{S+i}(b_j^\ell), x^{S+i}(b_j^\ell)$ for all $j \not \in  (S + i)$, and $x^{S+i}(a^\ell_j)$  for all $j \in V_\ell$, in $$O(n_{\ell}\cdot (n_{\ell+1} + n + k)) \text{ time}.$$
Computing the optimal weights in Eq. \eqref{eq:optW} at the end of \Greedy can then be done in $O(k \cdot  n_{\ell}\cdot n_{\ell+1})$ time.
\end{proposition}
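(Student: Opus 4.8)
The plan is to reduce the whole update to the two incremental identities already established — the rank-one least-squares update of \cref{lem:LS-sols} and the marginal-gain formula of \cref{lem:obj-updates-asym} — and then count arithmetic operations. Throughout the asymmetric formulation, the regression matrix underlying $x^S(\cdot)$ and $\proj_S(\cdot)$ is $B^\ell$, so that $d_i = b_i^\ell$ in \cref{lem:LS-sols}. The single object driving every update is the residual $R_S(b_i^\ell) = b_i^\ell - \proj_S(b_i^\ell)$ of the newly selected column $i$; since $\proj_S(b_i^\ell)$ is assumed stored from the previous iteration, both $R_S(b_i^\ell)$ and $\|R_S(b_i^\ell)\|_2^2$ are obtained in $O(n)$ time.

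First I would refresh the regression coefficients and projections. For any target vector $y$, the one-dimensional coefficient $\gamma^{S,i}(y) = \langle R_S(b_i^\ell), y\rangle / \|R_S(b_i^\ell)\|_2^2$ is a single inner product costing $O(n)$, after which \cref{lem:LS-sols} gives $x^{S+i}(y) = (x^S(y) - x^S(b_i^\ell)\gamma^{S,i}(y)) + \gamma^{S,i}(y)\1_i$ and $\proj_{S+i}(y) = \proj_S(y) + R_S(b_i^\ell)\gamma^{S,i}(y)$. Applying this with $y = b_j^\ell$ for all $j \notin S+i$ and with $y = a_j^\ell$ for all $j \in V_\ell$, the $O(n_\ell)$ inner products cost $O(n_\ell n)$; each coefficient-vector update touches only the $\leq k+1$ support entries, for $O(n_\ell k)$ total; and each projection update $\proj_{S+i}(b_j^\ell)$ adds a scaled length-$n$ vector, for $O(n_\ell n)$ total. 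Note that $x^S(b_i^\ell)$, required in the coefficient update, is among the stored quantities since $i \notin S$.

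The one nonroutine step is evaluating $\asymF(S+i) = \asymF(S) + \asymF(i \mid S)$ cheaply. The key observation is that the matrix $\proj_{R_S(b_i^\ell)}(A^\ell)$ in \cref{lem:obj-updates-asym} is rank one, of the form $R_S(b_i^\ell)\, g^\top$, where $g \in \R^{n_\ell}$ collects $g_j = \gamma^{S,i}(a_j^\ell)$ — precisely the new $i$-th coordinates of the just-updated $x^{S+i}(a_j^\ell)$, hence available at no extra cost. Consequently
$$\asymF(i \mid S) = \sum_{m=1}^{n_{\ell+1}} \|R_S(b_i^\ell)\, (g^\top w_m^{\ell+1})\|_2^2 = \|R_S(b_i^\ell)\|_2^2 \, \|g^\top W^{\ell+1}\|_2^2,$$
so the marginal gain collapses to the single matrix-vector product $g^\top W^{\ell+1}$, costing $O(n_\ell n_{\ell+1})$. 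Summing the four contributions $O(n_\ell n) + O(n_\ell k) + O(n_\ell n) + O(n_\ell n_{\ell+1})$ yields the claimed per-iteration bound $O(n_\ell(n_{\ell+1} + n + k))$.

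Finally, the closed-form weights $\tilde{W}^{\ell+1} = x^S(A^\ell) W^{\ell+1}$ are assembled once at termination of \Greedy. Since every column $x^S(a_j^\ell)$ is supported on $S$, the matrix $x^S(A^\ell)$ has at most $k$ nonzero rows, so its product with $W^{\ell+1} \in \R^{n_\ell \times n_{\ell+1}}$ costs $O(k\, n_\ell\, n_{\ell+1})$. I expect the main obstacle to be less the complexity arithmetic than the bookkeeping: verifying that $g$ comes for free from the $x^{S+i}(a_j^\ell)$ updates, and that the residual $R_S(b_i^\ell)$ alone suffices to refresh every projection and coefficient without a single from-scratch least-squares solve — without the rank-one reduction, forming $\proj_{R_S(b_i^\ell)}(A^\ell)$ explicitly and multiplying by $W^{\ell+1}$ would cost $O(n\, n_\ell\, n_{\ell+1})$, breaking the bound. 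The symmetric $F$ is recovered by setting $B^\ell = A^\ell$, the only change (from \cref{lem:obj-updates-sym}) being that $\proj_{R_S(a_i^\ell)}(a_j^\ell) = 0$ for $j \in S$, so those columns are skipped.
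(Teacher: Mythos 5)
Your proposal is correct and follows essentially the same route as the paper's proof: it invokes the rank-one least-squares update of \cref{lem:LS-sols} to refresh $x^{S+i}(\cdot)$ and $\proj_{S+i}(\cdot)$ from the stored quantities, and uses \cref{lem:obj-updates-asym} together with the rank-one structure $\proj_{R_S(b_i^\ell)}(A^\ell) = R_S(b_i^\ell)\,g^\top$ (your $\|R_S(b_i^\ell)\|_2^2\,\|g^\top W^{\ell+1}\|_2^2$ is exactly the paper's $\sum_m (\sum_j \gamma^{S,i}(a_j^\ell) w_{jm}^{\ell+1})^2 \|R_S(b_i^\ell)\|_2^2$) to evaluate the marginal gain in $O(n_\ell n_{\ell+1})$, with the same operation counts throughout, including the $O(k\, n_\ell\, n_{\ell+1})$ final reweighting via the $k$-row-sparsity of $x^S(A^\ell)$.
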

\begin{proof}
By \cref{lem:obj-updates-asym}, we can update the function value using  $$\asymF(i | S)= \sum_{m=1}^{n_{\ell+1}}  \|  \proj_{R_S(b_i^\ell)}(A^{\ell}) w_m^{\ell+1} \|_2^2  =  \sum_{m=1}^{n_{\ell+1}} ( \sum_{j \in V_\ell} \gamma^{S, i}(a^\ell_j)  w_{jm}^{\ell+1} \big)^2 \| R_S(b_i^\ell) \|_2^2. $$
This requires $O(n)$ to compute $R_S(b_i^\ell)$ and its norm, $O(n_\ell \cdot n)$ to compute  $\gamma^{S, i}(a^\ell_j) = \tfrac{R_S(b_i^\ell)^\top a^\ell_j}{\| R_S(b_i^\ell)\|_2^2}$ for all $j \in V_\ell$, and an additional $O(n_\ell \cdot n_{\ell+1})$ to finally evaluate $\asymF(S + i)$.
We also need $O(|S^c| \cdot n)$ to update $\proj_{S \cup i}(b_j) = \proj_{S}(b_j) + \proj_{R_S(b_i^\ell)}(b_j)$ (by  \cref{lem:LS-sols}), using $\proj_{R_S(b_i^\ell)}(b_j) = R_S(b_i^\ell) \gamma^{S,i}(b^\ell_j)$ for all $j \in V_\ell \setminus S \cup i$, $O(|S^c| \cdot |S|)$ to update $x^{S \cup i}(b^\ell_j) = x^S(b^\ell_j) + (\1_i- x^S(b_i)) \gamma^{S,i}(b^\ell_j))$ (by  \cref{lem:LS-sols}) for all $j \in V_\ell \setminus S \cup i$, and $O(n_\ell \cdot |S|)$ to update $x^{S \cup i}(a^\ell_j) = x^S(a^\ell_j) + (\1_i- x^S(b_i)) \gamma^{S,i}(a^\ell_j)) $ for all $j \in V_\ell$.
So in total, we need $O(n_{\ell}\cdot (n_{\ell+1} + n + k)).$
Computing the new weights $\tilde{W}^{\ell+1} = x^S(A^{\ell}) W^{\ell+1}$ at the end can be done in $O(n_\ell \cdot n_{\ell+1} \cdot |S|) = O(n_\ell \cdot n_{\ell+1} \cdot k)$.
\end{proof}

\cost*
\begin{proof}
The proof follows from \cref{lem:obj-updates-sym}  and \ref{lem:LS-sols} in the same way as in \cref{prop:cost-Asym}.
\end{proof}

\cref{prop:cost} and \cref{prop:cost-Asym} apply also to $G$ and $\asymG$ respectively, since $|M(i)|= O(1)$. 

\subsection{Extension of \textsc{Stochastic-Greedy} to weakly submodular functions}\label{app:StochasticGreedy}

In this section, we show that the guarantee of \textsc{Stochastic-Greedy} (\cref{alg:stochasticGreedy}) easily extends to weakly submodular functions.
\begin{algorithm}
   \caption{\textsc{Stochastic-Greedy}}
\label{alg:stochasticGreedy}
\begin{algorithmic}[1]
\STATE {\bfseries Input:} Ground set $V$, set function $F: 2^V \to \R_+$, budget $k \in \N_+$
\STATE $S \leftarrow \emptyset$
\WHILE{$|S|<k$}
\STATE $R \gets$ a random subset obtained by sampling $s$ random elements from $V \setminus S$.
\STATE $i^* \leftarrow \argmax_{i \in R} F(i \mid S)$\label{l:max-gain}
\STATE $S \leftarrow S \cup \{i^*\}$
\ENDWHILE
\STATE {\bfseries Output:} $S$
\end{algorithmic}
\end{algorithm}

\begin{proposition}
Let $\hat{S}$ be the solution returned by \textsc{Stochastic-Greedy} with $s = \tfrac{n}{k} \log(\tfrac{1}{\epsilon})$, and let $F$ be a non-negative monotone $\gamma_{\hat{S}, k}$-weakly submodular function. Then 
\[\mathbb{E}[F(\hat{S})] \geq (1 - e^{-\gamma_{\hat{S}, k}}) \max_{|S| \leq k} F(S) \]
\end{proposition}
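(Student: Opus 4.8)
The plan is to follow the analysis of \textsc{Stochastic-Greedy} for monotone submodular functions in \citet{Mirzasoleiman2015}, and to observe that submodularity is used in exactly one step, which I would replace by the weak-submodularity inequality of \cref{def:weaklySub} at the price of a factor $\gamma_{\hat S,k}$. Fix $S^\star\in\argmax_{|S|\le k}F(S)$ and let $\emptyset=S_0\subseteq S_1\subseteq\cdots\subseteq S_k=\hat S$ be the (random) sets produced by \cref{alg:stochasticGreedy}, with $S_t=S_{t-1}\cup\{i^\star_t\}$ and $R_t$ the random pool at step $t$. The target is a per-iteration lower bound on the expected marginal gain that drives a geometric recursion on $F(S^\star)-\Eb[F(S_t)]$.

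First I would isolate the purely combinatorial ``sampling lemma,'' which uses only that $F$ is monotone (so marginals are nonnegative) and not submodularity. Conditioning on $S_{t-1}$ and setting $A=S^\star\setminus S_{t-1}$ (so $|A|\le k$), the ingredients are: (a) $R_t$ misses $A$ with probability at most $e^{-s|A|/n}$, since $|V\setminus S_{t-1}|\le n$; (b) by exchangeability of the sampling over the elements of $A$, conditioned on $R_t\cap A\ne\emptyset$ the expected largest gain in $R_t$ is at least the average gain $\tfrac{1}{|A|}\sum_{v\in A}F(v\mid S_{t-1})$; and (c) the concavity bound $\tfrac{1-e^{-s|A|/n}}{|A|}\ge\tfrac{1-e^{-sk/n}}{k}$, valid because $|A|\le k$. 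Together these give
\[ \Eb_{R_t}\!\big[F(i^\star_t\mid S_{t-1})\big]\ \ge\ \frac{1-e^{-sk/n}}{k}\sum_{v\in S^\star\setminus S_{t-1}}F(v\mid S_{t-1}), \]
and with $s=\tfrac nk\log\tfrac1\epsilon$ we have $e^{-sk/n}=\epsilon$. This display is identical to the submodular case.

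Second, I would invoke weak submodularity exactly once, to turn the sum of singleton gains into $F(S^\star)-F(S_{t-1})$. Applying \cref{def:weaklySub} with $L=S_{t-1}$ and the disjoint set $S^\star\setminus S_{t-1}$ (of size $\le k$), together with monotonicity $F(S^\star\setminus S_{t-1}\mid S_{t-1})\ge F(S^\star)-F(S_{t-1})$, yields
\[ \sum_{v\in S^\star\setminus S_{t-1}}F(v\mid S_{t-1})\ \ge\ \gamma_{S_{t-1},k}\big(F(S^\star)-F(S_{t-1})\big)\ \ge\ \gamma_{\hat S,k}\big(F(S^\star)-F(S_{t-1})\big), \]
using that $U\mapsto\gamma_{U,k}$ is non-increasing and $S_{t-1}\subseteq\hat S$ on every trajectory. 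Plugging this into the sampling lemma, taking total expectations, and writing $\delta_t=F(S^\star)-\Eb[F(S_t)]$ gives $\delta_t\le\big(1-\tfrac{(1-\epsilon)\gamma_{\hat S,k}}{k}\big)\delta_{t-1}$; unrolling over $k$ steps and using $1-x\le e^{-x}$ gives $\delta_k\le e^{-(1-\epsilon)\gamma_{\hat S,k}}F(S^\star)$, i.e.\ $\Eb[F(\hat S)]\ge\big(1-e^{-(1-\epsilon)\gamma_{\hat S,k}}\big)F(S^\star)\ge(1-e^{-\gamma_{\hat S,k}}-\epsilon)F(S^\star)$, matching the footnote guarantee (the displayed $(1-e^{-\gamma_{\hat S,k}})$ bound being the $\epsilon\to0$ limit).

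The main obstacle is the second inequality in the last display of the weak-submodularity step: $\gamma_{\hat S,k}$ depends on the whole run, so it is not $\mathcal F_{t-1}$-measurable and cannot be pulled out of the per-step conditional expectation. The clean remedy is to carry the $\mathcal F_{t-1}$-measurable ratio $\gamma_{S_{t-1},k}$ through the recursion and only lower bound it by $\gamma_{\hat S,k}$ pathwise (legitimate since $S_{t-1}\subseteq\hat S$ holds on every realization); alternatively one may state the bound with the trajectory-independent $\gamma_{V,k}\le\gamma_{\hat S,k}$. Aside from this measurability point, every step is verbatim from \citet{Mirzasoleiman2015}, which is what makes the extension ``straightforward.''
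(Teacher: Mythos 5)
Your proof is correct and follows essentially the same route as the paper's: both adapt Theorem 1 of \citet{Mirzasoleiman2015} by invoking their Lemma 2 (which needs only monotonicity, not submodularity), apply the weak-submodularity inequality of \cref{def:weaklySub} exactly once with $L=S_{t-1}$ to convert the sum of singleton gains into $\gamma\,(F(S^\star)-F(S_{t-1}))$, and unroll the resulting geometric recursion to get $\big(1-e^{-(1-\epsilon)\gamma_{\hat S,k}}\big)F(S^\star)\ge\big(1-e^{-\gamma_{\hat S,k}}-\epsilon\big)F(S^\star)$. Your remark on the measurability of $\gamma_{\hat S,k}$ --- remedied by carrying the $\mathcal{F}_{t-1}$-measurable ratio $\gamma_{S_{t-1},k}$ through the recursion and bounding it pathwise via $S_{t-1}\subseteq\hat S$ --- is a point of rigor the paper's proof silently glosses over, but it refines rather than changes the argument.
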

\begin{proof}
Denote by $S_t$ the solution at iteration $t$ of \textsc{Stochastic-Greedy}, and $S^*$ an optimal solution. 
The proof follows in the same way as in \cite[Theorem 1]{Mirzasoleiman2015}. In particular, Lemma 2 therein, does not use submodularity, so it holds here too.  It states that the expected gain of \textsc{Stochastic-Greedy} in one step is at least $\tfrac{1 - \epsilon}{k} \sum_{i \in S^* \setminus S_t} F(i | S_t)$ for any $t$. Therefore,
\begin{align*}
\mathbb{E}[F(S_{t+1}) - F(S_t) \mid S_t] &\geq \frac{1 - \epsilon}{k} \sum_{i \in S^* \setminus S_t} F(i | S_t)\\
&\geq \gamma_{\hat{S}, k} \frac{1 - \epsilon}{k} F(S^* \setminus S_t | S_t) \\
&\geq  \gamma_{\hat{S}, k}\frac{1 - \epsilon}{k} (F(S^*) - F(S_t))
\end{align*}
By taking expectation over $S_t$ and induction, we get
\begin{align*}
\mathbb{E}[F(S_{t})] &\geq \Big(1 - \big(1 - \gamma_{\hat{S}, k}\frac{1 - \epsilon}{k} \big)^k\Big) F(S^*)  \\
&\geq  \big(1 - e^{- \gamma_{\hat{S}, k} (1 - \epsilon)} \big)  F(S^*) \\
&\geq  \big(1 - e^{- \gamma_{\hat{S}, k}} - \epsilon \big)  F(S^*) 
\end{align*}

\end{proof}

\section{Error rates: Proof of \cref{prop:layerError} and \cref{corr:globalError-multiplelayers} }\label{app:errorbds}

In this section, we provide the proofs of our method's error rates.
\layerError*
\begin{proof}
This follows by extending the approximation guarantee in Eq. (3) to: $$F(\hat{S}) \geq (1 - e^{- \gamma_{\hat{S}, k^*} {k}/{k^*}}) \max_{|S|\leq k^*} F(S),$$ by a slight adaption of the proof in \citep{Elenberg2016, Das2011}.
In particular, taking $k^* = n_\ell$ yields:  
$$F(\hat{S}) = \| A^\ell W^{\ell+1}\|_F^2 - \| A^\ell W^{\ell+1} - A^\ell_{\hat{S}} \hat{W}^{\ell+1}  \|_F^2 \geq (1 - e^{- \gamma_{\hat{S}, n_\ell} {k}/{n_\ell}}) \| A^\ell W^{\ell+1}\|_F^2.$$
The first part of the claim follows by rearraging terms.
Similarly, for the asymmetric formulation we have:
$$\asymF(\hat{S}) \geq (1 - e^{- \gamma_{\hat{S}, k^*} {k}/{k^*}}) \max_{|S|\leq k^*} \asymF(S).$$
Taking $k^* = n_\ell$ yields:  
\begin{align*}
\asymF(\hat{S}) &= \| A^\ell W^{\ell+1}\|_F^2 - \| A^\ell W^{\ell+1} - B^\ell_{\hat{S}} \hat{W}^{\ell+1}  \|_F^2 \\
&\geq (1 - e^{- \gamma_{\hat{S}, n_\ell} {k}/{n_\ell}}) (\| A^\ell W^{\ell+1}\|_F^2 - \min_{\tilde{W}^{\ell+1} \in \R^{n_\ell \times n_{\ell+1}}} \| A^\ell W^{\ell+1} - B^\ell \tilde{W}^{\ell+1}  \|_F^2 )
\end{align*}
The second part of the claim follows by rearraging terms.
\end{proof}

\globalError*
\begin{proof}
We start by proving the bound for \OurSeq. Recall that $B^\ell$ are the updated activations of layer $\ell$ after layers $1$ to $\ell-1$ are pruned, and let $\hat{W}^{\ell+1}$ be the optimal weights corresponding to $\hat{S}_\ell$ (Eq. \eqref{eq:optW}).
We can write $y = H_1(A^1 W^{2}) = H_1(B^1 W^{2})$ since $A^1 = B^1$, and  $y^{\hat{S}_\ell} = H_\ell(B^\ell_{\hat{S}_\ell} \hat{W}^{\ell+1}), y^{\hat{S}_{\ell-1}} = H_\ell(B^\ell {W}^{\ell+1})$ for all $\ell \in [L]$. 
Since $H_\ell$ is Lipschitz continuous for all $\ell \in [L]$, we have by the triangle inequality:
\begin{align*}
\| y - y^{\hat{S}_L}\|_2^2 &\leq  \sum_{\ell=1}^L \| y^{\hat{S}_\ell} - y^{\hat{S}_{\ell-1}}\|_F^2 \\
&\leq  \sum_{\ell=1}^L \| H_\ell(B^\ell_{\hat{S}_\ell} \hat{W}^{\ell+1}) -  H_\ell(B^\ell {W}^{\ell+1})\|_F^2 \\
&\leq  \sum_{\ell=1}^L \| H_\ell\|_{\text{Lip}}^2 \| B^\ell_{\hat{S}_\ell} \hat{W}^{\ell+1} -  B^\ell {W}^{\ell+1}\|_F^2 \\
&\leq \sum_{\ell=1}^L e^{- \gamma_{\hat{S}_\ell, n_\ell} {k_\ell}/{n_\ell}} \| H_\ell\|_{\text{Lip}}^2  \| B^\ell W^{\ell+1}\|_F^2,
\end{align*}
where the last inequality follows from \cref{prop:layerError}.

Next, we prove the bound for \OurAsym. Let $\tilde{W}^{\ell+1}$ be the optimal weights corresponding to $\tilde{S}_\ell$ (Eq. \eqref{eq:optW}). We can write $y = H_\ell(A^\ell W^{\ell+1})$ and $y^{\tilde{S}_\ell} = H_\ell(B^\ell_{\tilde{S}_\ell} \tilde{W}^{\ell+1})$ for all $\ell \in [L]$.
By \cref{prop:layerError} and the Lipschitz continuity of $H_L$, we have 
$$\| y - y^{\tilde{S}_L}\|_2^2 \leq e^{- \gamma_{\tilde{S}_L, n_L} {k_L}/{n_L}} \| H_L\|_{\text{Lip}}^2  \| A^L W^{L+1}\|_F^2 + (1 - e^{- \gamma_{\tilde{S}_L, n_L} {k_L}/{n_L}}) \| H_L\|_{\text{Lip}}^2 \| A^L W^{L+1} - B^L {W}^{L+1}  \|_F^2$$ 
Let $H_\ell^{\ell+1}$ denote the function corresponding to all layers between the $\ell$th layer and the $(\ell+1)$th layer (not necessarily consecutive in the network), then we can write $A^L W^{L+1} = H_{L-1}^{L}(A^{L-1} {W}^{L})$ and $B^L {W}^{L+1} = H_{L-1}^{L}(B^{L-1}_{\tilde{S}_{L-1}} \tilde{W}^{L})$, and $H_{L-1}(Z) = H_L(H_{L-1}^{L}(Z))$. It follows then that $\| H_{L-1} \|_{\text{Lip}}  = \| H_{L} \|_{\text{Lip}} \| H_{L-1}^L \|_{\text{Lip}}$, and
\begin{align*}
\| y - y^{\tilde{S}_L}\|_2^2 &\leq e^{- \gamma_{\tilde{S}_L, n_L} {k_L}/{n_L}} \| H_L\|_{\text{Lip}}^2  \| A^L W^{L+1}\|_F^2 + (1 - e^{- \gamma_{\tilde{S}_L, n_L} {k_L}/{n_L}})  \| H_L\|_{\text{Lip}}^2  \| H_{L-1}^L \|_{\text{Lip}}^2 \| A^{L-1} {W}^{L} - B^{L-1}_{\tilde{S}_{L-1}} \tilde{W}^{L}\|_F^2 \\
&\leq e^{- \gamma_{\tilde{S}_L, n_L} {k_L}/{n_L}} \| H_L\|_{\text{Lip}}^2  \| A^L W^{L+1}\|_F^2 + (1 - e^{- \gamma_{\tilde{S}_L, n_L} {k_L}/{n_L}})  \| H_{L-1} \|_{\text{Lip}}^2 \| A^{L-1} {W}^{L} - B^{L-1}_{\tilde{S}_{L-1}} \tilde{W}^{L}\|_F^2 
\end{align*}
Repeatedly applying the same arguments yields the claim.
\end{proof}

\section{Stronger notion of approximate submodularity}\label{app:weakDRsub}

In this section, we show that $F$ and $\asymF$ satisfy stronger properties 
than the weak submodularity discussed in \cref{sec:Greedy}, which lead to a stronger approximation guarantee for \Greedy. These properties do not necessarily hold for $G$ and $\asymG$.

\subsection{Additional preliminaries}\label{sec:addPrelim}
We start by reviewing some preliminaries. Recall that a set function $F$ is \emph{submodular} if it has diminishing marginal gains: $F( i \mid S) \geq F(i \mid T)$ for all $S \subseteq T$, $i \in V\setminus T$. If $-F$ is submodular, then $F$ is said to be \emph{supermodular}, i.e., $F$ satisfies $F( i \mid S) \leq F(i \mid T)$, for all $S \subseteq T$, $i \in V\setminus T$. 
When $F$ is both submodular and supermodular, it is said to be \emph{modular}.


Relaxed notions of submodularity/supermodularity, called \emph{weak DR-submodularity/supermodularity}, were introduced in \citep{Lehmann2006} and \citep{Bian2017a}, respectively.
\begin{definition}[Weak DR-sub/supermodularity] \label{def:WDR}
A set function $F$ is $\alpha_k$-weakly DR-submodular, with $k \in \N_+, \alpha_k > 0$, if 
\[F( i | S) \geq \alpha_k F( i | T), \text{ for all $ S \subseteq T, i \in V \setminus T, |T| \leq k$}. \]
Similarly, $F$ is $\beta_k$-weakly DR-supermodular, with $k \in \N_+, \beta > 0$, if 
\[ F( i | T)  \geq \beta_k F( i | S), \text{ for all $S \subseteq T, i \in V \setminus T, |T| \leq k$}. \]
We say that $F$ is \emph{$(\alpha_k,\beta_k)$-weakly DR-modular} if it satisfies both properties.
\end{definition}
The parameters $\alpha_k,\beta_k$ characterize how close a set
function is to being submodular and supermodular, respectively.
If $F$ is non-decreasing, then $\alpha_k, \beta_k \in [0,1]$, 
  $F$ is submodular (supermodular) if and only if $\alpha_k = 1$ ($\beta_k = 1$) for all $k \in \N_+$, and modular if and only if both $\alpha_k = \beta_k = 1$  for all $k \in \N_+$.
 The notion of weak DR-submodularity is a stronger notion of approximate submodularity than weak submodularity, as $\gamma_{S, k} \geq \alpha_{|S|+k-1}$  for all $S \subseteq V, k \in N_+$ \citep[Prop. 8]{ElHalabi2018}. This implies that \Greedy achieves a $(1 - e^{-\alpha_{2k-1}})$-approximation when $F$ is $\alpha_{2k-1}$-weakly DR-submodular.
 
 A stronger approximation guarantee can be obtained with the notion of \emph{total curvature} introduced in \citep{Sviridenko2017}, which is a stronger notion of approximate submodularity than weak DR-modularity.
 \begin{definition}[Total curvature]
Given a set function $F$, we define its total curvature $c_k$ where $k \in \N_+$, as
$$ c_k = 1 - \min_{|S| \leq k, |T| \leq k, i \in V \setminus T} \frac{F(i|S)}{F(i|T)}.$$ 
\end{definition}
Note that if $F$ has total curvature $c_k$, then $F$ is $(1-c_k, 1-c_k)$-weakly DR-modular. Given a non-decreasing function $F$ with total curvature $c_k$, the \Greedy algorithm is  guaranteed to return a solution $F(\hat{S}) \geq (1 - c_k) \max_{|S| \leq k} F(S)$ \citep[Theorem 6]{Sviridenko2017}. 

\subsection{Approximate modularity of reweighted input change}
The reweighted input change objective $F$ is closely related to the column subset selection objective (the latter is a special case of $F$ where $W^{\ell+1}$ is the identity matrix), whose total curvature was shown to be related to the condition number of $A^\ell$ in \citep{Sviridenko2017}.

We show in Propositions \ref{prop:PruningWeaklyDRmod-Asym} and  \ref{prop:PruningWeaklyDRmod} that the total curvatures of $\asymF$ and $F$  are related to the condition number of $A^\ell W^{\ell+1}$.

\begin{proposition}\label{prop:PruningWeaklyDRmod-Asym}
Given $k \in \N_+$, 
$\asymF$ is a normalized non-decreasing $\alpha_k$-weakly DR-submodular function, with $$\alpha_k \geq \frac{\min_{\| z \|_2 = 1} \|(A^\ell W^{\ell+1})^\top z\|^2_2}{\max_{\| z \|_2 = 1} \| (A^\ell W^{\ell+1})^\top z\|^2_2}.$$
Moreover, if any collection of $k+1$ columns of $B^\ell$ are linearly independent, $\asymF$ is also $\alpha_k$-weakly DR-supermodular and has total curvature $1 - \alpha_k$.
\end{proposition}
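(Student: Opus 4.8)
The plan is to reduce everything to a single closed form for the marginal gain and recognize it as a Rayleigh quotient. Starting from \cref{lem:obj-updates-asym}, which gives $\asymF(i \mid S) = \sum_{m=1}^{n_{\ell+1}} \| \proj_{R_S(b_i^\ell)}(A^\ell) w_m^{\ell+1}\|_2^2$, I would substitute the rank-one projection $\proj_{R_S(b_i^\ell)}(a_j^\ell) = R_S(b_i^\ell)\, \tfrac{R_S(b_i^\ell)^\top a_j^\ell}{\|R_S(b_i^\ell)\|_2^2}$ and collect the sum over $j$ against the weights $w_{jm}^{\ell+1}$. When $R_S(b_i^\ell) \neq 0$ this collapses to
$$\asymF(i \mid S) = \frac{\|(A^\ell W^{\ell+1})^\top R_S(b_i^\ell)\|_2^2}{\|R_S(b_i^\ell)\|_2^2},$$
and $\asymF(i \mid S) = 0$ when $R_S(b_i^\ell) = 0$. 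Writing $M = A^\ell W^{\ell+1}$, this is exactly the Rayleigh quotient of $M M^\top$ evaluated at the residual direction $R_S(b_i^\ell)$, so for every $S, i$ with nonzero residual it lies in $[\lambda_{\min}, \lambda_{\max}]$, where $\lambda_{\min} = \min_{\|z\|_2 = 1}\|M^\top z\|_2^2$ and $\lambda_{\max} = \max_{\|z\|_2 = 1}\|M^\top z\|_2^2$. Normalization and monotonicity are immediate, since $\asymF(\emptyset) = 0$ and every marginal gain is a sum of squares.

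For the weak DR-submodularity bound I would fix $S \subseteq T$ and $i \in V \setminus T$. If $R_S(b_i^\ell) \neq 0$, the Rayleigh bounds give $\asymF(i \mid S) \geq \lambda_{\min}$ and $\asymF(i \mid T) \leq \lambda_{\max}$, hence $\asymF(i \mid S) \geq (\lambda_{\min}/\lambda_{\max})\,\asymF(i \mid T)$, which is the claimed value of $\alpha_k$. If instead $R_S(b_i^\ell) = 0$, then $b_i^\ell$ lies in the column space of $B^\ell_S \subseteq B^\ell_T$, so $R_T(b_i^\ell) = 0$ as well, both marginal gains vanish, and the inequality is trivial. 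Notably this direction needs no nondegeneracy hypothesis, precisely because the nesting $S \subseteq T$ propagates the vanishing residual upward.

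The supermodularity and total-curvature claims are where the linear-independence assumption becomes essential, and this is the main obstacle. The danger is the reverse situation, $\asymF(i \mid T) = 0$ while $\asymF(i \mid S) > 0$, which would make $\asymF(i \mid T) \geq \beta_k \asymF(i \mid S)$ fail for any $\beta_k > 0$. This can only occur if $R_T(b_i^\ell) = 0$, i.e., $b_i^\ell$ lies in the span of the columns of $B^\ell_T$. But under the hypothesis that any $k+1$ columns of $B^\ell$ are linearly independent, for $|T| \leq k$ and $i \notin T$ the columns indexed by $T \cup \{i\}$ are linearly independent, so $b_i^\ell \notin \mathrm{col}(B^\ell_T)$ and $R_T(b_i^\ell) \neq 0$. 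The Rayleigh bounds then give $\asymF(i \mid T) \geq \lambda_{\min}$ and $\asymF(i \mid S) \leq \lambda_{\max}$, yielding weak DR-supermodularity with the same ratio $\beta_k = \lambda_{\min}/\lambda_{\max} = \alpha_k$. Finally, for the total curvature I would drop the nesting restriction and apply the two-sided Rayleigh bound to arbitrary $S, T$ of size at most $k$ with $i \notin S \cup T$: linear independence again guarantees $R_T(b_i^\ell) \neq 0$, so $\asymF(i \mid S)/\asymF(i \mid T) \geq \lambda_{\min}/\lambda_{\max} = \alpha_k$ for all such triples, giving $c_k \leq 1 - \alpha_k$. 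The algebra producing the Rayleigh-quotient formula is routine once \cref{lem:obj-updates-asym} is in hand; the real content is isolating exactly which edge case each hypothesis rules out.
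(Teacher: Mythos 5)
Your proposal is correct and follows essentially the same route as the paper's proof: both start from \cref{lem:obj-updates-asym}, collapse the marginal gain to the Rayleigh quotient $\| (A^\ell W^{\ell+1})^\top R_S(b_i^\ell)\|_2^2 / \|R_S(b_i^\ell)\|_2^2$, apply the two-sided eigenvalue bounds, and invoke the $k+1$-column linear independence assumption exactly to rule out vanishing residuals in the supermodularity and curvature directions. The only cosmetic difference is that the paper handles the degenerate-residual case via the norm monotonicity $\|R_S(b_i^\ell)\| \geq \|R_T(b_i^\ell)\|$ and a sparse-eigenvalue lower bound on $\|R_S(b_i^\ell)\|^2$, whereas you argue the same facts directly through column-space containment, which is an equivalent observation.
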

\begin{proof}
We adapt the proof from \citep[Lemma 6]{Sviridenko2017}. 
For all $ S \subseteq V, i \in V \setminus S$, we have  $F(i | S)= \sum_{m=1}^{n_{\ell+1}}  \|  \proj_{R_S(b^\ell_i)}(A^{\ell}) w_m^{\ell+1} \|_2^2$ by \cref{lem:obj-updates-asym}. For all $j \not \in S$, we have $\proj_{R_S(b_i^\ell)}(a_j^\ell) = R_S(b_i^\ell) \frac{R_S(b_i^\ell)^\top a_j^\ell}{\| R_S(b_i^\ell)\|^2}$ if $\| R_S(b_i^\ell)\| > 0$, and $0$ otherwise, by optimality conditions. Hence, we can write for all $i$ such that $\| R_S(b_i^\ell)\| > 0$,
\begin{align*}
\asymF(i|S) = \sum_{m=1}^{n_{\ell+1}}   \|  \proj_{R_S(b_i)}(A^{\ell}) w_m^{\ell+1} \|_2^2 
= \sum_{m=1}^{n_{\ell+1}}\| \sum_{j  \in V} w_{j m}^{\ell+1} R_S(b_i^\ell) \frac{R_S(b_i^\ell)^\top a_j^\ell}{\| R_S(b_i^\ell)\|^2} \|_2^2
= \| (A^\ell W^{\ell+1})^\top \frac{R_S(b_i^\ell)}{\| R_S(b_i^\ell)\|} \|_2^2
\end{align*}
Hence,  
\[ \min_{\| z \|_2 = 1} \|(A^\ell W^{\ell+1})^\top z\|^2_2  \leq \asymF(i | S) \leq  \max_{\| z \|_2 = 1} \| (A^\ell W^{\ell+1})^\top z\|^2_2\]
Let $v_j = x^S(B^\ell)_{ij}$ for $j \in S$, $v_i = -1$, and $z = v / \| v\|_2$, then $\| v \|_2 \geq 1$, and 
\begin{align*}
\| R_S(b_i^\ell)\|^2 = \| B^\ell v\|^2_2 \geq \| B^\ell z\|^2_2 \geq \min_{\| z \|_2 \leq 1, \| z \|_0 \leq |S|+1} \| B^\ell z\|^2_2
\end{align*}
The bound on $\alpha_k$ then follows by noting that $\| R_S(b_i^\ell)\| \geq \| R_T(b_i^\ell)\|$ for all $S \subseteq T$.
The rest of the proposition follows by noting that if any collection of $k+1$ columns of $B^\ell$ are linearly independent, then $\| R_S(b_i^\ell)\| \geq \min_{\| z \|_2 \leq 1, \| z \|_0 \leq k+1} \| B^\ell z\|^2_2 >0$ for any $S$ such that $|S| \leq k$.
\end{proof}

As discussed in Section \ref{sec:addPrelim}, Proposition \ref{prop:PruningWeaklyDRmod-Asym} implies that \Greedy achieves a $(1 - e^{-\alpha_{2k-1}})$-approximation with $\asymF(S)$, where $\alpha_k$ is non-zero if all rows of $A^\ell W^{\ell+1}$ are linearly independent. If in addition any $k + 1$ columns of $B^\ell$ are linearly independent, then \Greedy achieves an $\alpha_k$-approximation.

\begin{proposition}\label{prop:PruningWeaklyDRmod}
Given $k \in \N_+$, 
$F$ is a normalized non-decreasing $\alpha_k$-weakly DR-submodular function, with $$\alpha_k \geq \frac{\max\{ \min_{\| z \|_2 = 1} \|(A^\ell W^{\ell+1})^\top z\|^2_2, \min_{\| z \|_2 = 1} \|  (W^{\ell+1})^\top z\|^2_2 \min_{\| z \|_2 \leq 1, \| z \|_0 \leq k+1} \| A^\ell z\|^2_2\}}{\max_{\| z \|_2 = 1} \| (A^\ell W^{\ell+1})^\top z\|^2_2}.$$
Moreover, if any collection of $k+1$ columns of $A^\ell$ are linearly independent, $F$ is also $\alpha_k$-weakly DR-supermodular and has total curvature $1 - \alpha_k$.
\end{proposition}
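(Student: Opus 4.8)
The plan is to mirror the proof of \cref{prop:PruningWeaklyDRmod-Asym}, specializing to the symmetric objective $F$ and using the marginal-gain formula from \cref{lem:obj-updates-sym}. Setting $B^\ell = A^\ell$ there, and using that $a_j^\ell$ is orthogonal to $R_S(a_i^\ell)$ for $j \in S$ so that the sum over $V \setminus S$ extends to all of $V$, one obtains for every $S \subseteq V_\ell$ and $i \notin S$ with $\|R_S(a_i^\ell)\|_2 > 0$ the closed form $F(i \mid S) = \| (A^\ell W^{\ell+1})^\top R_S(a_i^\ell) \|_2^2 / \|R_S(a_i^\ell)\|_2^2 = \| (A^\ell W^{\ell+1})^\top z\|_2^2$ with $z = R_S(a_i^\ell)/\|R_S(a_i^\ell)\|_2$ a unit vector. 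The upper bound $F(i \mid T) \leq \max_{\|z\|_2 = 1}\|(A^\ell W^{\ell+1})^\top z\|_2^2$ is then immediate and serves as the denominator of $\alpha_k$.

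For the numerator I would establish two lower bounds on $F(i \mid S)$ and take their maximum. The first is the crude bound $F(i \mid S) \geq \min_{\|z\|_2 = 1}\|(A^\ell W^{\ell+1})^\top z\|_2^2$, exactly as in the asymmetric case. The second exploits the structure $R_S(a_i^\ell) = A^\ell v$, where $v$ is the sparse vector with $v_i = -1$ and $v_j = [x^S(a_i^\ell)]_j$ for $j \in S$ (from \cref{lem:LS-sols}), so that $\|v\|_0 \leq |S|+1 \leq k+1$. Writing $(A^\ell W^{\ell+1})^\top R_S(a_i^\ell) = (W^{\ell+1})^\top (A^\ell)^\top A^\ell v$ and peeling off $(W^{\ell+1})^\top$ gives $\|(A^\ell W^{\ell+1})^\top R_S(a_i^\ell)\|_2^2 \geq \min_{\|z\|_2=1}\|(W^{\ell+1})^\top z\|_2^2 \, \|(A^\ell)^\top A^\ell v\|_2^2$.

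The heart of the argument is then to relate $\|(A^\ell)^\top A^\ell v\|_2$ to $\|A^\ell v\|_2 = \|R_S(a_i^\ell)\|_2$: testing the operator norm against the direction $v/\|v\|_2$ yields $\|(A^\ell)^\top A^\ell v\|_2 \geq \langle A^\ell (v/\|v\|_2),\, A^\ell v\rangle = \|A^\ell v\|_2^2/\|v\|_2$. Dividing by $\|R_S(a_i^\ell)\|_2^2 = \|A^\ell v\|_2^2$ collapses the expression to $F(i \mid S) \geq \min_{\|z\|_2=1}\|(W^{\ell+1})^\top z\|_2^2 \cdot \|A^\ell v\|_2^2/\|v\|_2^2$, and since $v/\|v\|_2$ is a unit $(k+1)$-sparse vector, $\|A^\ell v\|_2^2/\|v\|_2^2 \geq \min_{\|z\|_2 = 1,\, \|z\|_0 \leq k+1}\|A^\ell z\|_2^2$. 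This is exactly the second term inside the max, and combined with the crude bound and the denominator it gives the stated lower bound on $\alpha_k$; normalizedness and monotonicity of $F$ are inherited as before. I expect this Cauchy--Schwarz-type step, together with correctly tracking the $(k+1)$-sparsity of $v$, to be the main obstacle, as everything else is a direct transcription of the asymmetric argument.

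Finally, for the stronger conclusion I would assume any $k+1$ columns of $A^\ell$ are linearly independent, which guarantees $\|R_S(a_i^\ell)\|_2^2 \geq \min_{\|z\|_2=1,\,\|z\|_0 \leq k+1}\|A^\ell z\|_2^2 > 0$ for every $|S|\leq k$, so the closed form for $F(i \mid S)$ holds for all relevant $S$ and $i$. Then both $F(i\mid S)$ and $F(i\mid T)$ have the form $\|(A^\ell W^{\ell+1})^\top z\|_2^2$ with $z$ unit, so the same pair of bounds yields $F(i\mid T) \geq \alpha_k F(i \mid S)$ for $S \subseteq T$, i.e.\ $\alpha_k$-weak DR-supermodularity, and hence $\min_{S,T,i} F(i\mid S)/F(i \mid T) \geq \alpha_k$, which by the definition of total curvature gives $c_k \leq 1 - \alpha_k$. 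This mirrors the corresponding step in \cref{prop:PruningWeaklyDRmod-Asym}.
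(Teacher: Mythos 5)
Your proposal is correct, and it follows the paper's proof skeleton: the same closed form $F(i \mid S) = \|(A^\ell W^{\ell+1})^\top R_S(a_i^\ell)\|_2^2 / \|R_S(a_i^\ell)\|_2^2$ (from \cref{lem:obj-updates-sym}, equivalently \cref{lem:obj-updates-asym} with $B^\ell = A^\ell$), the same operator-norm upper bound in the denominator of $\alpha_k$, the same crude first lower bound, and the same sparse vector $v$ (with $v_i = -1$, $\|v\|_0 \le |S|+1$, $\|v\|_2 \ge 1$) behind the second one. Where you genuinely diverge is the middle inequality, which you rightly call the heart of the argument: the paper factors the marginal gain as $\|(W^{\ell+1})^\top q\|_2^2 \, \|R_S(a_i^\ell)\|_2^2$, where $q$ is the vector of correlations $\gamma^{S,i}(a_j^\ell)$ for $j \notin S$, and uses the optimality condition $\gamma^{S,i}(a_i^\ell) = 1$ (so $q_i = 1$, hence $\|q\|_2 \ge 1$) before bounding $\|R_S(a_i^\ell)\|_2^2 = \|A^\ell v\|_2^2$ by the sparse eigenvalue; you instead write $(A^\ell W^{\ell+1})^\top R_S(a_i^\ell) = (W^{\ell+1})^\top (A^\ell)^\top A^\ell v$ and apply Cauchy--Schwarz, $\|(A^\ell)^\top A^\ell v\|_2 \ge \|A^\ell v\|_2^2/\|v\|_2$. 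Both are one-line facts and both land on the stated bound; the paper's route retains a sparsity restriction $\|z\|_0 \le |V \setminus S|$ in the $(W^{\ell+1})^\top$ minimum (a slightly tighter intermediate bound that the statement discards anyway), while yours is more self-contained, never invoking the $q_i = 1$ observation, at the price of an extra $\|v\|_2^2 \ge 1$ factor that you correctly absorb into the sparse minimum. Two small remarks: your sparse factor $\min_{\|z\|_2 = 1, \|z\|_0 \le k+1} \|A^\ell z\|_2^2$ is the meaningful, non-vacuous reading of the constraint the statement writes as $\|z\|_2 \le 1$, so you prove the intended bound; and for the weak DR-submodularity direction you should add the one-line degenerate case (the paper handles it via the remark $\|R_S(a_i^\ell)\|_2 \ge \|R_T(a_i^\ell)\|_2$ for $S \subseteq T$): if $R_S(a_i^\ell) = 0$ then $R_T(a_i^\ell) = 0$ as well, so both marginals vanish and the inequality holds trivially --- your positivity discussion via linear independence only covers the supermodular and curvature claims.
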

\begin{proof}
Setting $B^\ell = A^\ell$ in \cref{lem:obj-updates-asym}, we get 
\[\alpha_k \geq \frac{\min_{\| z \|_2 = 1} \|(A^\ell W^{\ell+1})^\top z\|^2_2}{\max_{\| z \|_2 = 1} \| (A^\ell W^{\ell+1})^\top z\|^2_2}\]
To obtain the second lower bound, we note that by \cref{lem:obj-updates-sym} we can write for all $ S \subseteq V, i \in V \setminus S$ such that $\| R_S(a_i^\ell)\| > 0$, $$F(i | S) = \sum_{m=1}^{n_{\ell+1}} (\sum_{j  \not \in S} w_{j m}^{\ell+1} \frac{R_S(a_i^\ell)^\top a_j^\ell}{\| R_S(a_i^\ell)\|^2})^2 \|R_S(a_i^\ell) \|_2^2 = \| (W^{\ell+1})^\top (A^\ell_{V \setminus S})^\top \frac{ R_S(a_i^\ell)}{\| R_S(a_i^\ell)\|} \|_2^2 \|R_S(a_i^\ell) \|_2^2.$$ 
Note that $\gamma^{S, i}(a_i) = \frac{R_S(a_i^\ell)^\top a_i^\ell}{\| R_S(a_i^\ell)\|^2} = 1$ by optimality conditions ($R_S(a_i^\ell)^\top a_i^\ell = R_S(a_i^\ell)^\top (R_S(a_i^\ell) + A_S x^S(ai)) = \| R_S(a_i^\ell) \|_2^2$; see proof of  \cref{lem:LS-sols}), hence $\| (A^\ell_{V \setminus S})^\top \frac{ R_S(a_i^\ell)}{\| R_S(a_i^\ell)\|} \|_2^2 \geq 1$ and $\| (W^{\ell+1})^\top (A^\ell_{V \setminus S})^\top \frac{ R_S(a_i^\ell)}{\| R_S(a_i^\ell)\|} \|_2^2 \geq \min_{\| z \|_2 = 1, \| z \|_0 \leq |V \setminus S|} \|  (W^{\ell+1})^\top z\|^2_2 $.
Let $v_j = x^S(A^\ell)_{ij}$ for $j \in S$, $v_i = -1$, and $z = v / \| v\|_2$, then $\| v \|_2 \geq 1$, and 
\begin{align*}
\| R_S(a_i^\ell)\|^2 = \| A^\ell v\|^2_2 \geq \| A^\ell z\|^2_2 \geq \min_{\| z \|_2 \leq 1, \| z \|_0 \leq |S|+1} \| A^\ell z\|^2_2
\end{align*}
We thus have $F(i | S)  \geq \min_{\| z \|_2 = 1, \| z \|_0 \leq |V \setminus S|} \|  (W^{\ell+1})^\top z\|^2_2 \min_{\| z \|_2 \leq 1, \| z \|_0 \leq |S|+1} \| A^\ell z\|^2_2$. 

The bound on $\alpha_k$ then follows by noting that $\| R_S(a_i^\ell)\| \geq \| R_T(a_i^\ell)\|$ for all $S \subseteq T$.
The rest of the proposition follows by noting that if any collection of $k+1$ columns of $A^\ell$ are linearly independent, then $\| R_S(a_i^\ell)\| \geq \min_{\| z \|_2 \leq 1, \| z \|_0 \leq k+1} \| A^\ell z\|^2_2 >0$ for any $S$ such that $|S| \leq k$.
\end{proof}
As discussed in Section \ref{sec:addPrelim}, Proposition \ref{prop:PruningWeaklyDRmod-Asym} implies that \Greedy achieves a $(1 - e^{-\alpha_{2k-1}})$-approximation with $F(S)$, where $\alpha_k$ is non-zero if all rows of $A^\ell W^{\ell+1}$ are linearly independent. Moreover, 
if any $k + 1$ columns of $A^\ell$ are linearly independent and all rows of $W^{\ell+1}$ are linearly independent, then \Greedy achieves an $\alpha_k$-approximation.


It is worth noting that if $W^{\ell+1}$ is identity matrix, i.e., $F$ is the column subset selection function, then  \cref{prop:PruningWeaklyDRmod} implies a stronger result than \citep[Lemma 6]{Sviridenko2017} for some cases. In particular, we have $$\alpha_k \geq \frac{\max\{\min_{\| z \|_2 \leq 1, \| z \|_0 \leq k+1} \| A^\ell z\|^2_2, \min_{\| z \|_2 = 1} \| (A^\ell)^\top z\|^2_2\}}{\max_{\| z \|_2 \leq 1} \|( A^\ell)^\top z\|^2_2\}},$$ which implies that $F$ is weakly DR-submodular if any $k$ columns of $A^\ell$ are linearly independent, or if all rows of $A^\ell$ are linearly independent.

\section{Empirical values of the submodularity ratio} \label{app:gammaValues}
\mtodo{I replaced all 0.5 values by 1, since in this case $A^\ell$ is full rank, hence any $k$ works. discuss also the empirical values of alpha}

As discussed in \cref{sec:Greedy}, computing the lower bounds on the submodularity ratio $\gamma_{U, k}$ in Proposition \ref{prop:WeaklySub} and \ref{prop:WeaklySubChannels} is NP-Hard \citep{Das2011} ($\min_{\| z \|_2 = 1, \| z \|_0 \leq |U|+k} \|A^\ell z\|_2^2$ corresponds to $\lambda_{\min}(C, |U| +k)$ in their notation). One simple lower bound that can be obtained from the eigenvalue interlacing theorem is $\gamma_{U, k} \geq \frac{\lambda_{\min}((A^\ell)^\top A^\ell)}{\lambda_{\max}((A^\ell)^\top A^\ell)}$. However, such bound is too loose as it is often equal to zero. For this reason, we focus on when our lower bounds on $\gamma_{\hat{S}, k}$ are non-zero: the lower bounds in Proposition \ref{prop:WeaklySub} and \ref{prop:WeaklySubChannels} are non-zero if any $\min\{2 k, n_\ell \}$ and $\min\{2 k, n_\ell \} r_h r_w$ columns of $A^\ell$ are linearly independent, respectively.  We report in Table \ref{table:gammaValues-all} an upper bound on $k$ for which these conditions hold, based on the rank of the activation matrix $A^\ell$, in each pruned layer in the three models we used in our experiments. 

\begin{table}[!ht]
\caption{Largest possible $k$ for which our lower bounds on the submodularity ratio $\gamma_{\hat{S}, k}$ are non-zero, when using all patches per image.}\label{table:gammaValues-all}
    \centering
    \begin{adjustbox}{width=1\textwidth}
    \begin{tabular}{l|l|l}
\toprule
  Dataset &  Model & upper bound on $k / n_\ell$  (all patches, $n=512$) \\ 
        \midrule
  MNIST &  LeNet & conv1: 0.37, conv2: 1, fc1: 0.46, fc2: 0.49 \\ \hline
  CIFAR10 &  VGG11 & features.0: 0.22, features.4: 0.39, features.8: 0.25,  features.11: 0.28, features.15: 0.06, \\ 
      &  & features.18: 0.03,  features.22: 0.01, classifier.0: 1, classifier.3: 1 \\ \hline
 CIFAR10 & ResNet56 & layer1.0.conv1: 0.12, layer1.1.conv1: 0.15, layer1.2.conv1: 0.22,  layer1.3.conv1: 0.22, \\ 
      &  & layer1.4.conv1: 0.30, layer1.5.conv1: 0.28,  layer1.6.conv1: 0.44, layer1.7.conv1: 0.35, \\ 
      &  & layer1.8.conv1: 0.15,  layer2.0.conv1: 0.48, layer2.1.conv1: 0.47, layer2.2.conv1: 0.48, \\ 
      &  & layer2.3.conv1: 0.47, layer2.4.conv1: 0.48, layer2.5.conv1: 0.47,  layer2.6.conv1: 0.48, \\ 
      &  & layer2.7.conv1: 0.47, layer2.8.conv1: 0.48,  layer3.0.conv1: 0.47, layer3.1.conv1: 0.49, \\ 
      &  & layer3.2.conv1: 0.46,  layer3.3.conv1: 0.48, layer3.4.conv1: 0.48, layer3.5.conv1: 0.49, \\ 
      &  & layer3.6.conv1: 0.48, layer3.7.conv1: 0.48, layer3.8.conv1: 1 \\ \bottomrule
    \end{tabular}
    \end{adjustbox}

\end{table}

We observe that the upper bound is close to $0.5$ for most linear layers, but is very small in some convolution layers (e.g., features.15, 18, 22 in VGG11). 
As explained in Section \ref{sec:onelayer-channels}, the linear independence condition required for convolution layers (Proposition 4.1) only holds for very small $k$, due to the correlation between patches which overlap. This can be avoided in most layers by sampling $r_h r_w$ random patches from each image (to ensure $A^\ell$ is a tall matrix), instead of using all patches. We report in \cref{table:gammaValues-rnd} the corresponding upper bounds on $k$ in this setting for the VGG11 model.

\begin{table}[!ht]
\caption{Largest possible $k$ for which our lower bounds on the submodularity ratio $\gamma_{\hat{S}, k}$ are non-zero, in VGG11 on CIFAR10, when using $r_h r_w$ random patches per image.}\label{table:gammaValues-rnd}
    \centering
    \begin{adjustbox}{width=1\textwidth}
    \begin{tabular}{l|l|l}
\toprule
 Dataset & Model & upper bound on $k / n_\ell$  (random patches, $n=512$) \\ 
 \midrule
 CIFAR10 & VGG11 & features.0: 0.38, features.4: 0.43, features.8: 0.29, features.11: 0.31, features.15: 0.15, \\
        & & features.18: 0.08, features.22: 0.1, classifier.0: 1, classifier.3: 1 \\ \bottomrule
    \end{tabular}
        \end{adjustbox}
\end{table}


The upper bounds are indeed larger than the ones obtained with all patches. However, some layers (e.g., features.15, 18, 22) have a very small feature map size (respectively $4\times 4, 4\times 4, 2\times 2$) so that even the small number of random patches have significant overlap, resulting in still a very small upper bound. Our experiments with random patches on VGG11 yielded worst results, so we chose to use all patches.
Note that our lower bounds on $\gamma_{\hat{S}, k}$ are not necessarily tight (see \cref{app:tightness}). Hence, if $k$ is outside these ranges, our lower bound on $\gamma_{\hat{S}, k}$ is zero, but not necessarily $\gamma_{\hat{S}, k}$ itself; indeed in our experiments our methods still perform well in these cases. 

\section{Tightness of lower bounds on the submodularity ratio}\label{app:tightness}

In this section, we investigate how tight are the lower bounds on the submodularity ratio $\gamma_{U, k}$ in Propositions \ref{prop:WeaklySub} and \ref{prop:WeaklySubChannels}. One trivial example where these bounds are tight is when $A^\ell$ is the identity matrix. In this case, both $F$ and $G$ are submodular, hence their corresponding $\gamma_{U, k}=1$ for all $U$ and $k$, and the lower bounds in both Proposition 3.1 and 4.1 are also equal to one. We present below another more interesting example where the bounds are tight.

\begin{proposition}
Given any matrix $A^\ell$ whose columns have equal norm, there exists a matrix $W^{\ell+1}$ such that the corresponding function $F$ has $\gamma_{\emptyset, k} = \frac{\min_{\| z \|_2 = 1, \| z \|_0 \leq k} \|A^\ell z\|_2^2}{\max_{\| z \|_2 = 1, \| z \|_0 \leq 1} \|A^\ell z\|^2_2}$.
\end{proposition}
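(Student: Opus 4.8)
The plan is to reduce everything to two closed forms and then play a lower bound against a matching upper bound. First I would record that, writing $B = A^\ell W^{\ell+1}$ and letting $P_S$ denote the orthogonal projector onto $\mathrm{span}\{a_i^\ell : i \in S\}$, the objective satisfies $F(S) = \|P_S B\|_F^2$ (by Pythagoras, since the inner minimization in \eqref{eq:F} projects each column $B_m$ orthogonally onto $\mathrm{col}(A^\ell_S)$). Evaluating this at a singleton and using $\|a_i^\ell\|_2^2 = c$ for all $i$ (the equal-norm hypothesis), I get $F(\{i\}) = \|B^\top a_i^\ell\|_2^2 / c$. Since $F$ is normalized, \cref{def:weaklySub} with $U=\emptyset$ (so $L=\emptyset$) makes the submodularity ratio an explicit minimum,
\[ \gamma_{\emptyset, k} = \min_{|S| \le k,\, F(S) > 0} \frac{\sum_{i \in S} F(\{i\})}{F(S)} = \min_{|S|\le k,\,F(S)>0} \frac{1}{c}\,\frac{\Tr\big(B^\top M_S B\big)}{\Tr\big(B^\top P_S B\big)}, \]
where $M_S = \sum_{i\in S} a_i^\ell (a_i^\ell)^\top$; the equal-norm assumption is exactly what makes the numerator's operator equal to $M_S$ rather than a reweighted version of it.

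Next I would invoke the general lower bound already proved: \cref{prop:WeaklySub} with $U=\emptyset$ gives, for every $W^{\ell+1}$, that $\gamma_{\emptyset,k} \ge \lambda_k/c$, where $\lambda_k := \min_{\|z\|_2=1,\|z\|_0\le k}\|A^\ell z\|_2^2$ and the denominator $\max_{\|z\|_2=1,\|z\|_0\le 1}\|A^\ell z\|_2^2 = c$ by equal norms. Hence it remains only to exhibit one $W^{\ell+1}$ and one set $S$ attaining this value, which forces $\gamma_{\emptyset,k}\le \lambda_k/c$ and closes the gap. I will assume $\lambda_k>0$, i.e. any $k$ columns of $A^\ell$ are linearly independent; this is necessary, since $F(S)>0$ forces $P_S B\neq 0$, hence some $a_i^\ell$ ($i\in S$) is not orthogonal to $\mathrm{col}(B)$ and so $\sum_{i\in S}F(\{i\})>0$. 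Thus $\gamma_{\emptyset,k}=0$ is unattainable for any nondegenerate $W^{\ell+1}$, and the statement is vacuous when $\lambda_k=0$.

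For the construction, let $T^*$ (with $|T^*|\le k$) and a unit vector $z^*$ supported on $T^*$ attain $\|A^\ell z^*\|_2^2 = \lambda_k$, and set $u := A^\ell z^*$. The key algebraic fact I would verify is that $u$ is an eigenvector of $M_{T^*}$ with eigenvalue $\lambda_k$: since $z^*$ is the minimal eigenvector of the Gram matrix, $(A^\ell_{T^*})^\top A^\ell_{T^*} z^* = \lambda_k z^*$, whence $M_{T^*} u = A^\ell_{T^*}(A^\ell_{T^*})^\top A^\ell_{T^*} z^* = \lambda_k u$, and also $\|u\|_2^2 = \lambda_k$ with $u \in \mathrm{span}\{a_i^\ell : i\in T^*\}$. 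I then choose $W^{\ell+1}$ so that every column of $B = A^\ell W^{\ell+1}$ equals $u$ (e.g. $W^{\ell+1} = z^* \1^\top$), which is admissible precisely because $u\in\mathrm{col}(A^\ell)$. For this $B$ I would compute $F(T^*) = \|P_{T^*}B\|_F^2 = n_{\ell+1}\|u\|_2^2 = n_{\ell+1}\lambda_k$ (using $P_{T^*}u = u$), and $\sum_{i\in T^*}F(\{i\}) = \tfrac{n_{\ell+1}}{c}\,u^\top M_{T^*} u = \tfrac{n_{\ell+1}}{c}\lambda_k^2$; their ratio is exactly $\lambda_k/c$.

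The routine parts are the two closed-form identities and the final arithmetic; the one genuinely load-bearing step — and the place I would be most careful — is the eigenvector identification $M_{T^*}u = \lambda_k u$ together with the observation that equal column norms are what align the numerator operator with $M_{T^*}$, so that the same vector $u$ simultaneously carries all the denominator projection energy ($P_{T^*}u=u$) and minimizes the numerator quadratic form. Combining the two bounds then pinches $\gamma_{\emptyset,k}$ to $\lambda_k/c = \min_{\|z\|_2=1,\|z\|_0\le k}\|A^\ell z\|_2^2 / \max_{\|z\|_2=1,\|z\|_0\le 1}\|A^\ell z\|_2^2$, which is the claimed expression.
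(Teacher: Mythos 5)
Your proof is correct and takes essentially the same route as the paper's: you choose the sparse minimizer $(T^*, z^*)$, set every column of $W^{\ell+1}$ proportional to $z^*$ (the paper's choice $c\, v_{\min}^S$ is the same up to scale), compute the singleton-to-set ratio $\sum_{i \in T^*} F(\{i\})/F(T^*) = \lambda_k / c$, and pinch against the lower bound of \cref{prop:WeaklySub}; your key identity $M_{T^*} u = \lambda_k u$ is exactly the paper's SVD relation $(A^\ell_S)^\top u_{\min}^S = \sigma_{\min}^S v_{\min}^S$ written in projector/eigenvector language. The only difference is that you explicitly flag the degenerate case $\lambda_k = 0$ (where $F(T^*) = 0$ and the upper-bound division fails), a detail the paper's proof silently assumes away.
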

\begin{proof}
Given a set $S$, let $v_{\min}^S, u_{\min}^S$ be the right and left singular vectors of $A^\ell_S$ corresponding to the smallest singular value $\sigma_{\min}^S$ of $A^\ell_S$, i.e., $A^\ell_S  v_{\min}^S = \sigma_{\min}^S u_{\min}^S$ and $(A^\ell_S)^\top  u_{\min}^S = \sigma_{\min}^S v_{\min}^S$. We consider the case where all columns of $A^\ell$ have equal norm, which we denote by $\sigma_{\max}^1$, and $W^{\ell+1}$ have all columns equal to $c v_{\min}^S$ for some scalar $c>0$. Then for all $m$, $A^\ell_S w^{\ell+1}_m = c \sigma_{\min}^S u_{\min}^S$ and the minimum of $\min_{\supp(x) \subseteq S} \|c \sigma_{\min}^S u_{\min}^S - A^\ell x\|_2^2$ is obtained at $x_S = c v_{\min}^S$. We can thus write
\begin{align*}
F(S) &= \sum_{m=1}^{n_{\ell+1}} \|c \sigma_{\min}^S u_{\min}^S\|_2^2  - \min_{\supp(x) \subseteq S} \|c \sigma_{\min}^S u_{\min}^S - A^\ell x\|_2^2\\
&= \sum_{m=1}^{n_{\ell+1}} (c \sigma_{\min}^S)^2  - \|c \sigma_{\min}^S u_{\min}^S - c\sigma_{\min} u_{\min}^S\|_2^2\\
&= n_{\ell+1}  (c \sigma_{\min}^S)^2
\end{align*}
On the other hand, for any $i \in S$, let $y =  c \sigma_{\min}^S u_{\min}^S$, the minimum of $\min_{\supp(x) \subseteq \{i\}} \|y - A^\ell x\|_2^2$ is obtained at $x_i = \tfrac{(a^\ell_i)^\top y}{\| a^\ell_i\|_2^2}$. We have
\begin{align*}
F(i) &= \sum_{m=1}^{n_{\ell+1}} \|y\|_2^2  - \min_{\supp(x) \subseteq \{i\}} \|y- A^\ell x\|_2^2\\
&=\sum_{m=1}^{n_{\ell+1}} \|y\|_2^2  - \|y- a^\ell_i \tfrac{(a^\ell_i)^\top y}{\| a^\ell_i\|_2^2} \|_2^2\\
&= n_{\ell+1} \frac{((a^\ell_i)^\top y)^2}{\| a^\ell_i\|_2^2}
\end{align*}
Hence, 
\begin{align*}
\sum_{i \in S} F(i) &= n_{\ell+1} \sum_{i \in S} \frac{((a^\ell_i)^\top y)^2}{\| a^\ell_i\|_2^2} \\
&= n_{\ell+1} \frac{\| (A^\ell_S)^\top y \|_2^2}{(\sigma_{\max}^1)^2}\\
&= n_{\ell+1} \frac{c^2 (\sigma_{\min}^S)^4}{(\sigma_{\max}^1)^2}
\end{align*}
Then $$\gamma_{\emptyset, k} \leq \frac{\sum_{i \in S} F(i)}{F(S)} = \frac{(\sigma_{\min}^S)^2}{(\sigma_{\max}^1)^2} = \frac{\min_{\| z \|_2 = 1, \supp(z) = S} \|A^\ell z\|_2^2}{\max_{\| z \|_2 = 1, \| z \|_0 = 1} \|A^\ell z\|^2_2}.$$
If we choose $S$ such that $S \in \argmin_{|S| \leq k} (\sigma_{\min}^S)^2$,  we get $\gamma_{\emptyset, k} = \frac{\min_{\| z \|_2 = 1, \| z \|_0 \leq k} \|A^\ell z\|_2^2}{\max_{\| z \|_2 = 1, \| z \|_0 \leq 1} \|A^\ell z\|^2_2}$ by \cref{prop:WeaklySub}.
\end{proof}
\looseness=-1 Since $F$ is a special case of $G$ where $M$ is the identity map, the above example applies to $G$ too.
On the other hand, there are also cases where these bounds are not tight. In particular, there are cases where the lower bound on $\alpha_{|U|+k-1}$ in 
\cref{prop:PruningWeaklyDRmod} is larger than the lower bound on $\gamma_{U, k}$ in \cref{prop:WeaklySub}, which implies that the latter is not tight since $\gamma_{U, k} \geq \alpha_{|U|+k-1}$ (see Section \ref{sec:addPrelim}). For example, if all rows of $A^\ell W^{\ell+1}$ are linearly independent, but there exists $2k$ columns of $A^\ell$ which are linearly dependent, then the bound in \cref{prop:WeaklySub} is zero while the one in  \cref{prop:PruningWeaklyDRmod} is not.
These borderline cases are unlikely to occur in practice. Whether we can tighten these lower bounds based on realistic assumptions on the weights and activations is an interesting future research question.
\mtodo{discuss also the tightness of our lower bounds on alpha}
\section{Experimental setup}\label{app:setup}

Our code uses Pytorch \citep{Paszke2017} and builds on the open source ShrinkBench library introduced in \citep{Blalock2020}. We use the code from \citep{Buschjaeger2020} for \Greedy. Our implementation of \LayerSampling is adapted from the code provided in \citep{Liebenwein2020}. We implemented the version of \LayerGreedyFS implemented in the code of \citep{Ye2020}, which differs from the version described in the paper: added neurons/channels are not allowed to be repeated. We use the implementation of LeNet and ResNet56 included in ShrinkBench \citep{Blalock2020}, and a modified version of the implementation of VGG11 provided in \citep{Phan2021}, where we changed the number of neurons in the first two layers to $128$.  

\mtodo{add hardware used, just list type of CPUs and GPUs on the clusters I used, and explain that exps were run on various clusters with various resources.}

We conducted experiments on 3 different clusters with the following resources (per experiment):
\begin{itemize}
\item Cluster 1:  1 $\times$ NVidia A100 with 40G memory, 20 $\times$ AMD Milan 7413 @ 2.65 GHz  / AMD Rome 7532 @ 2.40 GHz 
\item Cluster 2: 1 $\times$ NVIDIA P100 Pascal with 12G/16G memory / NVIDIA V100 Volta with 32G memory, 20 $\times$ Intel CPU of various types 
\item Cluster 3: 1 $\times$ NVIDIA Quadro RTX 8000 with 48G memory / NVIDIA Tesla M40 with 24G memory / NVIDIA TITAN RTX with 24G memory, 6 $\times$ CPU of various types. 
\end{itemize}
Pruning and fine-tuning with limited data was done on CPUs for all methods, a GPU was used only when fine-tuning with full data.

All our experiments used the following setup:

Random seeds: $42,43,44,45,46$

Pruning setup:
\begin{itemize}
\item Number of Batches: 4 (sampled at random from the training set)
\item Batch size: 128
\item Values used for compression ratio: $$c \in \{1, 2, 4, 8, 16, 32, 64, 128\}$$
\item Values used for per-layer fraction selection: 
$$\alpha_\ell \in \{0.01, 0.05,
 0.075, 0.1, 0.15, 0.2, \cdots , 0.95, 1.0\}$$
\item Verification set used for the budget selection method in \cref{sec:fractionSel}: random subset of training set of same size as validation set.
\end{itemize}

Training and fine-tuning setup for LeNet on MNIST:
\begin{itemize}
\item Batch size: 128
\item Epochs for pre-training: 200
\item Epochs for fine-tuning: 10
\item Optimizer for pre-training: SGD with Nestrov momentum $0.9$
\item Optimizer for fine-tuning: Adam with $\beta_1=0.9$ and $\beta_2= 0.99$
\item Initial learning rate: $1 \times 10^{-3}$
\item Learning rate schedule: Fixed
\end{itemize}

Training and fine-tuning setup for VGG11 on CIFAR10:
\begin{itemize}
\item Batch size: 128
\item Epochs for pre-training: 200
\item Epochs for fine-tuning: 20
\item Optimizer for pre-training: Adam with $\beta_1=0.9$ and $\beta_2= 0.99$
\item Optimizer for fine-tuning: Adam with $\beta_1=0.9$ and $\beta_2= 0.99$
\item Initial learning rate: $1 \times 10^{-3}$
\item Weight decay: $5 \times 10^{-4}$
\item Learning rate schedule for pre-training: learning rate dropped by $0.1$ at epochs $100$ and $150$
\item Learning rate schedule for fine-tuning: learning rate dropped by $0.1$ at epochs $10$ and $15$
\end{itemize}

The setup for fine-tuning ResNet56 on CIFAR10 is the same one used for VGG, as outlined above.

\section{Effect of fine-tuning}\label{app:finetuning} 

\begin{figure}
\vspace{-25pt}
\begin{subfigure}{.31\textwidth}
 \centering
 \hspace*{-10pt}
\includegraphics[trim=65 50 1410 35, clip, scale=0.1]{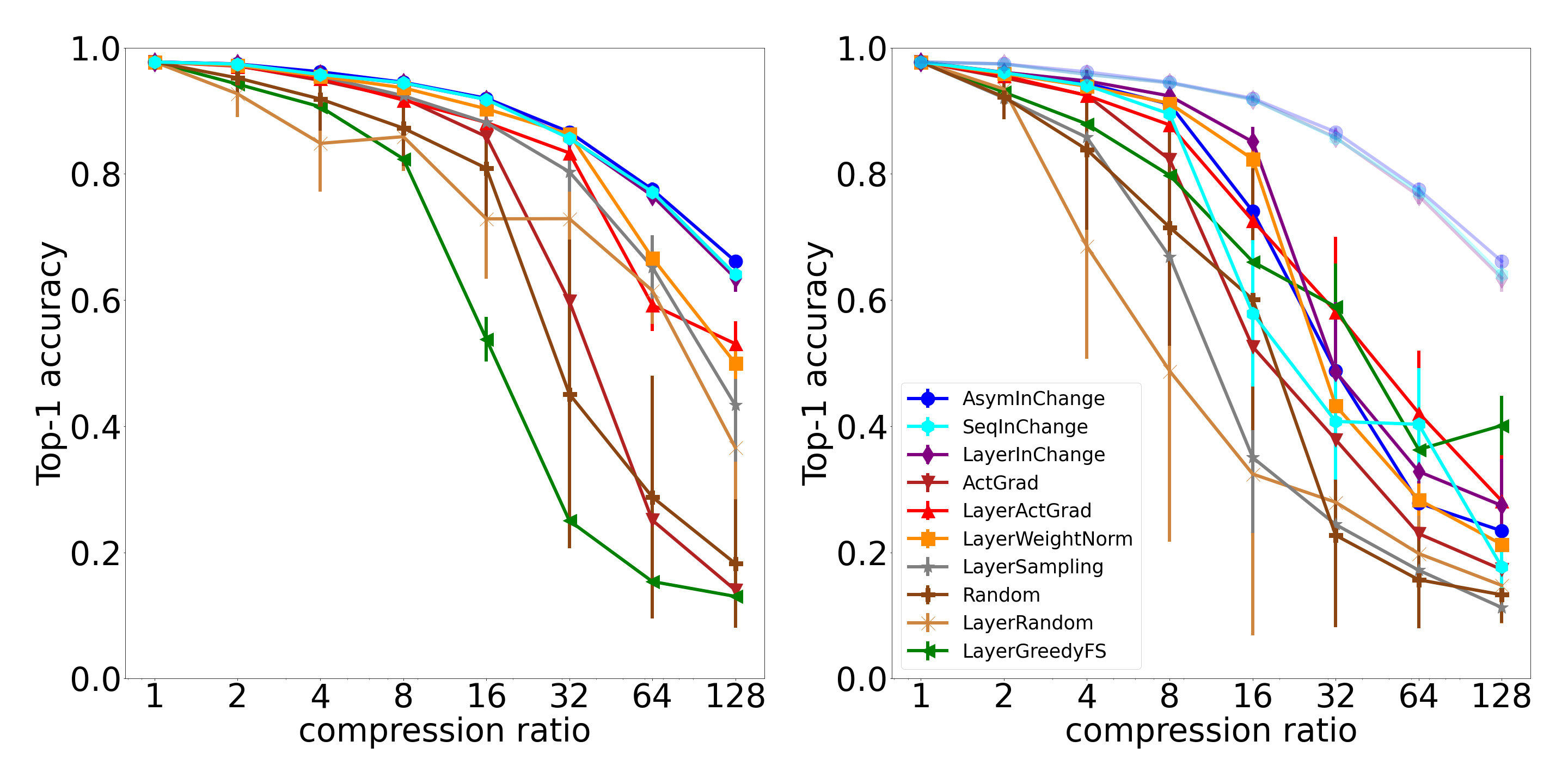}
\end{subfigure}\hfill
\begin{subfigure}{.31\textwidth}
 \centering
   \hspace*{-5pt}
\includegraphics[trim=65 50 1430 35, clip, scale=0.1]{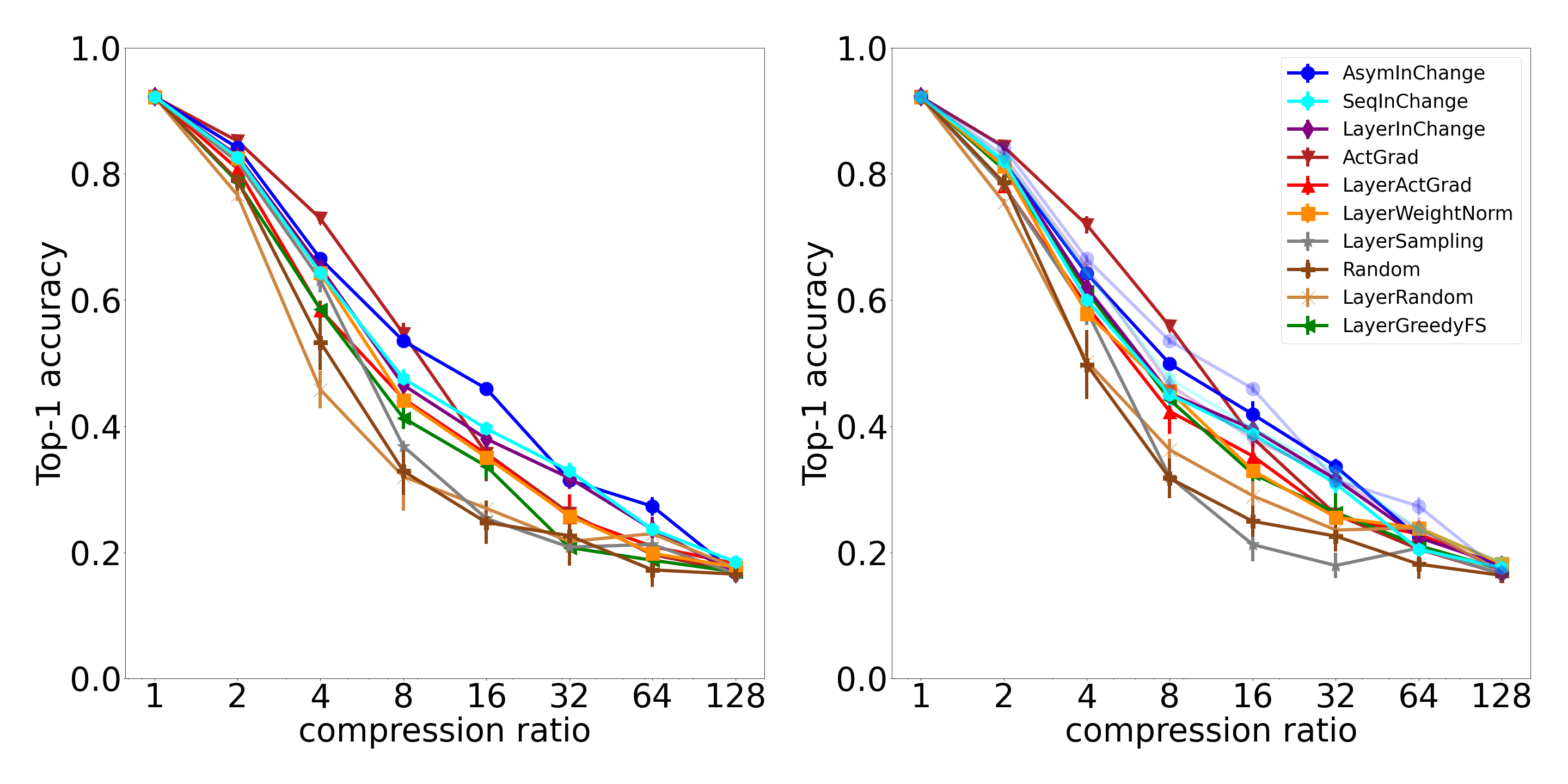}
\end{subfigure}\hfill
\begin{subfigure}{.31\textwidth}
 \centering
\includegraphics[trim=65 50 1410 35, clip, scale=0.1]{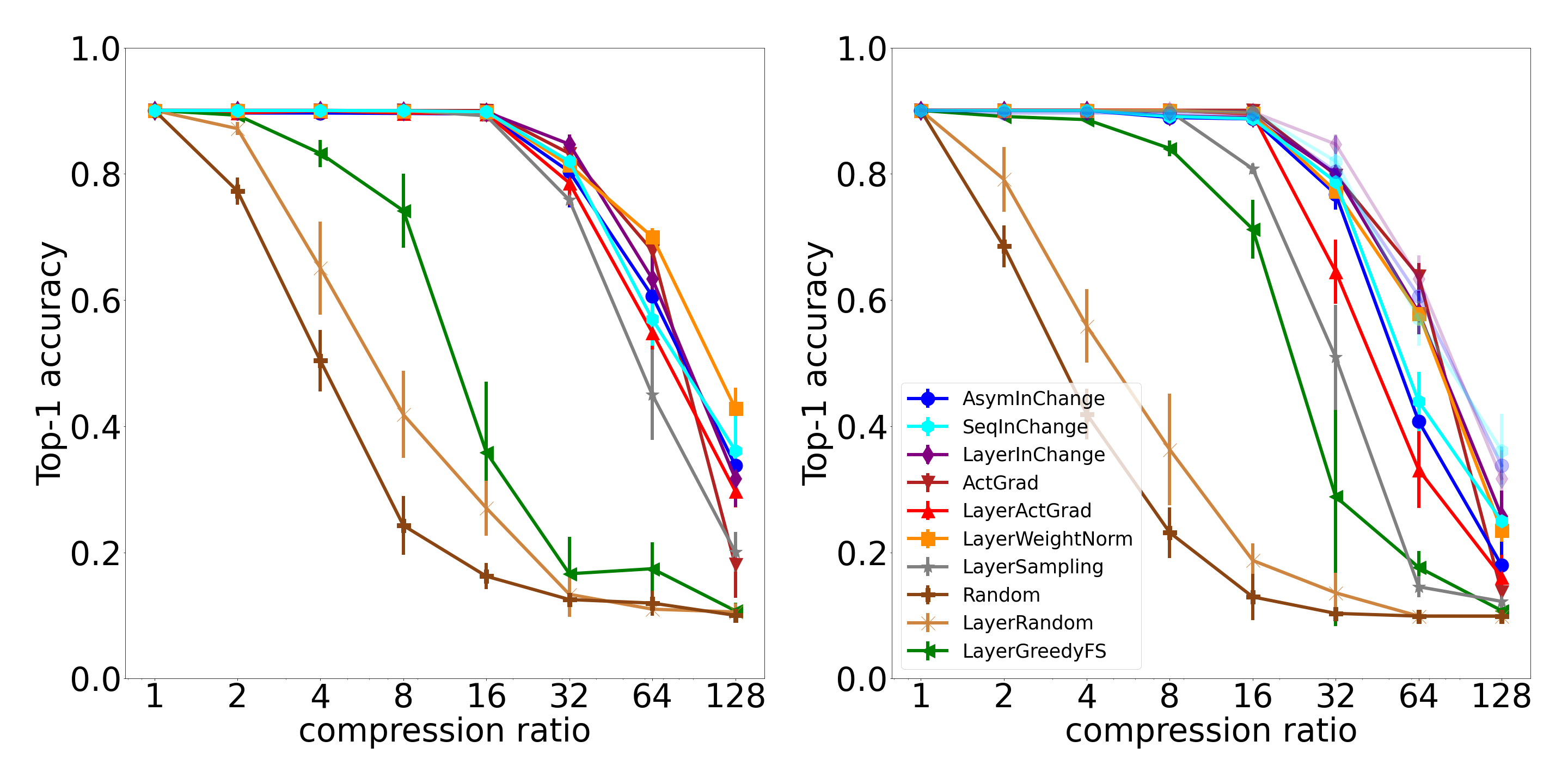}
\end{subfigure}
\\
\begin{subfigure}{.31\textwidth}
 \centering
  \hspace*{-10pt}
\includegraphics[trim=1475 50 0 35, clip, scale=0.1]{acc1-compression-finetuned-7490ce2-2325370-all.png}
\end{subfigure}\hfill
\begin{subfigure}{.31\textwidth}
 \centering
   \hspace*{-5pt}
\includegraphics[trim=1450 50 0 35, clip, scale=0.1]{acc1-compression-finetuned-7490ce2-9872006-all.png}
\end{subfigure}\hfill
\begin{subfigure}{.31\textwidth}
 \centering
\includegraphics[trim=1475 50 0 35, clip, scale=0.1]{acc1-compression-finetuned-6237752-46028840-all.png}
\end{subfigure}
\caption{\label{fig:finetuned-ld} \looseness=-1 Top-1 Accuracy of different pruning methods, after fine-tuning with four batches of training data, applied to LeNet on MNIST (left), ResNet56 on CIFAR10 (middle), and VGG11 on CIFAR10 (right), for several compression ratios (in log-scale), with (top) and without (bottom) reweighting. We include the three reweighted variants of our method in the bottom plots (faded) for reference.} 
\end{figure}

\begin{figure}
\vspace{-5pt}
\begin{subfigure}{.31\textwidth}
 \centering
 \hspace*{-10pt}
\includegraphics[trim=65 50 1410 35, clip, scale=0.1]{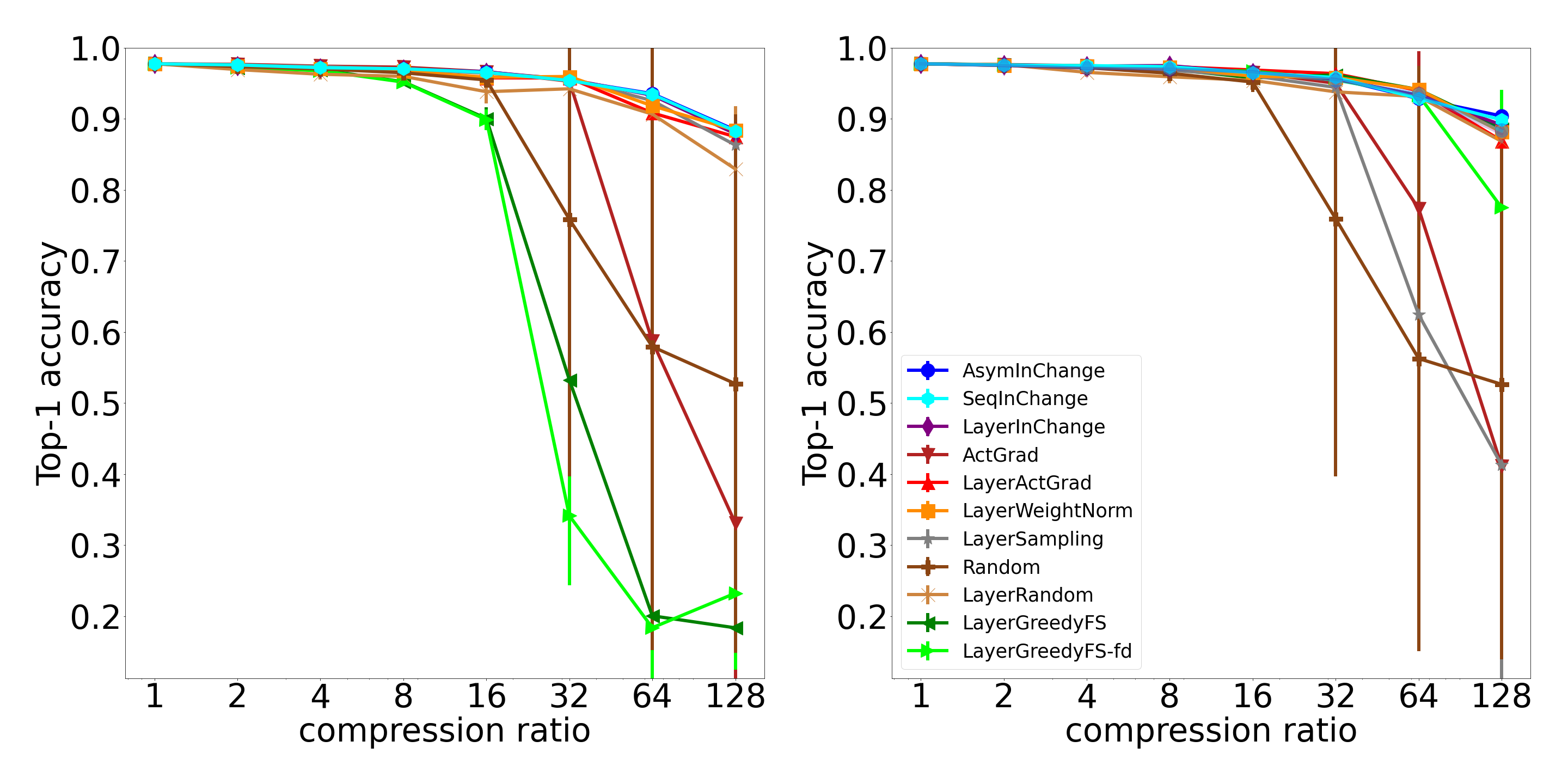}
\end{subfigure}\hfill
\begin{subfigure}{.31\textwidth}
 \centering
  \hspace*{-5pt}
\includegraphics[trim=65 50 1430 35, clip, scale=0.1]{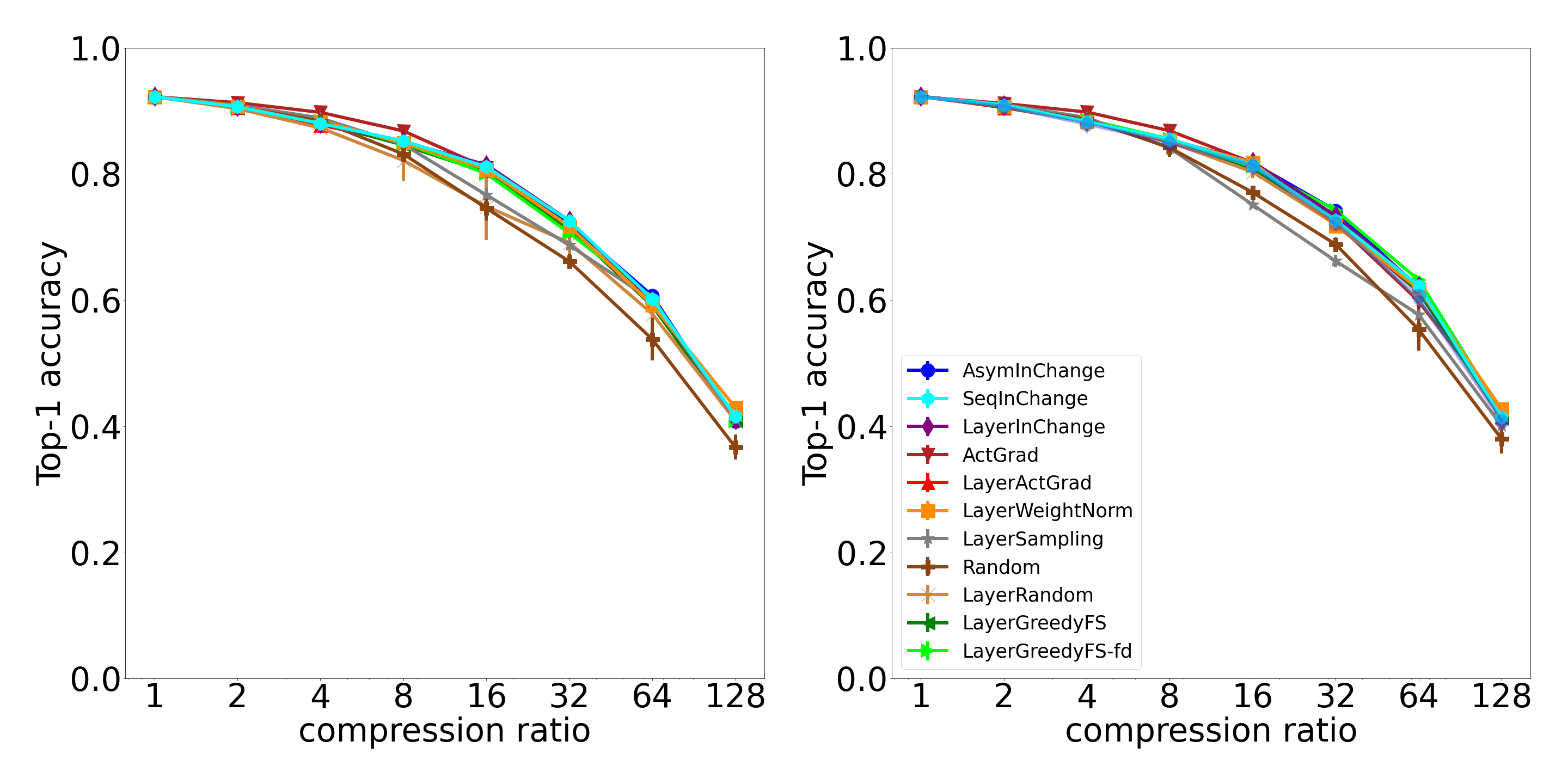}
\end{subfigure}\hfill
\begin{subfigure}{.31\textwidth}
 \centering
\includegraphics[trim=65 50 1410 35, clip, scale=0.1]{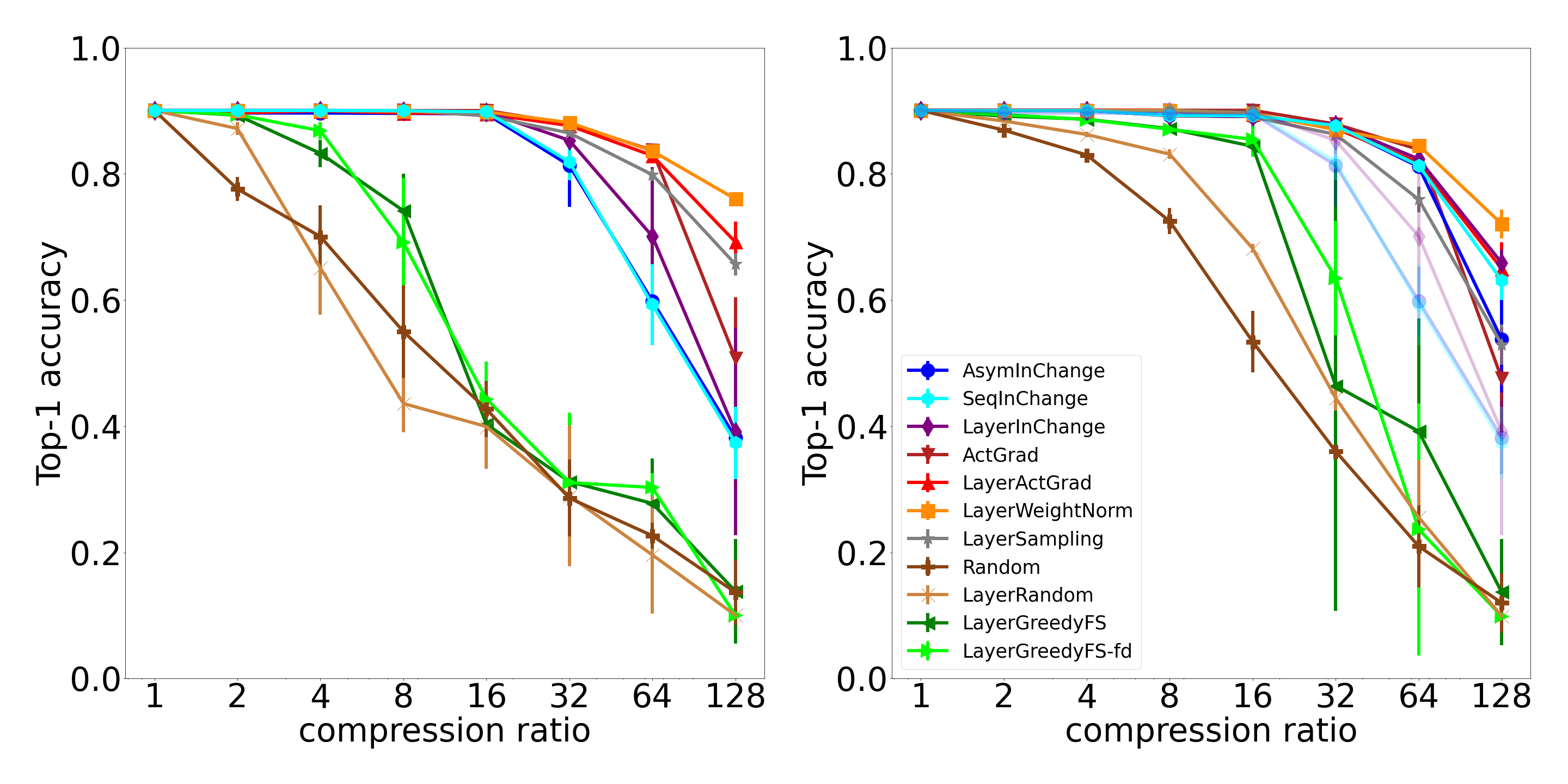}
\end{subfigure}
\\
\begin{subfigure}{.31\textwidth}
 \centering
  \hspace*{-10pt}
\includegraphics[trim=1475 50 0 35, clip, scale=0.1]{acc1-compression-finetuned-1ae094f-2057043-all.png}
\end{subfigure}\hfill
\begin{subfigure}{.31\textwidth}
 \centering
  \hspace*{-5pt}
\includegraphics[trim=1450 50 0 35, clip, scale=0.1]{acc1-compression-finetuned-11b9a94-7862381-all.png}
\end{subfigure}\hfill
\begin{subfigure}{.31\textwidth}
 \centering
\includegraphics[trim=1475 50 0 35, clip, scale=0.1]{acc1-compression-finetuned-0ebdd32-40042510-all.png}
\end{subfigure}
\caption{\label{fig:finetuned-fd} \looseness=-1 Top-1 Accuracy of different pruning methods, after fine-tuning with the full training data, applied to LeNet on MNIST (left), ResNet56 on CIFAR10 (middle), and VGG11 on CIFAR10 (right), for several compression ratios (in log-scale), with (top) and without (bottom) reweighting. We include the three reweighted variants of our method in the bottom plots (faded) for reference.} 
\end{figure}

\looseness=-1 In this section, we study the effect of fine-tuning with both limited and sufficient data. To that end, we report the top-1 accuracy results of all the pruning tasks considered in \cref{sec:exps}, after fine-tuning with only four batches of training data, 
and after fine-tuning with the full training data in Figure \ref{fig:finetuned-fd}. 
We fine-tune for 10 epochs in the MNIST experiment, and for 20 epochs in both CIFAR-10 experiments. We do not fine-tune at compression ratio 1 (i.e., when nothing is pruned).

Our method still outperforms other baselines after fine-tuning with limited-data, and is among the best performing methods even in the full-data setting (if we consider the non-reweighted variants for VGG11 model).
As expected, fine-tuning with the full training data provides a significant boost in performance to all methods, even more than reweighting. Fine-tuning with limited data also helps but significantly less. 
Reweighting still improves the performance of all methods, except LAYERGREEDYFS and LAYERGREEDYFS-fd, even when fine-tuning with limited-data is used, but it can actually deteriorate performance when fine-tuning with full-data is used (see Figure \ref{fig:finetuned-fd} left-bottom plot, the reweighted variants of our methods have lower accuracy than the non-reweighted variants). We suspect that this 
could be due to overfitting to the very small training data used for pruning. This only happens with VGG11 model, because it is larger than LeNet and ResNet56 models. We expect the performance of the reweighted variants of our method to improve if we use more training data for pruning.



\section{Importance of per-layer budget selection}\label{app:fractionSel}

 In this section, we study the effect of per-layer budget selection on accuracy. To that end, we use the same pretrained VGG11 model from \cref{sec:exps}, and prune the first and second to last convolution layers in it.  Since the second to last layer in VGG11 (features.22) has little effect on accuracy when pruned, we expect the choice of how the global budget is distributed on the two layers to have a significant impact on performance. Figure \ref{fig:CIFAR10-Channel-FractSel} shows the top-1 accuracy for different fractions of prunable channels kept, when the per-layer budget selection from \cref{sec:fractionSel} is used, while Figure \ref{fig:CIFAR10-Channel-EqFract} shows the results when equal fractions of channels kept are used in each layer. As expected, all layerwise methods perform much more poorly with equal per-layer fractions, both with and without fine-tuning. Though, the difference is less drastic when fine-tuning is used.
 
\begin{figure}
\vspace{-25pt}
 \centering
\includegraphics[trim=40 50 50 35, clip, scale=0.1]{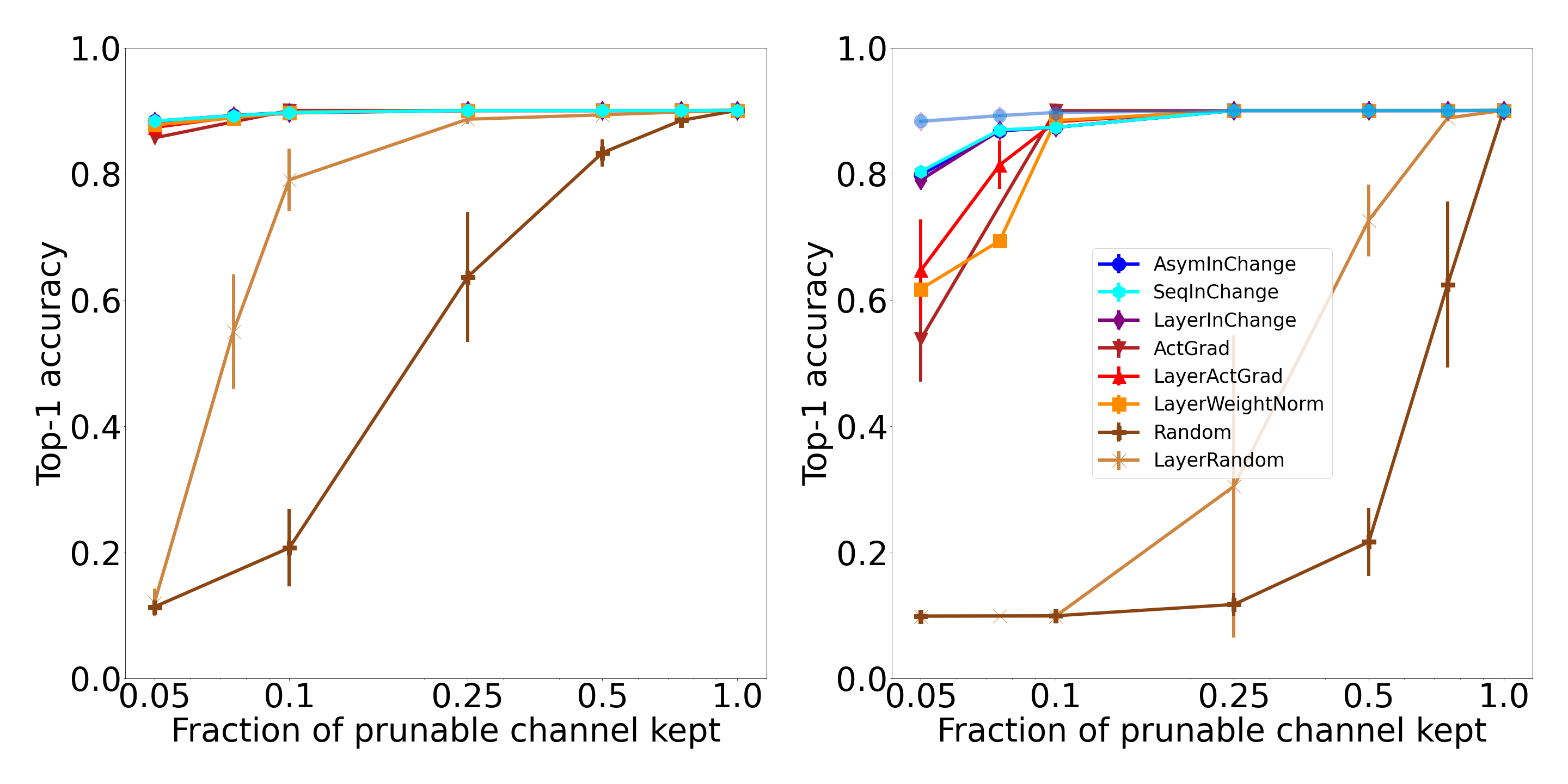}\\
 \includegraphics[trim=40 50 50 35, clip, scale=0.1]{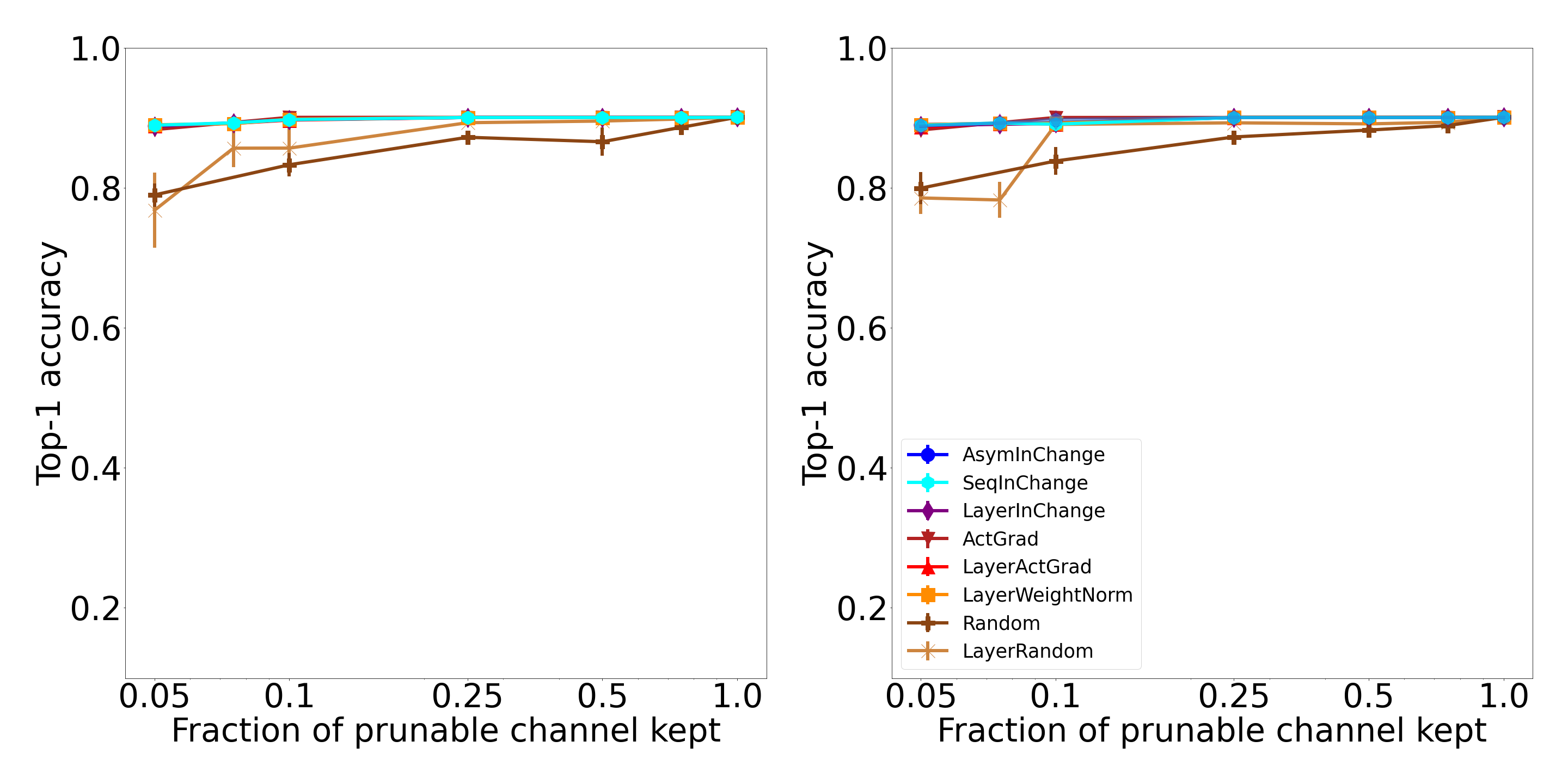}
\caption{\label{fig:CIFAR10-Channel-FractSel}  Top-1 Accuracy of different pruning methods on CIFAR10, after pruning the first and second to last convolution layers in VGG11 model, with different fractions of remaining channels (in log-scale), with (left) and without (right) reweighting, with (bottom) and without (top) fine-tuning, with per-layer fractions selected using the selection method discussed in \cref{sec:fractionSel}.  We include the three reweighted variants of our method in the plots without reweighting (faded) for reference.} 
\end{figure}

\begin{figure}
\vspace{-10pt}
 \centering
\includegraphics[trim=40 50 50 35, clip, scale=0.1]{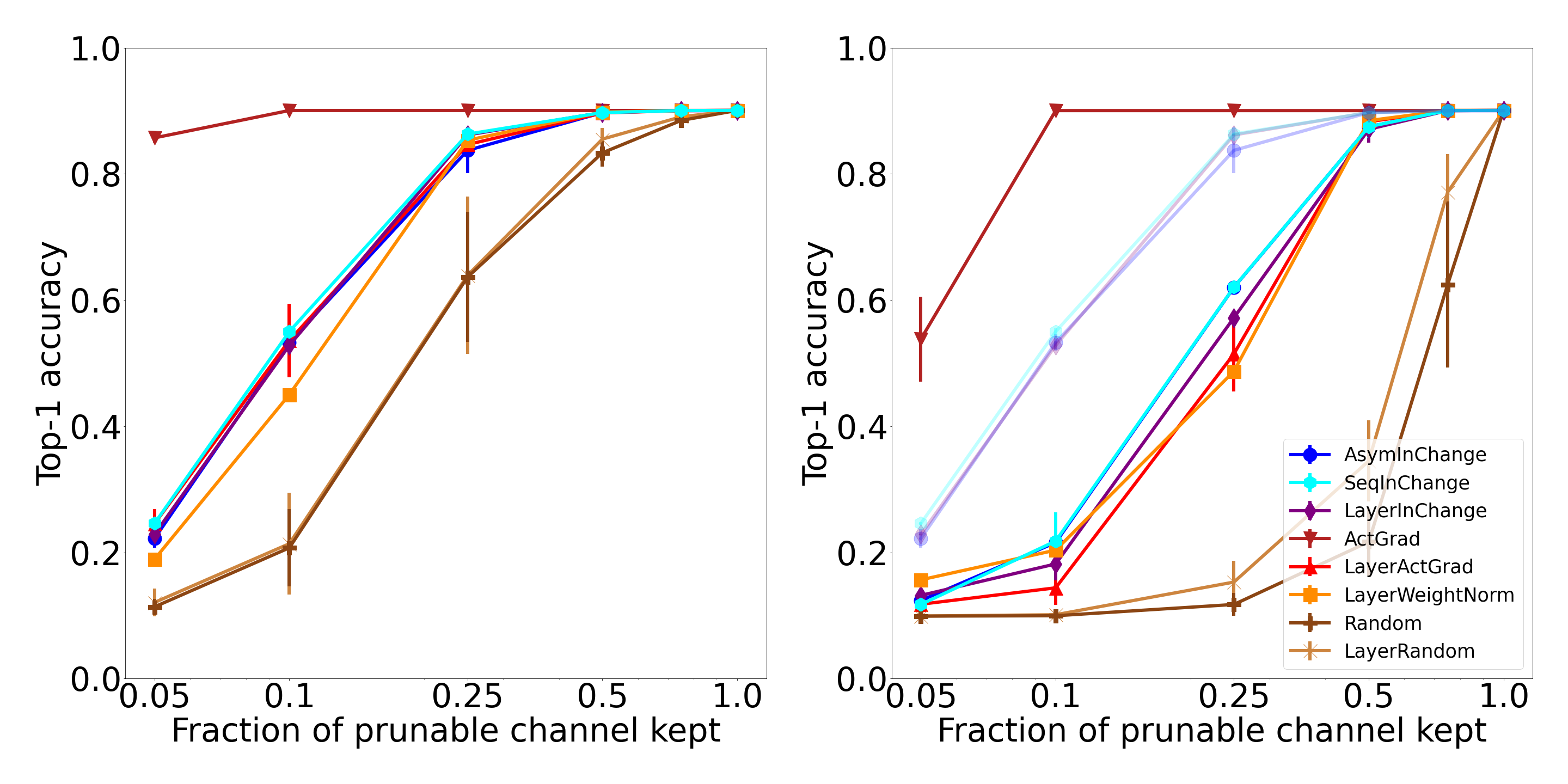}\\
 \includegraphics[trim=40 50 50 35, clip, scale=0.1]{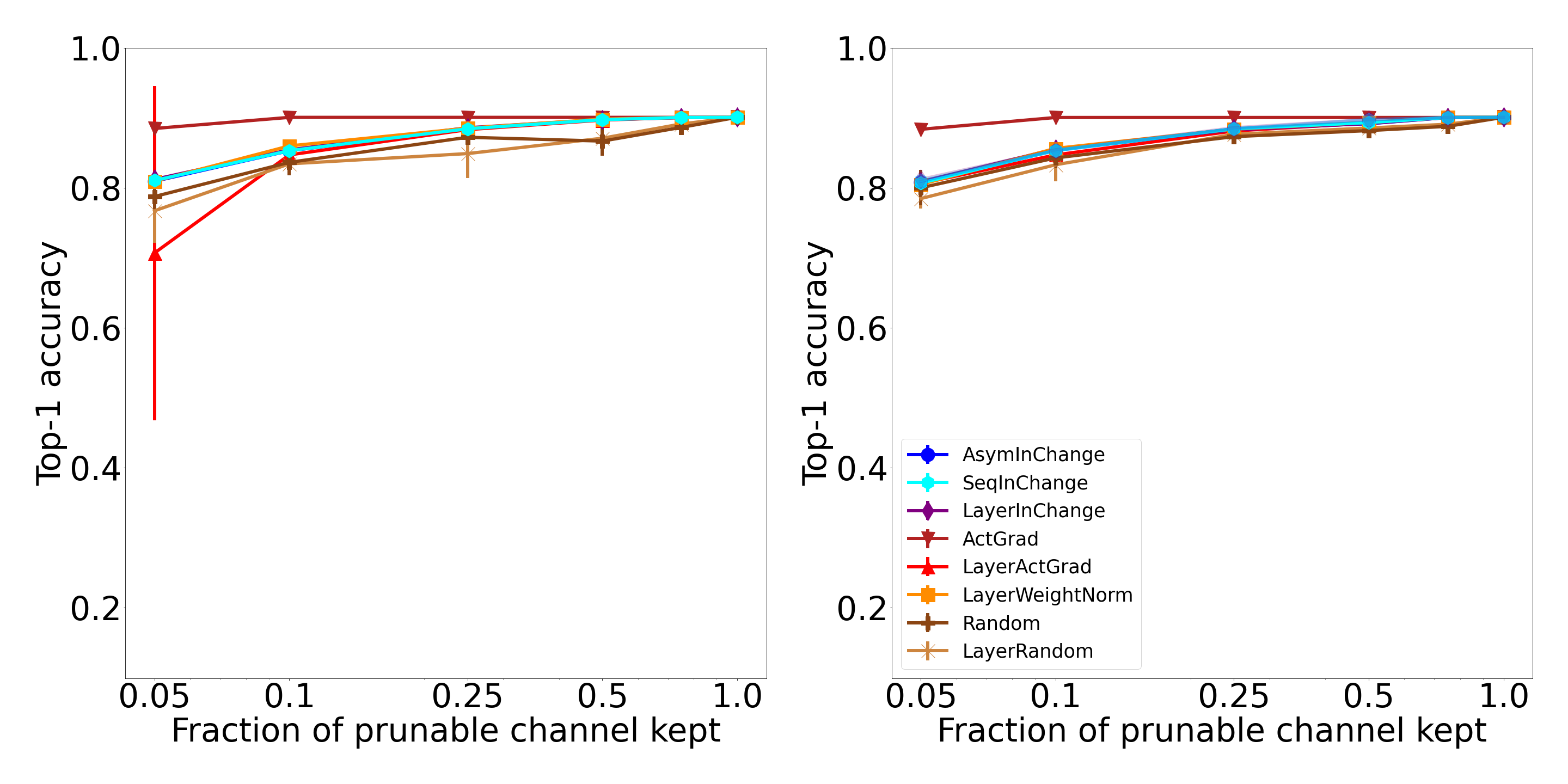}
\caption{\label{fig:CIFAR10-Channel-EqFract} Top-1 Accuracy of different  pruning methods on CIFAR10, after pruning the first and second to last convolution layers in VGG11 model, with different fractions of remaining channels (in log-scale), with (left) and without (right) reweighting, with (bottom) and without (top) fine-tuning, with equal per-layer fractions.  We include the three reweighted variants of our method in the plots without reweighting (faded) for reference.} 
\end{figure}

\section{Results with respect to other metrics} \label{app:othermetrics} 

We report in Tables \ref{table:MNIST-LeNet}, \ref{table:CIFAR10-ResNet56} and \ref{table:CIFAR10-VGG11}, the top-1 accuracy, speedup ratio ($\tfrac{\text{original number of FLOPs}}{\text{pruned number of FLOPs}}$), and pruning time values of the experiments presented in \cref{sec:exps}. We exclude the worst performing methods 
\Rand and \LayerRand. 

Note that for a given compression ratio speedup values vary significantly between different pruning methods, 
because the number of weights and flops vary between layers, and pruning methods differ in their per-layer budget allocations. The best performing methods in terms of compression are not necessarily the best ones in terms of speedup. For example, \ActGrad is among the best performing methods in terms of compression on ResNet56-CIFAR10, but it is the worst one in terms of speedup.
In cases where we care more about speedup than compression, we can replace the constraint in the per-layer budget selection problem \eqref{eq:BudgetSelProb} to be speedup instead of compression. 

 
Since our goal in these experiments was to study performance in terms of accuracy vs compression rate, we did not focus on optimizing our method's implementation for computation time efficiency. For example, our current implementation uses the classical Greedy algorithm. This can be significantly sped-up by switching to the faster Greedy algorithm from \cite{Li2022}.

\begin{table}
\caption{Top-1 Accuracy \% (Acc1), speedup ratio (SR), and pruning time (in hrs:mins:secs) of different pruning methods applied to LeNet on MNIST, with different compression ratios $c$, with and without reweighting (rw) and fine-tuning (ft).} \label{table:MNIST-LeNet}
\begin{center}
\begin{adjustbox}{width=1\textwidth}
\begin{tabular}{c|cc|ccc|ccc|ccc|ccc|ccc} 
\toprule
 & & & \multicolumn{3}{c}{$c=2$} &  \multicolumn{3}{c}{$c=4$} & \multicolumn{3}{c}{$c=8$} & \multicolumn{3}{c}{$c=16$} & \multicolumn{3}{c}{$c=32$} \\
           Method & rw & ft &  Acc1 &   SR &     time &  Acc1 &   SR &     time &  Acc1 &    SR &     time &  Acc1 &    SR &     time &  Acc1 &    SR &     time \\
\midrule
                  &  ✓ &  ✓ &  97.5 &  3.1 &  0:00:03 &  97.1 &  7.3 &  0:00:02 &  97.0 &   6.2 &  0:00:02 &  96.6 &   8.3 &  0:00:02 &  95.4 &   4.0 &  0:00:02 \\
     AsymInChange &  ✓ &  ✗ &  97.4 &  3.1 &  0:00:03 &  96.2 &  7.3 &  0:00:02 &  94.4 &   6.2 &  0:00:02 &  90.3 &   8.3 &  0:00:02 &  83.5 &   4.0 &  0:00:02 \\
                  &  ✗ &  ✓ &  97.6 &  3.1 &  0:00:03 &  97.4 &  4.0 &  0:00:02 &  97.4 &   2.4 &  0:00:02 &  96.2 &   8.3 &  0:00:01 &  95.7 &   3.5 &  0:00:02 \\
                  &  ✗ &  ✗ &  84.6 &  3.1 &  0:00:03 &  48.7 &  4.0 &  0:00:02 &  36.4 &   2.4 &  0:00:02 &  12.8 &   8.3 &  0:00:01 &  14.4 &   3.5 &  0:00:02 \\ \hline
                  &  ✓ &  ✓ &  97.6 &  3.1 &  0:00:03 &  97.2 &  7.3 &  0:00:02 &  97.1 &   6.2 &  0:00:02 &  96.5 &   8.3 &  0:00:02 &  95.4 &   4.0 &  0:00:02 \\
      SeqInChange &  ✓ &  ✗ &  97.4 &  3.1 &  0:00:03 &  95.8 &  7.3 &  0:00:02 &  94.3 &   6.2 &  0:00:02 &  89.7 &   8.3 &  0:00:02 &  82.2 &   4.0 &  0:00:02 \\
                  &  ✗ &  ✓ &  97.6 &  3.1 &  0:00:04 &  97.5 &  4.0 &  0:00:02 &  97.3 &   2.4 &  0:00:03 &  96.5 &   8.3 &  0:00:02 &  95.8 &   3.5 &  0:00:01 \\
                  &  ✗ &  ✗ &  88.5 &  3.1 &  0:00:04 &  39.0 &  4.0 &  0:00:02 &  25.8 &   2.4 &  0:00:03 &  12.4 &   8.3 &  0:00:02 &  10.7 &   3.5 &  0:00:01 \\  \hline
                  &  ✓ &  ✓ &  97.5 &  3.1 &  0:00:03 &  97.1 &  7.3 &  0:00:02 &  97.0 &   6.2 &  0:00:02 &  96.6 &   8.3 &  0:00:02 &  95.6 &   4.0 &  0:00:01 \\
    LayerInChange &  ✓ &  ✗ &  97.4 &  3.1 &  0:00:03 &  95.8 &  7.3 &  0:00:02 &  94.2 &   6.2 &  0:00:02 &  89.9 &   8.3 &  0:00:02 &  81.8 &   4.0 &  0:00:01 \\
                  &  ✗ &  ✓ &  97.6 &  3.1 &  0:00:03 &  97.4 &  4.0 &  0:00:02 &  97.5 &   2.4 &  0:00:02 &  96.5 &   8.3 &  0:00:02 &  95.6 &   3.5 &  0:00:01 \\
                  &  ✗ &  ✗ &  88.6 &  3.1 &  0:00:03 &  40.6 &  4.0 &  0:00:02 &  33.5 &   2.4 &  0:00:02 &  23.9 &   8.3 &  0:00:02 &  13.1 &   3.5 &  0:00:01 \\  \hline
                  &  ✓ &  ✓ &  97.7 &  2.0 &  0:00:00 &  97.4 &  2.6 &  0:00:00 &  97.3 &   3.4 &  0:00:00 &  96.6 &   4.1 &  0:00:00 &  95.3 &   4.8 &  0:00:00 \\
          ActGrad &  ✓ &  ✗ &  97.2 &  2.0 &  0:00:00 &  94.7 &  2.6 &  0:00:00 &  87.2 &   3.4 &  0:00:00 &  67.5 &   4.1 &  0:00:00 &  40.5 &   4.8 &  0:00:00 \\
                  &  ✗ &  ✓ &  97.6 &  2.0 &  0:00:00 &  97.5 &  2.6 &  0:00:00 &  97.2 &   3.4 &  0:00:00 &  96.6 &   4.1 &  0:00:00 &  95.0 &   4.8 &  0:00:00 \\
                  &  ✗ &  ✗ &  81.0 &  2.0 &  0:00:00 &  43.5 &  2.6 &  0:00:00 &  24.9 &   3.4 &  0:00:00 &  17.7 &   4.1 &  0:00:00 &  15.1 &   4.8 &  0:00:00 \\  \hline
                  &  ✓ &  ✓ &  97.6 &  3.1 &  0:00:00 &  97.1 &  4.7 &  0:00:00 &  96.6 &  11.2 &  0:00:00 &  95.8 &  16.2 &  0:00:00 &  95.8 &   4.0 &  0:00:00 \\
     LayerActGrad &  ✓ &  ✗ &  97.1 &  3.1 &  0:00:00 &  95.1 &  4.7 &  0:00:00 &  90.2 &  11.2 &  0:00:00 &  82.9 &  16.2 &  0:00:00 &  76.9 &   4.0 &  0:00:00 \\
                  &  ✗ &  ✓ &  97.5 &  3.2 &  0:00:00 &  97.4 &  4.4 &  0:00:00 &  97.4 &   2.4 &  0:00:00 &  96.9 &   3.0 &  0:00:00 &  96.4 &   3.1 &  0:00:00 \\
                  &  ✗ &  ✗ &  84.9 &  3.2 &  0:00:00 &  46.3 &  4.4 &  0:00:00 &  27.9 &   2.4 &  0:00:00 &  20.5 &   3.0 &  0:00:00 &  17.0 &   3.1 &  0:00:00 \\  \hline
                  &  ✓ &  ✓ &  97.5 &  3.1 &  0:00:00 &  97.1 &  6.7 &  0:00:00 &  96.7 &  11.2 &  0:00:00 &  95.9 &  16.2 &  0:00:00 &  96.0 &   4.0 &  0:00:00 \\
  LayerWeightNorm &  ✓ &  ✗ &  97.3 &  3.1 &  0:00:00 &  95.5 &  6.7 &  0:00:00 &  93.6 &  11.2 &  0:00:00 &  88.2 &  16.2 &  0:00:00 &  81.1 &   4.0 &  0:00:00 \\
                  &  ✗ &  ✓ &  97.5 &  3.0 &  0:00:00 &  97.5 &  3.0 &  0:00:00 &  97.3 &   2.6 &  0:00:00 &  96.0 &  14.7 &  0:00:00 &  95.8 &   3.5 &  0:00:00 \\
                  &  ✗ &  ✗ &  91.6 &  3.0 &  0:00:00 &  60.1 &  3.0 &  0:00:00 &  53.9 &   2.6 &  0:00:00 &  18.1 &  14.7 &  0:00:00 &  14.8 &   3.5 &  0:00:00 \\  \hline
                  &  ✓ &  ✓ &  97.5 &  1.9 &  0:00:20 &  97.4 &  2.7 &  0:00:21 &  97.0 &   3.5 &  0:00:19 &  96.4 &   4.4 &  0:00:20 &  95.7 &   5.2 &  0:00:19 \\
    LayerSampling &  ✓ &  ✗ &  97.2 &  1.9 &  0:00:20 &  95.5 &  2.7 &  0:00:21 &  91.6 &   3.5 &  0:00:19 &  85.5 &   4.4 &  0:00:20 &  71.7 &   5.2 &  0:00:19 \\
                  &  ✗ &  ✓ &  97.6 &  1.9 &  0:00:19 &  97.3 &  2.7 &  0:00:20 &  96.8 &   3.5 &  0:00:18 &  96.1 &   4.4 &  0:00:19 &  94.4 &   5.2 &  0:00:18 \\
                  &  ✗ &  ✗ &  41.8 &  1.9 &  0:00:19 &  25.5 &  2.7 &  0:00:20 &  22.4 &   3.5 &  0:00:18 &  13.2 &   4.4 &  0:00:19 &  14.1 &   5.2 &  0:00:18 \\  \hline
                  &  ✓ &  ✓ &  97.3 &  3.3 &  0:00:33 &  96.9 &  4.9 &  0:00:27 &  95.2 &   7.4 &  0:00:40 &  90.0 &  10.4 &  0:00:19 &  53.3 &   9.0 &  0:00:28 \\
    LayerGreedyFS &  ✓ &  ✗ &  94.1 &  3.3 &  0:00:33 &  88.1 &  4.9 &  0:00:27 &  76.2 &   7.4 &  0:00:40 &  44.9 &  10.4 &  0:00:19 &  23.8 &   9.0 &  0:00:28 \\
                  &  ✗ &  ✓ &  97.6 &  1.2 &  0:00:27 &  97.4 &  4.2 &  0:00:23 &  97.0 &   2.4 &  0:00:18 &  95.8 &  14.7 &  0:00:14 &  96.2 &   5.1 &  0:00:09 \\
                  &  ✗ &  ✗ &  73.5 &  1.2 &  0:00:27 &  45.0 &  4.2 &  0:00:23 &  28.0 &   2.4 &  0:00:18 &  28.3 &  14.7 &  0:00:14 &  20.1 &   5.1 &  0:00:09 \\  \hline
                  &  ✓ &  ✓ &  97.2 &  3.2 &  0:01:44 &  96.8 &  4.9 &  0:01:21 &  95.2 &   7.4 &  0:01:23 &  89.9 &  10.4 &  0:00:56 &  34.1 &  23.9 &  0:00:48 \\
 LayerGreedyFS-fd &  ✓ &  ✗ &  94.2 &  3.2 &  0:01:44 &  88.2 &  4.9 &  0:01:21 &  75.8 &   7.4 &  0:01:23 &  47.2 &  10.4 &  0:00:56 &  17.2 &  23.9 &  0:00:48 \\
                  &  ✗ &  ✓ &  97.6 &  1.2 &  0:01:30 &  97.4 &  4.4 &  0:01:19 &  97.0 &   2.4 &  0:01:01 &  96.7 &   7.5 &  0:00:36 &  95.5 &   3.5 &  0:00:21 \\
                  &  ✗ &  ✗ &  73.5 &  1.2 &  0:01:30 &  47.4 &  4.4 &  0:01:19 &  30.3 &   2.4 &  0:01:01 &  23.3 &   7.5 &  0:00:36 &  16.9 &   3.5 &  0:00:21 \\
\bottomrule
\end{tabular}
\end{adjustbox}
\end{center}
\end{table}

\begin{table}
\caption{Top-1 Accuracy \% (Acc1), speedup ratio (SR), and pruning time (in hrs:mins:secs) of different pruning methods applied to ResNet56 on CIFAR10,with different compression ratios $c$, with and without reweighting (rw) and fine-tuning (ft).} \label{table:CIFAR10-ResNet56}
\begin{center}
\begin{adjustbox}{width=1\textwidth}
\begin{tabular}{c|cc|ccc|ccc|ccc|ccc|ccc} 
\toprule
 & & & \multicolumn{3}{c}{$c=2$} &  \multicolumn{3}{c}{$c=4$} & \multicolumn{3}{c}{$c=8$} & \multicolumn{3}{c}{$c=16$} & \multicolumn{3}{c}{$c=32$} \\
           Method & rw & ft &  Acc1 &   SR &     time &  Acc1 &   SR &     time &  Acc1 &    SR &     time &  Acc1 &    SR &     time &  Acc1 &    SR &     time \\
\midrule
                  &  ✓ &  ✓ &  90.7 &  2.6 &  2:13:10 &  87.9 &  6.0 &  1:14:00 &  84.9 &  11.0 &  0:42:08 &  81.1 &  16.4 &  0:24:43 &  72.1 &  21.3 &  0:14:57 \\
     AsymInChange &  ✓ &  ✗ &  84.2 &  2.6 &  2:13:10 &  46.4 &  6.0 &  1:14:00 &  20.6 &  11.0 &  0:42:08 &  18.3 &  16.4 &  0:24:43 &  10.2 &  21.3 &  0:14:57 \\
                  &  ✗ &  ✓ &  90.9 &  2.5 &  2:15:09 &  88.3 &  6.0 &  1:12:51 &  85.4 &  10.4 &  0:40:31 &  81.7 &  15.1 &  0:26:00 &  74.2 &  20.2 &  0:15:23 \\
                  &  ✗ &  ✗ &  73.3 &  2.5 &  2:15:09 &  16.4 &  6.0 &  1:12:51 &  13.6 &  10.4 &  0:40:31 &   9.9 &  15.1 &  0:26:00 &  10.7 &  20.2 &  0:15:23 \\  \hline
                  &  ✓ &  ✓ &  90.7 &  2.6 &  4:13:21 &  88.0 &  6.0 &  2:50:54 &  85.2 &  11.0 &  1:30:23 &  81.1 &  16.4 &  0:48:09 &  72.6 &  21.3 &  0:24:36 \\
      SeqInChange &  ✓ &  ✗ &  82.3 &  2.6 &  4:13:21 &  45.5 &  6.0 &  2:50:54 &  24.6 &  11.0 &  1:30:23 &  17.3 &  16.4 &  0:48:09 &  12.9 &  21.3 &  0:24:36 \\
                  &  ✗ &  ✓ &  90.9 &  2.5 &  4:01:33 &  88.3 &  6.0 &  2:41:17 &  85.5 &  10.4 &  1:29:22 &  81.4 &  15.1 &  0:46:34 &  72.7 &  20.2 &  0:22:47 \\
                  &  ✗ &  ✗ &  75.5 &  2.5 &  4:01:33 &  17.5 &  6.0 &  2:41:17 &  12.1 &  10.4 &  1:29:22 &   9.9 &  15.1 &  0:46:34 &  10.6 &  20.2 &  0:22:47 \\  \hline
                  &  ✓ &  ✓ &  90.7 &  2.6 &  4:20:16 &  88.0 &  6.0 &  2:53:00 &  85.1 &  11.0 &  1:38:14 &  81.5 &  16.4 &  0:51:46 &  72.6 &  21.3 &  0:24:11 \\
    LayerInChange &  ✓ &  ✗ &  72.3 &  2.6 &  4:20:16 &  23.7 &  6.0 &  2:53:00 &  15.8 &  11.0 &  1:38:14 &  14.0 &  16.4 &  0:51:46 &  11.3 &  21.3 &  0:24:11 \\
                  &  ✗ &  ✓ &  90.9 &  2.5 &  4:21:26 &  88.4 &  6.0 &  2:49:09 &  85.4 &  10.4 &  1:29:47 &  81.9 &  15.1 &  0:49:17 &  73.4 &  20.2 &  0:22:33 \\
                  &  ✗ &  ✗ &  38.6 &  2.5 &  4:21:26 &  11.4 &  6.0 &  2:49:09 &   9.9 &  10.4 &  1:29:47 &  10.2 &  15.1 &  0:49:17 &  11.0 &  20.2 &  0:22:33 \\  \hline
                  &  ✓ &  ✓ &  91.3 &  1.7 &  0:02:07 &  89.8 &  2.6 &  0:02:22 &  86.8 &   4.2 &  0:01:47 &  81.0 &   6.7 &  0:01:55 &  71.3 &  10.8 &  0:01:31 \\
          ActGrad &  ✓ &  ✗ &  85.2 &  1.7 &  0:02:07 &  50.4 &  2.6 &  0:02:22 &  21.3 &   4.2 &  0:01:47 &  14.1 &   6.7 &  0:01:55 &  11.4 &  10.8 &  0:01:31 \\
                  &  ✗ &  ✓ &  91.2 &  1.7 &  0:00:02 &  89.8 &  2.6 &  0:00:02 &  86.8 &   4.2 &  0:00:05 &  81.8 &   6.7 &  0:00:02 &  72.3 &  10.8 &  0:00:07 \\
                  &  ✗ &  ✗ &  50.3 &  1.7 &  0:00:02 &  16.4 &  2.6 &  0:00:02 &  10.5 &   4.2 &  0:00:05 &  11.7 &   6.7 &  0:00:02 &  10.2 &  10.8 &  0:00:07 \\  \hline
                  &  ✓ &  ✓ &  90.6 &  2.8 &  0:02:01 &  87.8 &  6.1 &  0:02:04 &  85.0 &  10.6 &  0:01:37 &  80.6 &  15.5 &  0:01:43 &  71.2 &  21.3 &  0:02:02 \\
     LayerActGrad &  ✓ &  ✗ &  78.4 &  2.8 &  0:02:01 &  35.2 &  6.1 &  0:02:04 &  18.4 &  10.6 &  0:01:37 &  13.1 &  15.5 &  0:01:43 &  11.6 &  21.3 &  0:02:02 \\
                  &  ✗ &  ✓ &  90.5 &  2.9 &  0:00:06 &  88.4 &  5.7 &  0:00:05 &  85.3 &  10.1 &  0:00:10 &  81.6 &  15.1 &  0:00:10 &  72.2 &  20.2 &  0:00:05 \\
                  &  ✗ &  ✗ &  21.9 &  2.9 &  0:00:06 &  10.2 &  5.7 &  0:00:05 &  10.6 &  10.1 &  0:00:10 &   9.9 &  15.1 &  0:00:10 &  10.3 &  20.2 &  0:00:05 \\  \hline
                  &  ✓ &  ✓ &  90.8 &  2.7 &  0:02:14 &  88.4 &  5.8 &  0:01:48 &  84.9 &  10.3 &  0:01:47 &  80.8 &  16.4 &  0:01:30 &  71.7 &  21.3 &  0:01:39 \\
  LayerWeightNorm &  ✓ &  ✗ &  81.8 &  2.7 &  0:02:14 &  46.7 &  5.8 &  0:01:48 &  20.4 &  10.3 &  0:01:47 &  12.0 &  16.4 &  0:01:30 &  11.0 &  21.3 &  0:01:39 \\
                  &  ✗ &  ✓ &  90.7 &  2.7 &  0:00:00 &  88.4 &  5.9 &  0:00:00 &  85.4 &  10.2 &  0:00:00 &  81.8 &  15.1 &  0:00:00 &  71.8 &  20.2 &  0:00:00 \\
                  &  ✗ &  ✗ &  25.4 &  2.7 &  0:00:00 &   9.9 &  5.9 &  0:00:00 &   9.4 &  10.2 &  0:00:00 &   9.9 &  15.1 &  0:00:00 &   9.8 &  20.2 &  0:00:00 \\  \hline
                  &  ✓ &  ✓ &  90.9 &  1.9 &  0:03:37 &  88.9 &  3.6 &  0:05:02 &  84.7 &   7.2 &  0:04:05 &  76.7 &  14.2 &  0:03:47 &  68.7 &  20.7 &  0:05:12 \\ 
    LayerSampling &  ✓ &  ✗ &  79.2 &  1.9 &  0:03:37 &  32.7 &  3.6 &  0:05:02 &  14.0 &   7.2 &  0:04:05 &  11.8 &  14.2 &  0:03:47 &  10.3 &  20.7 &  0:05:12 \\
                  &  ✗ &  ✓ &  91.0 &  1.9 &  0:02:59 &  88.9 &  3.6 &  0:06:36 &  84.1 &   7.2 &  0:02:30 &  75.1 &  14.2 &  0:01:54 &  66.2 &  20.7 &  0:05:55 \\
                  &  ✗ &  ✗ &  26.0 &  1.9 &  0:02:59 &  11.9 &  3.6 &  0:06:36 &  11.0 &   7.2 &  0:02:30 &   9.5 &  14.2 &  0:01:54 &   9.5 &  20.7 &  0:05:55 \\  \hline
                  &  ✓ &  ✓ &  90.6 &  2.6 &  0:38:44 &  88.2 &  5.3 &  8:43:34 &  84.6 &   9.9 &  0:10:17 &  80.2 &  15.1 &  0:06:57 &  71.4 &  17.4 &  0:03:16 \\  
    LayerGreedyFS &  ✓ &  ✗ &  71.5 &  2.6 &  0:38:44 &  36.3 &  5.3 &  8:43:34 &  19.6 &   9.9 &  0:10:17 &  15.2 &  15.1 &  0:06:57 &  11.4 &  17.4 &  0:03:16 \\
                  &  ✗ &  ✓ &  90.8 &  2.6 &  0:30:11 &  88.6 &  5.7 &  0:36:54 &  85.5 &   9.7 &  0:19:09 &  80.7 &  16.4 &  0:04:44 &  73.0 &  21.3 &  0:02:05 \\
                  &  ✗ &  ✗ &  73.9 &  2.6 &  0:30:11 &  36.0 &  5.7 &  0:36:54 &  21.4 &   9.7 &  0:19:09 &  13.8 &  16.4 &  0:04:44 &  12.1 &  21.3 &  0:02:05 \\  \hline
                  &  ✓ &  ✓ &  90.6 &  2.6 &  6:54:48 &  88.4 &  5.6 &  0:21:51 &  85.0 &   9.9 &  4:19:50 &  80.0 &  15.2 &  0:07:27 &  70.6 &  17.4 &  0:06:58 \\  
 LayerGreedyFS-fd &  ✓ &  ✗ &  75.9 &  2.6 &  6:54:48 &  28.7 &  5.6 &  0:21:51 &  17.4 &   9.9 &  4:19:50 &  14.4 &  15.2 &  0:07:27 &  12.2 &  17.4 &  0:06:58 \\
                  &  ✗ &  ✓ &  90.7 &  2.6 &  1:09:55 &  88.6 &  5.7 &  0:25:45 &  85.6 &   9.3 &  0:18:38 &  81.1 &  15.3 &  0:11:40 &  74.3 &  22.0 &  0:04:07 \\
                  &  ✗ &  ✗ &  72.6 &  2.6 &  1:09:55 &  38.0 &  5.7 &  0:25:45 &  20.6 &   9.3 &  0:18:38 &  13.4 &  15.3 &  0:11:40 &  13.0 &  22.0 &  0:04:07 \\
\bottomrule
\end{tabular}
\end{adjustbox}
\end{center}
\end{table}

\begin{table}
\caption{Top-1 Accuracy \% (Acc1), speedup ratio (SR), and pruning time (in hrs:mins:secs) of different pruning methods applied to VGG11 on CIFAR10, with different compression ratios $c$, with and without reweighting (rw) and fine-tuning (ft).} \label{table:CIFAR10-VGG11}
\begin{center}
\begin{adjustbox}{width=1\textwidth}
\begin{tabular}{c|cc|ccc|ccc|ccc|ccc|ccc} 
\toprule
 & & & \multicolumn{3}{c}{$c=2$} &  \multicolumn{3}{c}{$c=4$} & \multicolumn{3}{c}{$c=8$} & \multicolumn{3}{c}{$c=16$} & \multicolumn{3}{c}{$c=32$} \\
           Method & rw & ft &  Acc1 &   SR &      time &  Acc1 &   SR &      time &  Acc1 &   SR &      time &  Acc1 &    SR &      time &  Acc1 &    SR &      time \\
\midrule
                  &  ✓ &  ✓ &  89.7 &  1.9 &  31:38:06 &  89.7 &  2.1 &  30:35:13 &  89.6 &  2.5 &  23:54:22 &  89.5 &   5.0 &  22:38:37 &  81.4 &   8.3 &  19:01:51 \\
     AsymInChange &  ✓ &  ✗ &  89.7 &  1.9 &  31:38:06 &  89.7 &  2.1 &  30:35:13 &  89.6 &  2.5 &  23:54:22 &  89.5 &   5.0 &  22:38:37 &  80.4 &   8.3 &  19:01:51 \\
                  &  ✗ &  ✓ &  90.1 &  1.9 &  34:15:16 &  90.1 &  2.1 &  33:33:05 &  89.3 &  2.5 &  24:58:50 &  89.1 &   4.7 &  18:59:56 &  87.6 &   6.8 &  16:52:36 \\
                  &  ✗ &  ✗ &  90.1 &  1.9 &  34:15:16 &  90.1 &  2.1 &  33:33:05 &  88.9 &  2.5 &  24:58:50 &  88.7 &   4.7 &  18:59:56 &  12.1 &   6.8 &  16:52:36 \\  \hline
                  &  ✓ &  ✓ &  90.1 &  1.9 &  30:29:41 &  90.1 &  2.1 &  30:31:14 &  90.0 &  2.5 &  23:25:17 &  89.8 &   5.0 &  25:04:38 &  81.9 &   8.3 &  23:54:55 \\
      SeqInChange &  ✓ &  ✗ &  90.1 &  1.9 &  30:29:41 &  90.1 &  2.1 &  30:31:14 &  90.0 &  2.5 &  23:25:17 &  89.8 &   5.0 &  25:04:38 &  81.6 &   8.3 &  23:54:55 \\
                  &  ✗ &  ✓ &  90.1 &  1.9 &  32:17:12 &  90.1 &  2.1 &  31:13:02 &  89.3 &  2.5 &  23:51:44 &  89.3 &   4.7 &  22:31:17 &  87.7 &   6.8 &  20:23:53 \\
                  &  ✗ &  ✗ &  90.1 &  1.9 &  32:17:12 &  90.1 &  2.1 &  31:13:02 &  89.1 &  2.5 &  23:51:44 &  88.8 &   4.7 &  22:31:17 &  18.2 &   6.8 &  20:23:53 \\  \hline
                  &  ✓ &  ✓ &  90.1 &  1.9 &  27:27:33 &  90.1 &  2.1 &  29:43:24 &  90.0 &  2.5 &  20:45:59 &  89.8 &   5.0 &  29:00:23 &  85.3 &   8.3 &  18:56:08 \\
    LayerInChange &  ✓ &  ✗ &  90.1 &  1.9 &  27:27:33 &  90.1 &  2.1 &  29:43:24 &  90.0 &  2.5 &  20:45:59 &  89.8 &   5.0 &  29:00:23 &  84.7 &   8.3 &  18:56:08 \\
                  &  ✗ &  ✓ &  90.1 &  1.9 &  36:41:49 &  90.1 &  2.1 &  34:49:36 &  89.2 &  2.5 &  29:05:17 &  89.1 &   4.7 &  21:39:32 &  87.8 &   6.8 &  20:45:04 \\
                  &  ✗ &  ✗ &  90.1 &  1.9 &  36:41:49 &  90.1 &  2.1 &  34:49:36 &  89.2 &  2.5 &  29:05:17 &  88.9 &   4.7 &  21:39:32 &  14.7 &   6.8 &  20:45:04 \\  \hline
                  &  ✓ &  ✓ &  90.1 &  3.2 &   0:00:58 &  90.1 &  3.4 &   0:01:17 &  90.1 &  3.6 &   0:01:28 &  90.1 &   4.9 &   0:00:59 &  88.0 &  10.2 &   0:00:54 \\
          ActGrad &  ✓ &  ✗ &  90.1 &  3.2 &   0:00:58 &  90.1 &  3.4 &   0:01:17 &  90.1 &  3.6 &   0:01:28 &  90.1 &   4.9 &   0:00:59 &  83.2 &  10.2 &   0:00:54 \\
                  &  ✗ &  ✓ &  90.1 &  3.2 &   0:00:03 &  90.1 &  3.4 &   0:00:02 &  90.1 &  3.6 &   0:00:03 &  90.1 &   4.9 &   0:00:02 &  87.9 &  10.2 &   0:00:04 \\
                  &  ✗ &  ✗ &  90.1 &  3.2 &   0:00:03 &  90.1 &  3.4 &   0:00:02 &  90.1 &  3.6 &   0:00:03 &  90.1 &   4.9 &   0:00:02 &  56.8 &  10.2 &   0:00:04 \\  \hline
                  &  ✓ &  ✓ &  89.7 &  2.0 &   0:03:53 &  90.0 &  2.1 &   0:02:17 &  89.7 &  2.5 &   0:01:03 &  89.5 &   4.6 &   0:00:48 &  87.7 &   8.4 &   0:01:03 \\
     LayerActGrad &  ✓ &  ✗ &  89.7 &  2.0 &   0:03:53 &  90.0 &  2.1 &   0:02:17 &  89.7 &  2.5 &   0:01:03 &  89.5 &   4.6 &   0:00:48 &  75.9 &   8.4 &   0:01:03 \\
                  &  ✗ &  ✓ &  90.1 &  1.9 &   0:00:05 &  90.1 &  2.0 &   0:00:06 &  90.1 &  2.6 &   0:00:05 &  89.5 &   5.0 &   0:00:06 &  87.2 &   5.5 &   0:00:08 \\
                  &  ✗ &  ✗ &  90.1 &  1.9 &   0:00:05 &  90.1 &  2.0 &   0:00:06 &  90.1 &  2.6 &   0:00:05 &  89.5 &   5.0 &   0:00:06 &  16.0 &   5.5 &   0:00:08 \\  \hline
                  &  ✓ &  ✓ &  90.1 &  1.8 &   0:01:52 &  90.1 &  2.0 &   0:00:53 &  90.0 &  2.5 &   0:00:46 &  89.8 &   5.0 &   0:01:08 &  88.1 &   8.3 &   0:00:52 \\
  LayerWeightNorm &  ✓ &  ✗ &  90.1 &  1.8 &   0:01:52 &  90.1 &  2.0 &   0:00:53 &  90.0 &  2.5 &   0:00:46 &  89.8 &   5.0 &   0:01:08 &  80.9 &   8.3 &   0:00:52 \\
                  &  ✗ &  ✓ &  90.1 &  1.8 &   0:00:00 &  90.1 &  2.0 &   0:00:01 &  90.1 &  2.6 &   0:00:01 &  89.6 &   5.2 &   0:00:01 &  87.0 &  12.2 &   0:00:01 \\
                  &  ✗ &  ✗ &  90.1 &  1.8 &   0:00:00 &  90.1 &  2.0 &   0:00:01 &  90.1 &  2.6 &   0:00:01 &  89.6 &   5.2 &   0:00:01 &  11.6 &  12.2 &   0:00:01 \\  \hline
                  &  ✓ &  ✓ &  90.1 &  3.6 &   0:16:44 &  90.1 &  3.6 &   0:18:20 &  90.1 &  4.2 &   0:14:59 &  89.2 &   5.6 &   0:15:04 &  86.5 &  14.6 &   0:21:26 \\
    LayerSampling &  ✓ &  ✗ &  90.1 &  3.6 &   0:16:44 &  90.1 &  3.6 &   0:18:20 &  90.1 &  4.2 &   0:14:59 &  89.2 &   5.6 &   0:15:04 &  75.0 &  14.6 &   0:21:26 \\
                  &  ✗ &  ✓ &  90.0 &  3.7 &   0:16:18 &  90.0 &  3.7 &   0:16:48 &  89.9 &  4.2 &   0:19:04 &  89.0 &   5.6 &   0:15:04 &  86.3 &  14.6 &   0:15:47 \\
                  &  ✗ &  ✗ &  90.0 &  3.7 &   0:16:18 &  90.0 &  3.7 &   0:16:48 &  89.9 &  4.2 &   0:19:04 &  50.5 &   5.6 &   0:15:04 &  13.7 &  14.6 &   0:15:47 \\  \hline
                  &  ✓ &  ✓ &  89.3 &  2.0 &  23:30:16 &  83.3 &  2.9 &   7:50:23 &  74.2 &  4.8 &   5:12:07 &  40.3 &  10.9 &   6:01:53 &  31.2 &  18.3 &   1:51:43 \\
    LayerGreedyFS &  ✓ &  ✗ &  89.3 &  2.0 &  23:30:16 &  83.3 &  2.9 &   7:50:23 &  74.2 &  4.8 &   5:12:07 &  35.9 &  10.9 &   6:01:53 &  16.1 &  18.3 &   1:51:43 \\
                  &  ✗ &  ✓ &  89.2 &  2.0 &   9:49:18 &  88.7 &  2.2 &   6:56:45 &  87.2 &  2.8 &   4:10:47 &  84.4 &   4.8 &   2:40:57 &  46.4 &   7.4 &   2:36:44 \\
                  &  ✗ &  ✗ &  89.1 &  2.0 &   9:49:18 &  88.6 &  2.2 &   6:56:45 &  82.4 &  2.8 &   4:10:47 &  63.7 &   4.8 &   2:40:57 &  19.1 &   7.4 &   2:36:44 \\  \hline
                  &  ✓ &  ✓ &  89.4 &  2.0 &  12:06:03 &  86.8 &  2.7 &  10:22:33 &  69.1 &  4.0 &   6:00:33 &  44.3 &  13.0 &   3:27:58 &  31.1 &  13.0 &   3:35:15 \\
 LayerGreedyFS-fd &  ✓ &  ✗ &  89.3 &  2.0 &  12:06:03 &  86.1 &  2.7 &  10:22:33 &  69.1 &  4.0 &   6:00:33 &  44.3 &  13.0 &   3:27:58 &  21.9 &  13.0 &   3:35:15 \\
                  &  ✗ &  ✓ &  89.4 &  2.0 &  12:42:58 &  88.6 &  2.6 &   7:40:45 &  87.1 &  3.3 &   5:04:15 &  85.5 &   4.9 &   3:45:22 &  63.5 &  11.5 &   1:59:09 \\
                  &  ✗ &  ✗ &  89.4 &  2.0 &  12:42:58 &  88.0 &  2.6 &   7:40:45 &  81.5 &  3.3 &   5:04:15 &  76.6 &   4.9 &   3:45:22 &  15.5 &  11.5 &   1:59:09 \\  
\bottomrule
\end{tabular}
\end{adjustbox}
\end{center}
\end{table}

\end{document}